\newcommand{\cmark}{\ding{51}}%
\newcommand{\xmark}{\ding{55}}%
\DeclareMathOperator*{\argmin}{arg\,min}
\newtheorem{theorem}{Theorem}
\newtheorem{corollary}{Corollary}
\newtheorem{lemma}{Lemma}
\newtheorem{conjecture}{Conjecture}
\newtheorem{prop}{Proposition}
\newtheorem{assump}{Assumption}
\newtheorem{remark}{Remark}
\title{Prior Learning in Introspective VAEs}
\author{\name Ioannis Athanasiadis \email ioannis.athanasiadis@liu.se \\
        \addr Department of Electrical Engineering\\
         Linköping University
        \AND
        \name Fredrik Lindsten \email fredrik.lindsten@liu.se \\
        \addr Department of Computer and Information Science\\
         Linköping University
        \AND
        \name Michael Felsberg \email michael.felsberg@liu.se \\
        \addr Department of Electrical Engineering\\
         Linköping University}
\begin{document}

\maketitle
\begin{abstract}
Variational Autoencoders (VAEs) are a popular framework for unsupervised learning and data generation. A plethora of methods have been proposed focusing on improving VAEs, with the incorporation of adversarial objectives and the integration of prior learning mechanisms being prominent directions. When it comes to the former, an indicative instance is the recently introduced family of Introspective VAEs aiming at ensuring that a low likelihood is assigned to unrealistic samples. In this study, we focus on the Soft-IntroVAE (S-IntroVAE), one of only two members of the Introspective VAE family, the other being the original IntroVAE. We select S-IntroVAE for its state-of-the-art status and its training stability. In particular, we investigate the implication of incorporating a multimodal and trainable prior into this S-IntroVAE. Namely, we formulate the prior as a third player and show that when trained in cooperation with the decoder constitutes an effective way for prior learning, which shares the Nash Equilibrium with the vanilla S-IntroVAE. Furthermore, based on a modified formulation of the optimal ELBO in S-IntroVAE, we develop theoretically motivated regularizations, namely (i) adaptive variance clipping to stabilize training when learning the prior and (ii) responsibility regularization to discourage the formation of inactive prior modes. Finally, we perform a series of targeted experiments on a 2D density estimation benchmark and in an image generation setting comprised of the (F)-MNIST and CIFAR-10 datasets demonstrating the effect of prior learning in S-IntroVAE in generation and representation learning.
\end{abstract}
\section{Introduction}

Variational Autoencoders (VAEs) \citep{rezende2014stochastic, kingma2013auto} constitute a popular generative framework where variational inference is utilized to learn low-dimensional embeddings by modeling the density of the high-dimensional data. VAEs enjoy a plethora of applications, ranging from anomaly detection \citep{chauhan2022robust} to representation disentanglement \citep{higgins2017beta} and high-resolution image generation \citep{razavi2019generating}. 
 
From a representation learning perspective, VAEs produce structured latent spaces due to the regularization imposed by fitting a prior distribution. This contrasts with unregularized autoencoders, which often lack such structure \citep{shrivastava2024latent, oring2021autoencoder, leeb2020structure}. These structured representations are particularly valuable in domains like scientific discovery, where understanding the underlying data structure is critical \citep{wang2023scientific}.

\begin{figure}[!htbp]
\centering
    \includegraphics[width=1.0\textwidth]{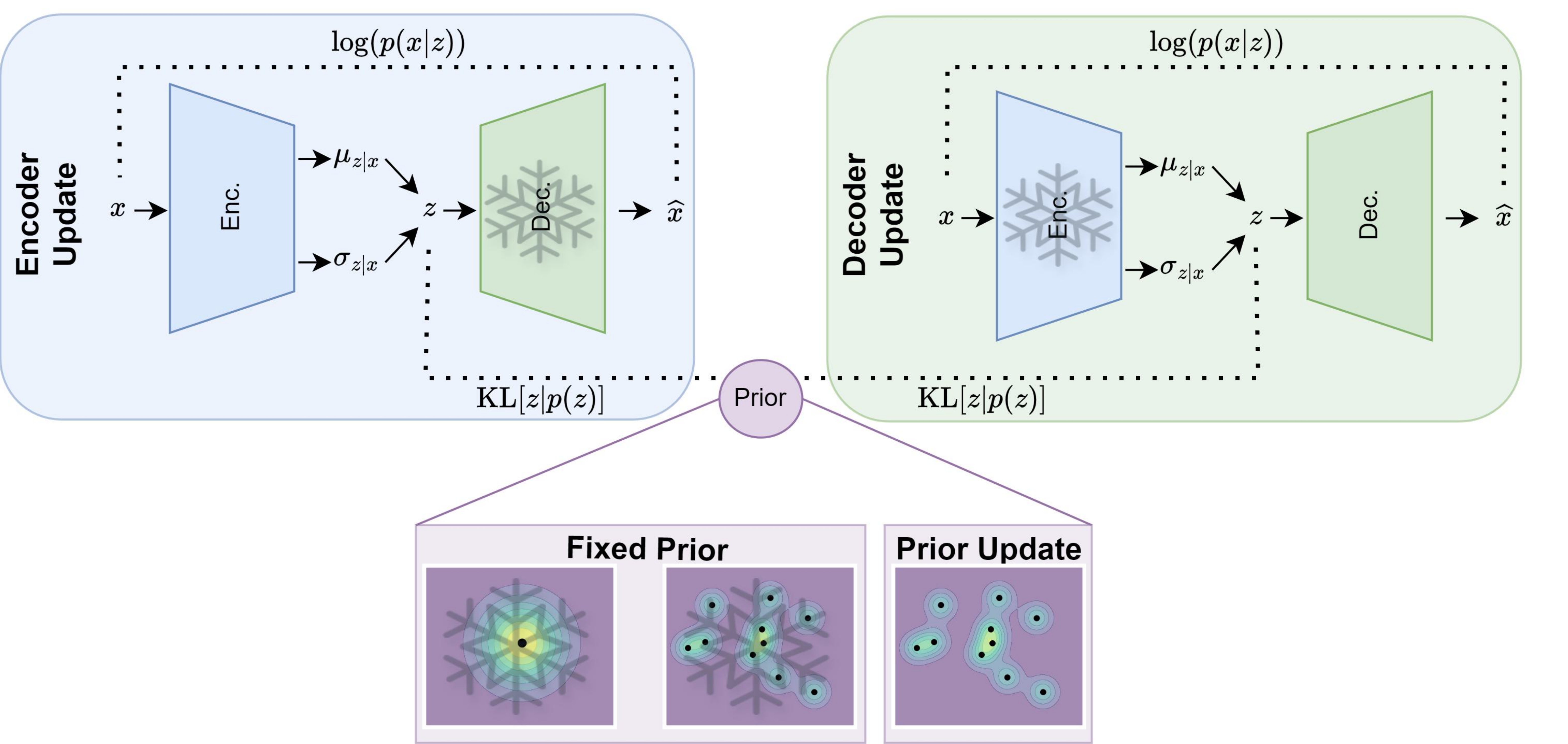}
    \caption{Prior player realizations within Introspective VAEs. The prior component can be regarded as a third player that can actively participate in the adversarial game along with the encoder and the decoder. The overlayed snowflake indicates that the component is not updated.}
    \label{fig:priors}
\end{figure}

Despite VAEs falling short of other popular generative paradigms, such as the Generative Adversarial Networks (GANs) \citep{goodfellow2020generative} and diffusion models \citep{ho2020denoising} in terms of generation quality, they are distinctive in the sense of simultaneously providing the amortized inference and generation modeling \citep{gatopoulos2021self}. Building upon that, combining VAEs with these frameworks has been a popular research direction aiming at retaining its merits while mitigating its limitations \citep{makhzani2015adversarial}. With diffusion models emerging as the state-of-the-art framework for image synthesis \citep{yang2022diffusion}, there are recent works on VAE/diffusion hybrids \citep{preechakul2022diffusion, rey2019diffusion}. Additionally, the VAE/GAN hybrid literature is also an established sub-field targeted at improving the poor training stability of GANs \citep{mescheder2018training} and generation diversity \citep{huang2018introvae} while addressing the blurry generation of VAEs. 

Another independent direction to improving VAEs is learning the prior as opposed to fitting into a fixed one, commonly the standard Gaussian. Trainable priors allow for identifying structure within the data, which is of high interest in the unsupervised and semi-supervised learning setups. Moreover, sufficiently expressive priors are needed for generating realistic data from complex distributions \citep{lavda2019improving, dilokthanakul2016deep}. Additionally, utilizing the structured capture in the latent space \citep{lavda2019improving} can benefit the generation performance as well as provide control over the semantics of the generated samples even in the absence of labels. 

Motivated by the prospect of combining the strength of two distinct and conceptually different directions for enhancing VAEs, we consider the problem of incorporating prior learning in the S-IntroVAE \citep{daniel2021soft} framework. Our intuition is that the appealing features of reducing over-regularization and holes as enabled by prior learning are not sufficient for realistic sample generation. On the other hand, although adversarially trained VAEs possess higher quality generation capabilities, they are still subject to the problem associated with assuming an over-simplistic prior. 

Based on these, we formulate the prior as an additional player in S-IntroVAE which participates in the adversarial training. More specifically, we extend the original analysis provided by \citet{daniel2021soft} and conclude that the prior--decoder cooperation scheme is a viable option for learning the prior while remaining faithful to the Nash Equilibrium (NE) of the vanilla S-IntroVAE. Our work is partly related to the CS-IntroVAE \citep{yu2023cs}, where a fixed three-component Mixture of Gaussian (MoG) prior was integrated into S-IntroVAE by replacing the Kullback--Leibler (KL) with the Cauchy–Schwarz divergence to allow for closed-form divergence computation. Notably, in our work, we follow the original variational analysis provided in \citet{daniel2021soft}, using the KL divergence and its theoretical properties, thereby investigating the effect of using a multimodal prior, including its trainable form, in isolation. Formally our contributions are:

\begin{itemize}[noitemsep, topsep=0pt]
  \item extending the original S-IntroVAE under the prior--decoder cooperation scheme.
  \item two theoretically motivated regularizations (i) adaptive variance clipping and (ii) responsibilities entropy, which enable robust prior learning.
  \item the experiments on a synthetic 2D density estimation and an image generation task demonstrating the effect of prior learning in S-IntroVAE in generation and representation learning.
\end{itemize}
\section{Related Work} 
\textbf{VAEs:} In VAEs \citep{rezende2014stochastic, kingma2013auto} an autoencoder-based structure is utilized, along with variational inference, to maximize a lower bound on the marginal log-likelihood of the data (the evidence lower bound, ELBO). More specifically, 
this resorts to simultaneously minimizing the sum of the empirical reconstruction error and the Kullback--Leibler (KL) divergence between the extracted latent representations and an assumed prior (typically the standard Gaussian distribution). A tighter ELBO was proposed by \citet{burda2015importance}, based on an importance weighting scheme, providing more flexibility during training by being more forgiving of inaccurate posterior estimates. Hierarchical variations of VAEs  \citep{vahdat2020nvae, sonderby2016ladder} rely on multiple stochastic layers where each of them is conditioned on the previous one, resulting in more efficient representation learning \citep{child2020very, zhao2017learning}.

\noindent \textbf{Prior Assumption in VAEs:}
Several studies suggest that assuming an over-simplistic prior can over-regularize the VAEs hindering their performance \citep{lin2020ladder, tomczak2018vae, hoffman2016elbo}. \citet{goyal2017nonparametric} argue that assuming a standard Gaussian prior can omit meaningful semantic information in the latent representation. Moreover an over-simplistic prior introduces holes in the prior negatively affecting the generation capabilities of VAEs \citep{aneja2021contrastive,rezende2018taming}. Towards addressing this shortcoming, \citet{tomczak2018vae} proposed the VampPrior where trainable pseudo-inputs are fed into the encoder providing the parameters of a MoG distribution to replace the standard one. \citet{connor2021variational} adopt a manifold-learning approach to define an MoG prior, which is better crafted for the latent space of the data. \citet{kalatzis2020variational} assume a Riemannian latent space where the prior is inferred from the data, replacing the standard Gaussian with a Brownian motion prior.

\noindent \textbf{Adversarial Objectives in VAEs:} In Adversarial Autoencoders (AEEs) \citep{makhzani2015adversarial} the latent space is regularized into following the assumed prior through a min-max game between the encoder and a discriminator module. The VAE/GAN hybrid was proposed by \citet{larsen2016autoencoding} where the similarity distance, for measuring the reconstruction error, is implicitly learned through an adversarial game in which the decoder network serves as both a VAE decoder and the generator of a GAN. In the seminal IntroVAE \citep{huang2018introvae}, the VAEs are framed as an adversarial game between the encoder and the decoder by considering the KL divergence as an energy function. The S-IntroVAE improves the training stability of IntroVAE, while also providing the theoretical analysis suggesting that Introspective VAEs constitute a variational instance of GANs. In CS-IntroVAE \citep{yu2023cs} the KL was replaced by Cauchy--Schwarz divergence while using a fixed three-component MoG in S-IntroVAE leading to improved generation performance.

\section{Background}
Our work builds upon the framework proposed by \citet{daniel2021soft}. To avoid confusion we adopt, whenever possible, identical notations as presented in their work. Let $x \sim p_{\text{data}}(x)$ be a data sample and $z$ its latent representation. A VAE aims at learning a parametric model $p_{d_{\theta}}(x,z)= p_{d_{\theta}}(x | z)   p_z(z)$ such that the marginal log-likelihood of the data is maximized. Due to the intractability of that likelihood \citep{kingma2013auto}, we resort to maximizing the ELBO. Assuming a prior $p_z$ on the latent space, an encoder $q_\phi$ providing the approximating posterior and a decoder $p_{d_{\theta}}$, parametrized by $\phi$ and $\theta$ respectively, we evaluate the ELBO, denoted as $W$, at point $x$ as:

\begin{equation}
    W(x;q_{\phi},p_{d_\theta}) = -\text{KL}[q_\phi(z | x)||p_z(z)] + \mathbb{E}_{z \sim q_\phi(z | x)}[\log p_{d_\theta}(x| z)] \leq \log p(x)
    \label{vanilla_ELBO}
\end{equation}

\noindent with $\text{KL}[\cdot || \cdot]$ denoting the KL divergence.
In practice, the encoder and the decoder are typically realized through neural networks with parameters $\phi$ and $\theta$, respectively. The $\beta$-VAE reformulates the ELBO by weighting the relative contribution of the KL term using the $\beta$ hyperparameter, that is:

\begin{equation}
W(x;q_{\phi},p_{d_\theta},\beta) = -\beta \cdot \text{KL}[q_\phi(z | x)||p_z(z)] + \mathbb{E}_{z \sim q_\phi(z| x)}[\log p_{d_\theta}(x|z)].
\label{vanilla_ELBO_beta}
\end{equation}

Note that, from an optimization perspective, the $\beta$-VAE ELBO formulation is equivalent to using independent weighting hyperparameters for each of its constituting terms, such that $W({x;q_{\phi},p_{d_{\theta}},\beta_{\text{rec}},\beta_{\text{KL}}}) =-\beta_{\text{KL}} \cdot \text{KL}[q_{\phi}({z | x})||p(z)] + \beta_{\text{rec}} \cdot \mathbb{E}_{z \sim q_{\phi}({z| x})}[\log p_{d_{\theta}}({x| z})]$. This ELBO formulation 
is convenient for tuning the S-IntroVAE and therefore is the adopted ELBO formulation. Additionally, when clear from the context, we omit expressing the ELBO with respect to the $\beta_{\text{KL}}$ and $\beta_{\text{rec}}$ to enhance clarity.

\subsection{Learning the optimal prior} \label{section:optimal_prior}
In light of the previously discussed implication of imposing a simple prior in VAE, the question arises: what is the optimal prior $p_z(z)$? In this aspect, an insightful reformulation of the empirical ELBO is provided by \citet{tomczak2022deep}:
\begin{equation}\label{CE_ELBO}
\begin{split}
\mathbb{E}_{x \sim p_{\text{data}}(x)}[W({x;q_{\phi},p_{d_{\theta}},\beta_{\text{rec}},\beta_{\text{KL}}})]&=  
\beta_{\text{rec}} \cdot \mathbb{E}_{x \sim p_{\text{data}}(x)}[\mathbb{E}_{z \sim q_{\phi}({z| x})}[\log p_{d_{\theta}}({x| z})] \\
&\qquad +\beta_{\text{KL}} \cdot \left ( \mathbb{H}[q_{\phi}({z | x})]] - \mathbb{CE}[q_{\phi}(z)||p_z(z)] \right),
\end{split}
\end{equation}

\noindent with $\mathbb{H}[\cdot]$ and $\mathbb{CE}[\cdot || \cdot]$ denoting the Shannon and the cross-entropies, respectively, and $q_{\phi}(z)=\mathbb{E}_{x \sim p_{\text{data}}(x)}[q_{\phi}({z | x})]$ is the aggregated posterior. The formulation above suggests that the optimal prior can be found as the maximizer of the ELBO, namely $p_z(z) = q_\phi(z)$, as this is when the negative cross entropy term is maximized (Gibbs' inequality). Towards this, utilizing a trainable MoG prior emerges as a relevant alternative to the standard Gaussian. A prior--encoder pairing was realized by \citet{tomczak2018vae}, termed as VampPrior, leading to better separation in latent space. Formally, the MoG and Vamp $M$-modal priors, denoted by $p_\lambda(z)$ and $p^q(z)$ respectively, are parametrized as:

\begin{equation}
   p_\lambda(z) = \sum_{i=1}^{M} w_i  \cdot  \mathcal{N}(z|\mu_i,\sigma_i^2 I) \quad \text{and} \quad
   p^q(z) = \sum_{i=1}^{M} w_i  \cdot  q_{\phi}(z | x_i),\label{weighted_MoG}
\end{equation}
\noindent with $\sum_{i=1}^{M}w_i=1$ and $w_i$ the contribution of each component, $\mu_i$ and $\sigma_i$ the means and variances of the MoG prior and $x_i$ pseudo-inputs for the VampPrior.

\subsection{S-IntroVAE}

correIn typical VAEs, the encoder and the decoder are updated simultaneously in a single backpropagation stage. Motivated by the observation that assigning a high likelihood for the real data does not necessarily imply assigning a low likelihood for the unlikely ones, the Introspective VAEs family \citep{daniel2021soft, huang2018introvae} formulates an adversarial game between the encoder and the decoder. In S-IntroVAE \citep{daniel2021soft}, the ELBO is regarded as an energy function, and on that basis, the encoder is induced to assign high energy to real and low energy to generated data. On the contrary, the decoder aims at generating data (i.e., reconstructed and generated samples) that resemble those of the real data distribution to fool the encoder. The above setup 
constitutes an adversarial game between the encoder and the decoder similar to the GAN \citep{goodfellow2020generative} paradigm.

For notational brevity in the derivations below we drop the dependence on the parameters $\theta$ and $\phi$ and simply write $d$ for the decoder and $q$ for the encoder, while we henceforth refer to $\mathbb{E}_{x \sim p(x)}[\cdot]$ simply as $\mathbb{E}_{p}[\cdot]$ when clear from the context. Formally, given the empirical $p_{\text{data}}(x)$ and $p_d(x)=\mathbb{E}_{p_z(z)}{[p_{d}({x| z})]}$ the generated data distribution, the encoder $q$ and decoder $d$ are alternately updated towards maximizing their respective objectives $L_q(q,d)$ and $L_d(q,d)$ defined as:
\begin{equation}
\label{eq:sIntroVAE_game_imposed}
\begin{split}
    L_q(q,d) &= \mathbb{E}_{p_{\text{data}}}\left[ W(x;q,d)\right]  - \mathbb{E}_{p_d}[ \frac{1}{\alpha}\cdot\exp(\alpha W(x;q,d))], \\
    L_d(q,d) &= \mathbb{E}_{p_{\text{data}}}\left[ W(x;q,d)\right] + \gamma \cdot \mathbb{E}_{p_d}\left[ W(x;q,d)\right],
\end{split}
\end{equation}
where $\alpha \geq 1$ and $\gamma \geq 0$ are hyperparameters. \citet{daniel2021soft} show that there is a NE for this two-player game. Specifically, define $d^*$ as:
\begin{equation} \label{eq:optimal_d_imposed}
d^* \in \argmin_{d} \left\{\text{KL}[p_\text{data}(x) || p_{d}(x)] + \gamma \cdot \mathbb{H}[p_{d}(x)] \right \}.
\end{equation}

\begin{assump}[Modified - \citep{daniel2021soft}]\label{assump_1}
For all $x$ such that $p_{data}(x) \geq 0$ we have that $[p_{d^*}(x)]^{\alpha+1} \leq p_{data}(x)$.
\end{assump}

\begin{theorem}[\citep{daniel2021soft}]\label{theorem_1}
Under the Assumption~\ref{assump_1}, the pair of optimal $q^*=p_{d^*}(z|x)$ and $d^*$ as defined in \eqref{eq:optimal_d_imposed} constitutes a NE of the game \eqref{eq:sIntroVAE_game_imposed}. 
\end{theorem}

\begin{remark}\label{remark_1}
The Assumption~\ref{assump_1} is a modified version of the one used by \citet{daniel2021soft} and essentially suggests that $p_{d^*}(x)$ has to be sufficiently enclosed by the true data distribution. This modification corrects a seemingly minor oversight that, however, has important implications for the interpretation of the theorem. In particular, the proof of Theorem~\ref{theorem_1} requires expressing the condition under which the optimal $q^*$, which maximizes $L_q$, satisfies $\text{KL}[q^*(z|x)||p_d(z|x)]=0$. Importantly, if a sample $x$ lies outside of the support of $p_\text{data}(x)$ but within the support of $p_d(x)$, then the optimal $q^*$ satisfies $\text{KL}[q^*(z|x) || p_d(z|x)]=\infty$. The original assumption in \citep{daniel2021soft} does not separate this case, whereas the modified assumption properly accounts for it. See \ref{supp_ne_sg} for a detailed explanation of this matter.
\end{remark}

We refer the readers to the original work of \citet{daniel2021soft} for the proof that for every $p_\text{data}$ there always exists $\gamma \geq 0$ such that Assumption~\ref{assump_1} holds for $p_{d^*}$. Theorem~\ref{theorem_1} suggests that, at convergence, the S-IntroVAE formulation leads to optimal inference capabilities (i.e., the approximated posterior equals the true one) while the generated data distribution converges to an entropy-regularized version of the true data distribution.

\section{Prior Learning in S-IntroVAE}

\subsection{Theoretical analysis}\label{prior_learning}

In this section, we extend S-IntroVAE by introducing a third player dedicated to modeling the prior. Our formulation draws inspiration from DeLiGAN \citep{gurumurthy2017deligan} where the noise in GANs was parametrized by a learnable MoG. In contrast to DeLiGAN, in our setting the prior (which is similar to the noise in GANs) has a dual role as (i) the source of the generated data distribution and (ii) the target based on which the adversarial training is performed. We theoretically analyze the implication of training the prior within the S-IntroVAE and conclude that learning it in cooperation with the decoder constitutes a viable option for prior learning. 

In our three-player setup the encoder $q$, the decoder $d$, and the prior $\lambda$ are all flexible. We denote the generated data distribution as $p_d^\lambda(x)=\mathbb{E}_{p_\lambda(z)}{[p_{d}({x\mid z})]}$ to highlight its dependence on both the decoder $d$ and the prior $\lambda$ players. In that case, the adversarial game of \eqref{eq:sIntroVAE_game_imposed} becomes: 
\begin{equation}
\label{eq:sIntroVAE_game_learable}
\begin{split}
    L_q(\lambda,q,d) &= \mathbb{E}_{p_{\text{data}}}\left[ W(x;\lambda,q,d)\right] - \mathbb{E}_{p_d^\lambda}\left[ \frac{1}{\alpha}\cdot\exp(\alpha \cdot W(x;\lambda,q,d))\right], \\
    L_d(\lambda,q,d) &= \mathbb{E}_{p_{\text{data}}}\left[ W(x;\lambda,q,d)\right] + \gamma \cdot \mathbb{E}_{p_d^\lambda}\left[ W(x;\lambda,q,d)\right].
\end{split}
\end{equation}
The encoder is trained to maximize the $L_q$ whereas the prior and the decoder maximize the $L_d$ objective (i.e., prior--decoder cooperation). Below we show that prior--decoder cooperation is a viable option for prior learning which retains
NE from the original S-IntroVAE formulation.

We modify \eqref{eq:optimal_d_imposed} to support our learnable prior setup. Let $\Lambda$ denote the set of possible parameterizations of the prior and $\lambda \in \Lambda$, we define:

\begin{equation} \label{eq:optimal_merged}
(\lambda^*,d^*) \in \argmin_{\lambda,d} \left\{\text{KL}[p_\text{data}(x) || p_{d}^\lambda(x)] + \gamma \cdot \mathbb{H}[p_{{d}}^\lambda(x)]\right \} .
\end{equation}

Let us also extend Assumption~\ref{assump_1} to account for the prior being learnable.

\begin{assump}\label{assump_2}
For all $x$ such that $p_{data}(x) \geq 0$ we have that $[p_{d^*}^{\lambda^*}(x)]^{\alpha+1} \leq p_{{data}}(x)$.
\end{assump}

\begin{corollary}\label{corollary_2}
Under the Assumption~\ref{assump_2}, when training the prior player $\lambda$ in cooperation with the decoder player $d$ then the triplet $q^*=p_{d^*}^{\lambda^*}(z|x)$, $\lambda^*$ and $d^*$ as defined in \eqref{eq:optimal_merged}  constitutes a NE of the game \eqref{eq:sIntroVAE_game_learable}.
\end{corollary}

\begin{proof}[Sketch of proof]
The proof of Corollary~\ref{corollary_2} follows from Theorem~\ref{theorem_1}, under the modified Assumption~\ref{assump_1} (see Remark~\ref{remark_1}).
Analogous to Theorem~\ref{theorem_1} (\citep{daniel2021soft}), proving Corollary~\ref{corollary_2} entails first showing that the optimal encoder converges to the true posterior under the Assumption~\ref{assump_2}. Then, given that the encoder has converged, the prior and the decoder as defined in \eqref{eq:optimal_merged} maximize the $L_d$ objective as defined in \eqref{eq:sIntroVAE_game_learable}. This concludes the proof of Corollary~\ref{corollary_2}. For the complete proof, we refer readers to \ref{supp_ne_pd_cooperation}.
\end{proof}

Our three-player formulation is similar in nature to S-IntroVAE with the encoder converging to the true posterior while the generated data distribution converges to an entropy-regularized version of the real data distribution. The key difference, however, lies in the fact that our formulation allows for a trainable prior, unlocking the merits of prior learning such as mitigating the prior hole problem, unsupervised clustering \citep{dilokthanakul2016deep}, explainability \citep{klushyn2019learning}, and more controllable generation \citep{lavda2019improving}. More specifically, for fixed encoder $q$ and decoder $d$, given a batch of real and generated data respectively, the prior update seeks (i) to support a linear combination (controlled by the $\gamma$ hyperparameter) of the empirical real and fake aggregated posterior and (ii) be idempotent under the projection by  $d$.

\subsubsection{Optimal ELBO in the assumption-free setting}\label{optimal_elbo}

Corollary~\ref{corollary_2} requires Assumption~\ref{assump_2} to hold, however, in practice this might not be the case, especially early in training. For instance, having a $p_d^\lambda(x)$ generating (i) out-of-distribution data or (ii) realistic samples at a disproportionately higher rate compared to the real distribution, are two obvious cases where such an assumption is violated. Analyzing the behavior of the encoder in these cases provides an intuitive connection to regularly trained VAEs and motivates some of our implementation choices. Let $\mathbb{X} = \{x| x \in p_\text{data}(x) > 0 \cup p_d^\lambda(x) > 0 \}$ (i.e., the set of all possible samples in the union of real and generated data supports), we define the ELBO $W(x;\lambda, q^*,d)$ as: \\
\begin{equation} \label{eq:optimal_elbo}
W(x;\lambda,q^*,d) = \begin{cases}
- \infty, \; x \in \{ x \in \mathbb{X} \mid p_{\text{data}}(x) = 0\} \\\\
\frac{1}{\alpha} \cdot \log \frac{p_\text{data}(x)}{ p_d^\lambda(x)}, \; x \in \{ x \in \mathbb{X} \mid p_{\text{data}}(x) > 0  \cap [p_d^\lambda(x)]^{\alpha+1} >  p_\text{data}(x)\} \ \\\\
\log p_d^\lambda(x) , \;  x \in \{ x \in \mathbb{X} \mid p_{\text{data}}(x) > 0  \cap   [p_d^\lambda(x)]^{\alpha+1}\leq p_\text{data}(x)\} \}  
\end{cases}
\end{equation} 
\\
\begin{prop}\label{proposition_1}
Given a fixed generated data distribution $p_d^\lambda(x)$ the $q^*$ maximizing $L_q(\lambda,d,q)$ in Eq.~\ref{eq:sIntroVAE_game_learable} is such that the ELBO $W(x;\lambda, q^*,d)$ satisfies Eq.~\ref{eq:optimal_elbo}.
\end{prop}

The proposition (for the proof see \ref{supp_optimal_elbo}) above suggests that under the Assumption~\ref{assump_2} the encoder in S-IntroVAE behaves similar to the one in regular VAEs. Alternatively, as a consequence of the repelling objective acting on the generated data, the encoder in S-IntroVAE diverges from its VAE-optimal state. This divergence depends on the sample-wise mismatch between $p_d^\lambda(x)$ and $p_\text{data}(x)$. Interestingly, it also appears that the optimal ELBO with respect to the encoder is a continuous function of the $p_\text{data}(x)$ measure.

\subsubsection{Practical implications in the assumption-free setting}\label{behavior_practice}

Let us now investigate how the theoretical claims suggested by Proposition~\ref{proposition_1} are realized in practice. For this purpose, the image generation setting was deemed an appropriate testbed due to being easy to interpret while at the same time sufficiently complex allowing us to draw generalizable conclusions. Note that proposition only concerns the optimal encoder given fixed real and generated distributions. Based on that, we employ a well-trained S-IntroVAE and overfit the encoder network while keeping the prior and decoder fixed. In this regard, having the prior and the decoder fixed translates to having a fixed generated data distribution.\\ 

\begin{figure}[!h]
\centering
\includegraphics[width=1\textwidth]{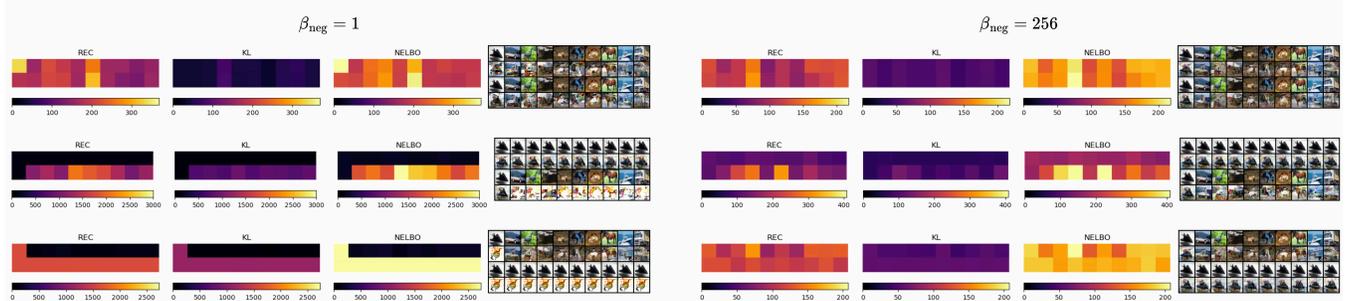}
\caption{Overfitting the encoder given a fixed generated data distribution across three different configurations (rows). The experiment was conducted both under the theoretically faithful hyperparameter setting ($\beta_\text{neg}=1$ - left) and the one used in practice ($\beta_\text{neg}=256$ - right). The first line in the REC, KL and NELBO plots refers to the real data distribution whereas the second one refers to the generated data distribution. The image data figures, in each configuration, correspond to the real data distribution, the reconstructed real data distribution, the generated data distribution and the reconstructed generated data distribution from top to bottom. The figures above were generated by utilizing a trained S-IntroVAE trained under a fixed 10-modal MoG prior.}
\label{fig:testing_proposition_main} 
\end{figure}

\noindent As outlined by the proposition, the encoder treats each sample $x$ (i.e., image in this context) differently depending on the likelihood ratio between $p_\text{data}(x)$ and $p_d^\lambda(x)$. Unfortunately, to this end, we can not make use of the Proposition~\ref{proposition_1} as we do not have access to the analytical densities of either of these distributions. To overcome the aforementioned challenge we use a subset of the real data distribution to construct synthetic real and generated data distributions, denoted as $p^\text{syn}_\text{data}(x)$ and ${p_d^\lambda}^\text{syn}(x)$ respectively. Additionally, it was also necessary, to use a batch size of $1$ to avoid leaking information between samples inside and outside of the support of real data distribution due to the batch normalization layers. Based on these synthetic distributions, we can use them as proxies for testing the proposition. Specifically, we experiment with three distinct configurations with different properties: (i) $p^\text{syn}_\text{data}(x) =  {p_d^\lambda}^\text{syn}(x)$ where both distributions consist of multiple different samples (ii) $p^\text{syn}_\text{data}(x)$ consisting of a single sample $x_0$, whereas ${p_d^\lambda}^\text{syn}(x)$ consists of multiple samples, including the $x_0$ of the $p^\text{syn}_\text{data}(x)$ and (iii) the reversed (ii) where $p^\text{syn}_\text{data}(x)$ and ${p_d^\lambda}^\text{syn}(x)$ distributions are swapped. We used $10$ samples, one for each class, to construct the synthetic distributions.\\

\noindent In its theoretical faithful realization the results for (i), (ii) and (iii) are displayed on the left side of Figure \ref{fig:testing_proposition_main} under $\beta_\text{neg}=1$. These closely align with what has been suggested by the proposition where when the likelihood of generating a sample is sufficiently enclosed by the likelihood of observing that sample in the real data distribution then the encoder pushes the ELBO towards VAE-optimal levels. On the other hand, in cases where there is a significant likelihood mismatch the encoder can afford to either push the ELBO to its optimal level or diverge from that depending on whether the mismatch appears with respect to the real or the fake data distribution. For instance, when looking at configuration (ii) (2$^\text{nd}$ row) the encoder minimizes the NELBO (negative ELBO) for the image that is 10 times more likely under the real distribution compared to the fake distribution, whereas the NELBO increases for the samples outside the support of the real data distribution.

\noindent In practice, when computing the loss corresponding to maximizing $L_q$ objective, the real and fake ELBOs use different weights for the reconstruction and the KL losses, in particular for CIFAR-10 the $\beta_\text{neg}$, corresponding to the $\beta_\text{KL}$ for the fake ELBO, was set to 256 while the remaining $\beta's$ were set to 1. Using $\beta_\text{neg}=256$ essentially prompts the encoder to focus more on the KL compared to the reconstruction loss when repelling the fake data. However, even in this case, where the hyperparameter configuration diverges from the one theoretically accounted for, we observe that similar patterns emerge.

\subsection{Implementation}
In this section, we outline the implementation choices as well as the motivation behind them enabling prior learning in S-IntroVAE in a prior--decoder cooperation manner.  Pseudo-code for the prior learning in S-IntroVAE is provided in Algorithm~\ref{alg:prior_soft_introvae}.

\subsubsection{Prior as source and target} 
In the prior--decoder cooperation setting the prior player $\lambda$ maximizes $L_d(\lambda,q,d)$. In practice, given a real $x_\text{real}$ and $z_s \sim p_\lambda(z)$, the prior minimizes the loss $L_P(x_\text{real},z_s)$ given by: 
\begin{equation}\label{eq:loss_p}
\begin{split}
    L_P(x_\text{real},z_s)
    = &{\beta_{\text{rec}}  \cdot L_\text{rec}(x_\text{real})} + \beta_{\text{KL}}  \cdot L_\text{KL}(x_\text{real}) + \gamma \cdot \big( \gamma_r \cdot \beta_{\text{rec}}  \cdot L_\text{rec}( \texttt{sg}(D(z_s))) \\
    &+ \beta_{\text{KL}} \cdot L_\text{KL}(D(z_s)) \big), \
\end{split}
\end{equation}
\noindent where $D(z_s)$ is the fake sample generated from decoding the latent $z_s$, while $L_\text{rec}$ and $L_\text{KL}$ the reconstruction and the KL losses respectively. We remained consistent with the S-IntroVAE, where the reconstruction of fake data was scaled by $\gamma_r = 10^{-8}$, and the stop-gradient (\texttt{sg}) operator was applied when generating a fake sample before computing its reconstruction loss. Additionally, we observe that the reconstruction loss for the real sample is not affected by the prior. In light of these, the prior player is trained both as a target for the real and fake posterior and as a source of fake samples. Based on that, a subtle issue arises when minimizing the $L_\text{KL}(D(z_s))$ term, since the prior can minimize it by either becoming a good source for generating realistic data or a good target that supports the posterior of generated data of low quality. The latter case is particularly problematic during the early stages of training, when the generated data lie outside the support of the real data, causing the encoder to assign a suboptimal posterior, as described in Proposition~\ref{proposition_1}. To address this, we follow \citet{shocher2023idempotent} and apply the \texttt{sg} operator to the prior as the target while allowing gradient flow for the prior as the source when computing $L_\text{KL}$ for the fake samples. We henceforth refer to this modified $L_\text{KL}$ as $L_\text{KL}^\texttt{sg}$ which replaces the original when computing the KL loss of the $D(z_s)$ in \eqref{eq:loss_p}.

\begin{figure}[!t]
    \centering
    \subfloat[\centering \label{fig:exploding_variance}]{{\includegraphics[width=0.47\textwidth]{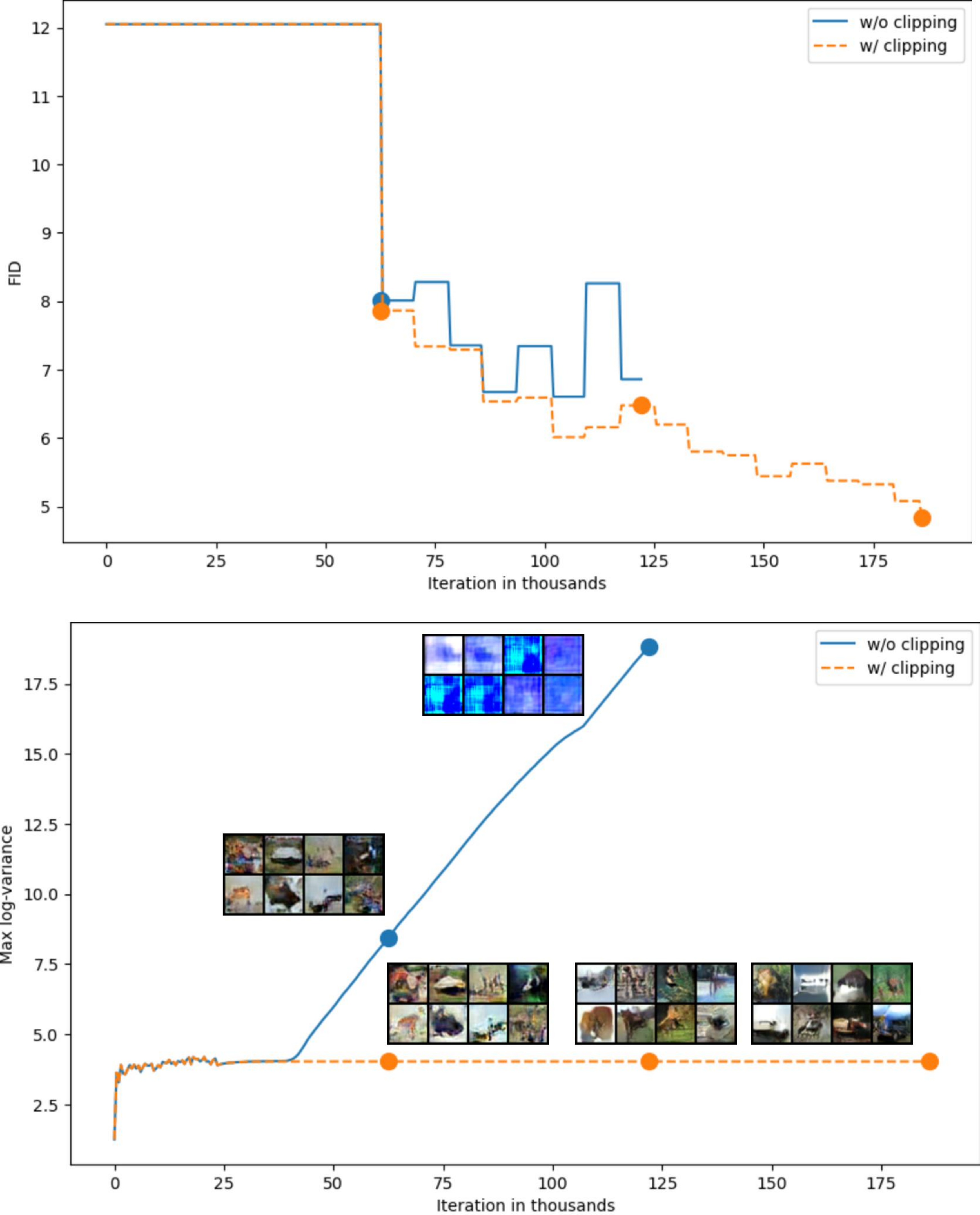} }}%
    \qquad
    \subfloat[\centering \label{fig:vanishing_assignment}]{{\includegraphics[width=0.47\textwidth]{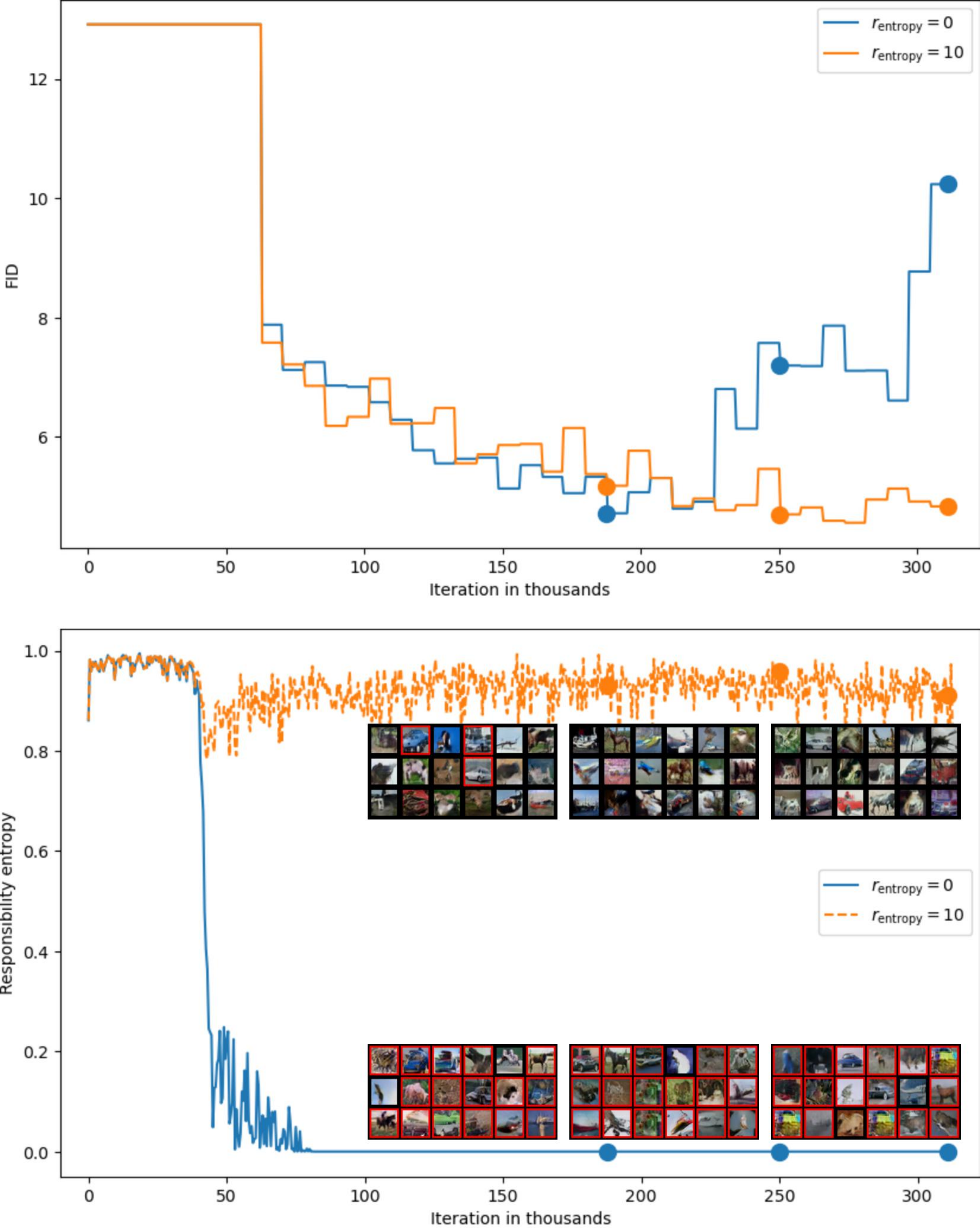} }}%
    \caption{Regularization for robust prior learning in S-IntroVAE. (a) Clipping the prior log-variance is crucial for maintaining the stability of S-IntroVAE under the prior--decoder cooperation. Two models were trained on CIFAR-10 for 120 epochs or until crashing. Without clipping, the log-variance tends to explode, leading to an increase in the FID metric and ultimately causing the model to generate indiscernible patterns before crashing. A unimodal trainable prior was used for both models differing only by the log-variance clipping. (b) Emerging mode collapse due to unconstrained generation. We trained two models on CIFAR-10 for 200 epochs under a 10-modal MoG (log-variance clipped), differing only in the amount of entropy regularization. The red border indicates samples generated by inactive modes (i.e., average responsibility smaller than ${10^{-2}}$). Note that not regularizing the responsibility entropy quickly degenerates into a unimodal prior setting where a single mode is responsible for supporting the aggregated posterior. The unimodal collapse eventually leads to mode collapse and an increase in FID due to the unconstrained generation originating from the inactive modes. On the contrary, regularizing the responsibility entropy maintains more uniform responsibility allocation among the modes and addresses the mode collapse issue.}
\end{figure}

\subsubsection{Adaptive variance soft-clipping}

Although theoretically sound, the prior--decoder cooperation scheme led to instabilities. In particular when parameterizing the prior as a MoG prior \eqref{weighted_MoG} these instabilities manifested as exploding prior log-variances (see Fig.~\ref{fig:exploding_variance}) that became evident as the real data distribution became more complex (e.g. CIFAR-10 images vs 2D data). We attribute the aforementioned behavior to the interplay of three aspects: (i) the encoder pushing to suboptimal \mbox{ELBOs} (i.e., suboptimal reconstruction and KL losses) for those samples whose likelihood in fake data distribution is not sufficiently enclosed by the real one (see Proposition~\ref{proposition_1} and its practical implication in \ref{behavior_practice}), 
(ii) hyperparameter-tuning caveats where good results generally required setting the $\beta_\text{KL}$ of the fake ELBO (termed as  $\beta_\text{neg}$) to be an order of magnitude of the latent dimension \citep{daniel2021soft} and (iii) the behavior of the target distribution in KL minimization where the target variance increases when the source posterior is unlikely under the target distribution (see \ref{sec:exploding_variance}). Notably, (i) and (ii) promote the posterior of the real samples that overlap with insufficiently enclosed fake ones to diverge from the prior whereas (iii) increases the variance of the prior in an attempt to support a diverging aggregated posterior, which can lead to exploding log-variance in severe cases of (i) -- corresponding to the second row of \eqref{eq:optimal_elbo}. Eliminating (i) or (ii) requires extensive hyper-parameter tuning for each $p_\text{data}$, assuming that such a hyper-parameter set even exists.  Instead, we opted to address the issue of exploding log-variances by tackling (iii). Namely, we employed an adapting soft-clipping scheme inspired by \citet{chua2018deep, chang2023latent} where instabilities were also observed when learning log-variances. Concretely, for each latent dimension \( j \), the prior log-variance is clipped to the range \( [a_j, b_j] \) using the function \( f_c \), defined as:

\begin{equation}\label{eq:soft_clipping}
    f_c(x) = x + \frac{1}{\beta_j} \cdot \log \frac{1+\exp(\beta_j 
                                     \cdot (a_j - x))}{1+\exp(\beta_j \cdot (x - b_j))},
\end{equation}
\noindent with $\beta_j = \frac{K}{b_j-a_j}$ and $K$ a positive hyperparameter. The formulation above allows for controlling the steepness of clipping in a unified way using a single hyperparameter $K$ for all latent dimensions. We elaborate further on this choice in \ref{softclipping}. For our all our experiments we used $K=10$.

\subsubsection{Responsibilities regularization} 

Due to the nature of the $L_q$ objective inducing the encoder to act as a discriminator between real and fake data, it is evident that the posterior can diverge arbitrarily from the prior (see Proposition~\ref{proposition_1}).

In practice, we observed that such behavior can cause certain prior components, of a MoG prior, to become more dominant than others in terms of the responsibilities of prior modes to posterior, leading to the formation of inactive prior modes and vanishing gradients (see \ref{sec:respnsibilities_vanishing_gradient}). Consequently, as the aggregated posterior is only supported by a portion of the prior modes, there are not multiple real samples competing for the same region of the latent space leading to unconstrainted generation when sampling for those inactive prior modes. Note that the issue of inactive prior modes formation is applicable both when having a trainable (prior--decoder cooperation) and fixed MoG prior. To alleviate this we employ an entropy regularization on the responsibilities of each prior component discouraging inactive modes from forming. Concretely, the responsibility $c_i$ corresponding to the $i^\text{th}$ mode is computed as:

\begin{equation}
 \label{eq:responsibility}
\begin{split} 
c_i =  \mathbb{E}_{x \sim p_\text{data}(x)} \mathbb{E}_{z \sim q_{\phi}({z| x})} \left [ \frac{w_i \cdot \mathcal{N}(z | \mu_i,\sigma_i^2 I)}{
\sum_{l=1}^{M} w_l \cdot \mathcal{N}(z | \mu_l,\sigma_l^2 I)} \right].
\end{split}
\end{equation}

\noindent Finally, we define the responsibility vector as:
\begin{equation}
 \label{eq:responsibility_vector}
\begin{split} 
C = [c_1, c_2, \ldots, c_M],
\end{split}
\end{equation}

\noindent  and compute its normalized entropy \footnote{The entropy is normalized by dividing it by the maximum entropy given $M$ possible assignments, where $M$ is the number of prior components in the MoG.} $\mathbb{H}_n(C)$. The $\mathbb{H}_n(C)$ weighted by a non-negative hyperparameter $r_\text{entropy}$, is added to the $L_q$ objective. Notably, our responsibility regularization is closely related to the mean entropy maximization regularizer used by \citet{assran2022masked,joulin2012convex} regularizing the mode assignments instead of cluster assignments. Ultimately, encouraging uniform responsibilities accounts for the vanishing gradient issue. We provide the derivation for the prior mode responsibilities in \ref{sec:respnsibilities}. Fig \ref{fig:vanishing_assignment} illustrates a representative case of responsibility entropy development when left unregularized. Note that the responsibility regularization is relevant only for multi-modal priors and therefore not needed in the original S-IntroVAE under the standard Gaussian prior.

\newpage

\begin{algorithm}[!htbp]
\footnotesize 
    \caption{Prior Learning in S-IntroVAE \citep{daniel2021soft}. The \textcolor{red}{red-highlighted} segments indicate the parts that differ from the standard Gaussian S-IntroVAE. The $L_\text{rec}$ and the $L_\text{KL}$ refer to the reconstruction loss and the KL divergence between the posterior and the prior target respectively, whereas the $L_\text{KL}^\texttt{sg}$ is a modified KL divergence that applies the stop-gradient \texttt{sg} operator on the prior as target.}
    \label{alg:prior_soft_introvae}
    \textbf{Require:} $\beta_{rec}, \beta_{\text{KL}}, \beta_{neg}, \gamma, \eta$, \textcolor{red}{$r_\text{entropy, K}$} \\
    \textbf{1: } $\phi_E, \textcolor{red}{p_\Lambda}, \theta_D  \gets$ Initialize network parameters \\
    \textbf{2: } $s \gets 1/\text{input dim}$ \hfill $\triangleright$ Scaling constant \\
    \textbf{3: } $\gamma_r \gets 10^{-8}$ \hfill $\triangleright$ Scaling parameter for fake data reconstruction \\
    \textbf{4: } $a, b \gets$ Clipping ranged found after the VAE training stage $ \hfill \triangleright$  A VAE training stage precedes adversarial training\\

    \textbf{5: } \textbf{while} not converged \textbf{do} \\
    \textbf{6: } \hspace{4mm} $x_\text{real} \gets$ Random mini-batch from dataset \\
    \textbf{7: } \hspace{4mm} \textcolor{red}{$z_{\mu}, z_{\text{logvar}}, w \gets$ Get MoG prior parameters from $p_\Lambda$} \\
    \textbf{8: } \hspace{4mm} $\textcolor{red}{z^C_{\text{logvar}} \gets \textsc{ClipLogvariance}(z_{\text{logvar}}, a, b, K)} $ \\
    \textbf{9: } \hspace{4mm} $z_s \gets  \textcolor{red}{\textsc{SampleFromMoG}(z_{\mu}, z^C_{\text{logvar}}, w)}$ \\
    
    \textbf{10:} \hspace{4mm} \textsc{UpdateEncoder}($x_\text{real}, z_s, \phi_E$, $\beta_{rec}, \beta_{\text{KL}}, \beta_{neg},$ \textcolor{red}{$r_\text{entropy}$}, $\eta$) \\
    \textbf{11:} \hspace{4mm} \textsc{UpdatePriorAndDecoder}($x_\text{real}, z_s, \textcolor{red}{p_\Lambda}$, $\theta_D, \beta_{rec}, \beta_{\text{KL}}, \gamma, \gamma_r, \eta$) \\  
    \textbf{12:} \textbf{end while} \\

    \textbf{13:} \textbf{procedure} \textsc{UpdateEncoder}($x_\text{real}, z_s, \phi_E$, $\beta_{rec}, \beta_{\text{KL}}, \beta_{neg},$ \textcolor{red}{$r_\text{entropy}$}, $\eta$) \\
    \textbf{14:} \hspace{6mm} ${W} \gets  - s \cdot (\beta_\text{rec} \cdot L_\text{rec}(x_\text{real}) + \beta_\text{KL} \cdot L_\text{KL}(x_\text{real})$) \\
    \textbf{15:} \hspace{6mm} ${W}_f \gets - s \cdot (\beta_\text{rec} \cdot L_\text{rec}(D(z_s)) + \beta_\text{neg} \cdot L_\text{KL}(D(z_s))$) \\
    \textbf{16:} \hspace{6mm} $\exp{W}_f \gets 0.5 \cdot \exp(2 \cdot {W}_f)$ \\
    \textbf{17:} \hspace{6mm} \textcolor{red}{$C = \textsc{ComputeResponsibilities}(x_\text{real}) $} \\
    \textbf{18:} \hspace{6mm} \textcolor{red}{$ \text{Entropy}_C = \textsc{NormalizedEntropy}(C)$} \\
    \textbf{19:} \hspace{6mm} $L_E \gets {W} - \exp{W}_f \textcolor{red}{+ s \cdot r_\text{entropy} \cdot \text{Entropy}_C}$ \\
    \textbf{20:} \hspace{6mm} $\phi_E \gets \phi_E + \eta \nabla_{\phi_E} (L_E)$ \hfill $\triangleright$ Adam update \\
    \textbf{21:} \textbf{end procedure} \\
    
    \textbf{22:} \textbf{procedure} \textsc{UpdatePriorAndDecoder}($x_\text{real}, z_s, \textcolor{red}{p_\Lambda}$, $\theta_D, \beta_{rec}, \beta_{\text{KL}}, \gamma, \gamma_r, \eta$) \\
    \textbf{23:} \hspace{6mm} ${W} \gets - s \cdot (\beta_\text{rec} \cdot L_\text{rec}(x_\text{real})$ \textcolor{red}{+ $\beta_\text{KL} \cdot L_\text{KL}(x_\text{real})$}) 
    \\
    \textbf{24:} \hspace{6mm} ${W}_f \gets - s \cdot (\gamma_r \cdot \beta_\text{rec} \cdot L_\text{rec}(\texttt{sg}(D(z_s))) + \beta_\text{KL} \cdot L_\text{KL}^\texttt{sg}(D(z_s))$) \\
    \textbf{25:} \hspace{6mm} $L_{PD} \gets W + \gamma \cdot W_f$ \\
    \textbf{26:} \hspace{6mm} $\theta_D \gets \theta_D + \eta \cdot \nabla_{\theta_D} (L_{PD})$ \hfill $\triangleright$ Adam update \\
    \textbf{27:} \hspace{6mm}  \textcolor{red}{$p_\Lambda \gets p_\Lambda + \eta \cdot \nabla_{p_\Lambda} (L_{PD})$} \hfill $\triangleright$ Adam update \\
    \textbf{28:} \textbf{end procedure} \\

    \textbf{29:} \textbf{function} \textsc{ClipLogvariance}($z_{\text{logvar}}, a, b, K$)\\
    \textbf{30:} \hspace{4mm} $z^C_{\text{logvar}} \gets$ Clipping the log-variance \hfill $\triangleright$ Eq. 11  \\
    \textbf{31:} \hspace{4mm} \textbf{return} $z^C_{\text{logvar}}$ \\
    \textbf{32:} \textbf{end function} \\

    \textbf{33:} \textbf{function} \textsc{SampleFromMoG}($z_{\mu}$, $z_{\text{logvar}}$, $w$) \\
    \textbf{34:} \hspace{4mm} $i \gets$ Samples a mode index from $\text{Categorical}(w)$ \\
    \textbf{35:} \hspace{4mm} $z_{\text{std}}^{(i)} \gets \exp\left(0.5 \cdot z_{\text{logvar}}^{(i)}\right)$ \\
    \textbf{36:} \hspace{4mm} $z_s \gets$ Samples from $\mathcal{N}(z_{\mu}^{(i)}, z_{\text{std}}^{(i)})$ \\
    \textbf{37:} \hspace{4mm} \textbf{return} $z_s$ \\
    \textbf{38:} \textbf{end function} \\

    \textbf{39:} \textbf{function} \textsc{ComputeResponsibilities}($x$) \\
    \textbf{40:} \hspace{4mm} Compute the expected responsibilities for each mixture component \hfill $\triangleright$  Eq. 12 \\
    \textbf{41:} \hspace{4mm} Construct the responsibility vector $C$ \hfill $\triangleright$ Eq. 13 \\
    \textbf{42:} \hspace{4mm} \textbf{return} $C$ \\
    \textbf{43:} \textbf{end function} \\

    \textbf{44:} \textbf{function} \textsc{NormalizedEntropy}($C$) \\
    \textbf{45:} \hspace{4mm} Compute the entropy of responsibility vector C\\
    \textbf{46:} \hspace{4mm} Normalize the entropy \hfill $\triangleright$ Footnote~1\\
    \textbf{47:} \hspace{4mm} \textbf{return} $\text{Entropy}_C$ \\
    \textbf{48:} \textbf{end function} \\
    
\end{algorithm}
\section{Experiments}

In this section we investigate the impact of learning the prior in S-IntroVAE. Our testbed consists of a 2D density estimation benchmark alongside three image datasets of varying complexity. To crystallize the effect of prior learning we compare multiple key prior configurations with varying levels of flexibility. Namely, we considered the standard Gaussian, the fixed multi-modal MoG and the trainable multi-modal MoG priors while also ablate over learnable and uniform mixture contributions, when relevant. Concretely the prior configurations considered in our experiments are:

\begin{itemize}
    \item \textbf{Standard Gaussian:} The commonly used isotropic Gaussian prior $\mathcal{N}(0, I)$.
        
    \item \textbf{Fixed MoG with uniform component contributions:} A VampPrior with uniform contribution weights that is trained during the VAE stage, turned into a MoG \footnote{The VampPrior was turned into a MoG by loading the aggregated posterior of the pseudoinputs into the parameters of a MoG, ultimately breaking the prior-encoder pairing.} and remained fixed during the adversarial training.
    
    \item \textbf{Trainable MoG with uniform component contributions:} A VampPrior with uniform contribution weights that is trained during the VAE stage, turned into a MoG and continue being trained throughout the adversarial training.

    \item \textbf{Fixed MoG with learnable component contributions:} A VampPrior with learnable component contributions weights that is trained during the VAE stage, turned into a MoG and remained fixed during the adversarial training.
    
    \item \textbf{Trainable MoG with learnable component contributions:} A VampPrior with learnable component contributions weights that is trained during the VAE stage, turned into a MoG and continue being trained throughout the adversarial training.
\end{itemize}

Importantly, any argument in favor of prior learning (i.e., trainable MoG) should be supported by performance improvements over both the standard Gaussian and the fixed MoG configurations.

\subsection{2D - density estimation}

\begin{figure*}[!h]
    \centering
    \subfloat[]{\includegraphics[width=0.2\textwidth]{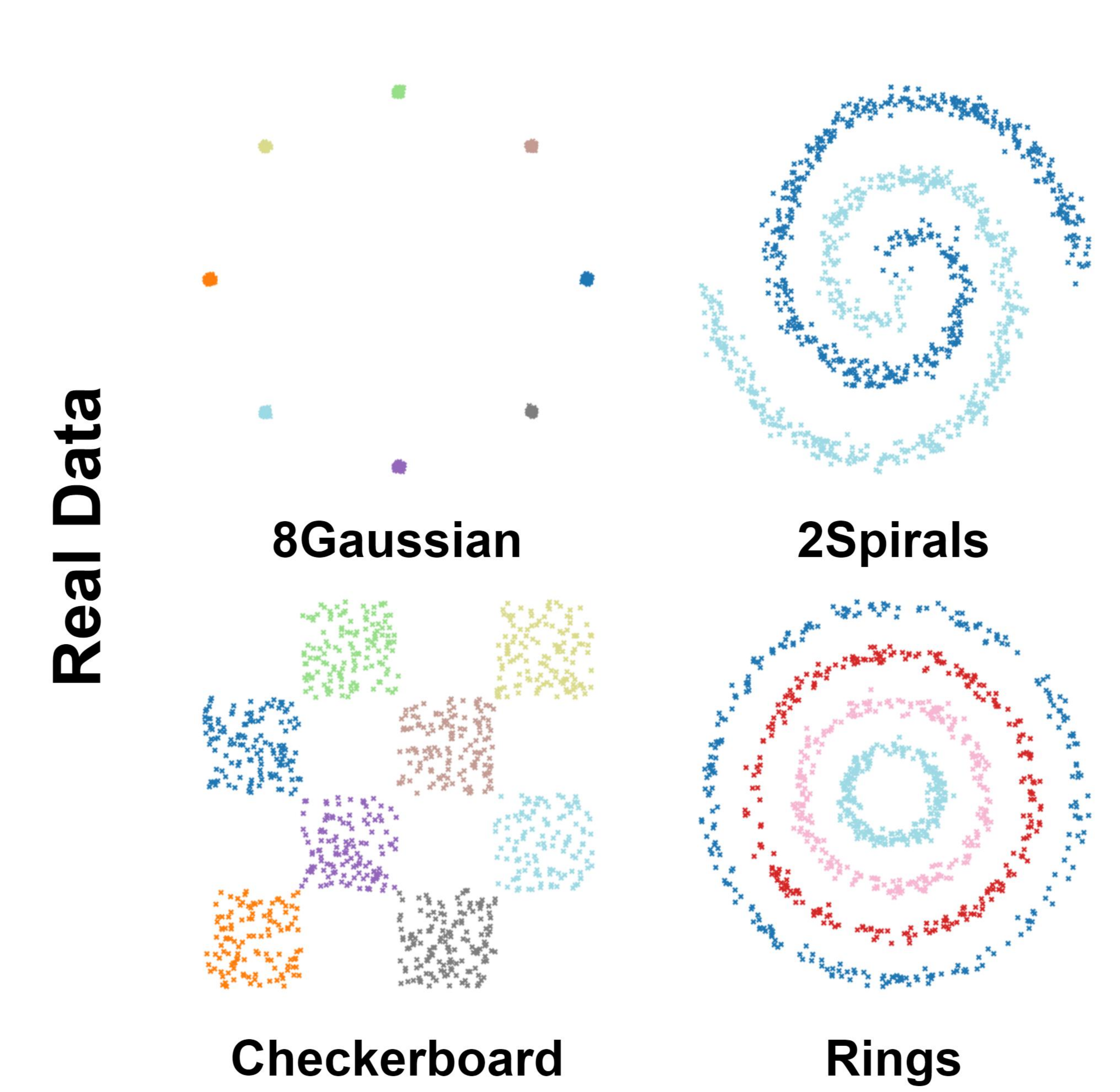}}
    \qquad
    \subfloat[]{\includegraphics[width=0.75\textwidth]{figures/qualitative_2D_a.pdf}}%
    \caption{(a) Real data (b) Qualitative results on density estimation, within each dataset we provide, from left to right, the results under the standard Gaussian, fixed and trainable MoG with learnable contributions corresponding to the 2$^{\text{nd}}$, 5$^\text{th}$ and 6$^\text{th}$ columns in Table \ref{tab:quantitative_2D} respectively.}
    \label{fig:qualitative_2D} 
\end{figure*}

For the Density estimation benchmark, we adopt the same evaluation scheme as originally used in S-IntroVAE \citep{daniel2021soft}, namely, we use the gnELBO (grid-normalized ELBO) and the histogram-based KL and JSD (Jensen–Shannon divergence) divergences as measures of the inference, the forward and reverse generation capabilities, respectively.

To understand how modeling the prior as a third player affects S-IntroVAE we compare three discrete prior settings, namely (i) standard Gaussian, (ii) fixed MoG and (iii) trainable MoG in decoder-cooperation, termed as Intro-Prior (IP) (see Fig.~\ref{fig:priors} for a conceptual visualization of the three settings). When utilizing MoG priors we experimented with both uniform and learnable contribution (LC) of each mode while we modeled the multi-modal prior using $64$ components. More specifically, for the LC configuration, the contributions were learnable for both (ii) and (iii) during the VAE pre-training stage whereas during the adversarial training remained learnable only for (iii). The VampPrior was used during the VAE stage due to its benefits in latent space structuring over the MoG \citep{tomczak2018vae}. The latter was turned into a MoG (see Footnote~2) during the adversarial training to ensure prior--decoder cooperation and to exploit the properties of its NE as given by Corollary~\ref{corollary_2}.  More specifically, we note that under the VampPrior, the prior is paired with the encoder establishing a prior--encoder cooperation. As analyzed in \ref{sec:prior_encoder}, this cooperation leads to the prior and the decoder pulling the generated data distribution toward potentially incompatible objectives. When it comes to the regularization, the beta-adapting log-variance clipping was used for IP with $K=10$ and $[a_j,b_j]$ set to the minimum and maximum log-variance in each latent dimension $j$ as found during the VAE warm-up while the mode responsibilities were left unregularized (i.e., $r_\text{entropy}=0)$. For all prior settings, we used $100$ Monte Carlo samples to approximate the KL divergence between uni- and multi-modal Gaussian distributions.

\begin{wraptable}[23]{r}{0.5\textwidth} 
\vspace{-0.8cm}

\renewcommand{\arraystretch}{2.1}
\begin{adjustbox}{max width=0.5\textwidth}
\begin{tabular}{cc cccccc } 
                           &  \large{Model}  $\rightarrow$    
                           & \multicolumn{1}{c}{\large{VAE}}  
                           & \multicolumn{1}{c}{\large{S-IntroVAE}}  
                           & \multicolumn{4}{c}{\large{S-IntroVAE}} \\  \cmidrule(lr){3-3}\cmidrule(lr){4-4}\cmidrule(lr){5-8}
                           &  \large{Prior Type} $\rightarrow$
                           & \multicolumn{2}{c}{\large{SG}}

                           & \multicolumn{4}{c}{$\large{\text{MoG(64)}}$}

                           \\ \cmidrule(lr){3-4} \cmidrule(lr){5-8}
                           
                           & \large{LC Flag} $\rightarrow$  
                           & \large{N/A}
                           & \large{N/A} 
                           & {\xmark}  
                           & {\xmark}  
                           & {\cmark} 
                           & {\cmark} \\
                           
                           & \large{IP Flag} $\rightarrow$  
                           & \large{N/A}
                           & \large{N/A}
                           & \large{\xmark}
                           & \large{\cmark}
                           & \large{\xmark}
                           & \large{\cmark} \\ \hline \hline
                           
\multirow{3}{*}{\rotatebox[origin=c]{90}{\large{8Gaussian}}}  
                           & \large{gnELBO} $\downarrow $
                           & \large{7.48} \scriptsize{$\pm$0.03}
                           & \large{0.51} \scriptsize{$\pm$0.07} 
                           & \large{3.62} \scriptsize{$\pm$0.31}
                           & \large{4.8} \scriptsize{$\pm$0.15}
                           & \textbf{\large{0.25} \scriptsize{$\pm$0.02}}
                           & {\large0.26} \scriptsize{$\pm$0.04} \\
                           
                           & \large{KL} $\downarrow $ 
                           & \large{6.94} \scriptsize{$\pm$0.36} 
                           & \textbf{\large{1.23} \scriptsize{$\pm$0.05}}
                           & \large{4.46} \scriptsize{$\pm$2.63} 
                           & \large{2.36} \scriptsize{$\pm$0.3} 
                           & \large{1.94} \scriptsize{$\pm$0.33} 
                           & \large{2.24} \scriptsize{$\pm$0.71} \\
                           
                           & \large{JSD} $\downarrow $
                           & \large{17.41} \scriptsize{$\pm$0.12}
                           & \textbf{\large{1.01} \scriptsize{$\pm$0.08}} 
                           & \large{1.77} \scriptsize{$\pm$0.7} 
                           & \large{1.79} \scriptsize{$\pm$0.09} 
                           & \large{1.13} \scriptsize{$\pm$0.12} 
                           & \large{1.08} \scriptsize{$\pm$0.07}  \\ \hline \hline

\multirow{3}{*}{\rotatebox[origin=c]{90}{\large{2Spirals}}}  
                           & \large{gnELBO} $\downarrow $
                           & \large{6.23} \scriptsize{$\pm$0.01}
                           & \large{6.41} \scriptsize{$\pm$0.27}
                           & \large{6.41} \scriptsize{$\pm$0.43}
                           & \large{6.04} \scriptsize{$\pm$0.36} 
                           & \textbf{\large{5.81} \scriptsize{$\pm$0.4}} 
                           & \large{6.47} \scriptsize{$\pm$0.28} \\
                           
                           & \large{KL} $\downarrow $ 
                           & \large{10.18} \scriptsize{$\pm$0.16} 
                           & \large{9.5} \scriptsize{$\pm$0.55}
                           & \large{8.61} \scriptsize{$\pm$0.35}
                           & \large{8.31} \scriptsize{$\pm$0.2}
                           & \large{9.45} \scriptsize{$\pm$0.56}
                           & \textbf{\large\large{8.02} \scriptsize{$\pm$0.11}}  \\
                           
                           & \large{JSD} $\downarrow $
                           & \large{4.94} \scriptsize{$\pm$0.11} 
                           & \large{4.21} \scriptsize{$\pm$0.22}
                           & \large{3.76} \scriptsize{$\pm$0.04} 
                           & \textbf{\large{3.53} \scriptsize{$\pm$0.04}}
                           & \large{3.89} \scriptsize{$\pm$0.08} 
                           & \large{3.64} \scriptsize{$\pm$0.07} \\ \hline \hline

\multirow{3}{*}{\rotatebox[origin=c]{90}{\large{Checkerboard}}}  
                           & \large{gnELBO} $\downarrow $
                           & \large{8.62} \scriptsize{$\pm$0.05}
                           & \textbf{\large{7.21} \scriptsize{$\pm$0.05}}
                           & \large{8} \scriptsize{$\pm$0.03} 
                           & \large{7.66} \scriptsize{$\pm$0.13} 
                           & \large{7.81} \scriptsize{$\pm$0.09} 
                           & \large{7.67} \scriptsize{$\pm$0.06} \\
                           
                           & \large{KL} $\downarrow $ 
                           & \large{20.79} \scriptsize{$\pm$0.08}
                           & \large{19.62} \scriptsize{$\pm$0.25} 
                           & \large{18.58} \scriptsize{$\pm$0.22} 
                           & \large{18.72} \scriptsize{$\pm$0.27} 
                           & \large{19.04} \scriptsize{$\pm$0.65}
                           & \textbf{\large{17.7} \scriptsize{$\pm$0.11}} \\
                           
                           & \large{JSD} $\downarrow $
                           & \large{9.97} \scriptsize{$\pm$0.06} 
                           & \large{8.87} \scriptsize{$\pm$0.07}
                           & \large{8.65} \scriptsize{$\pm$0.04} 
                           & \large{8.71} \scriptsize{$\pm$0.08}
                           & \large{8.9} \scriptsize{$\pm$0.22} 
                           & \textbf{\large{8.46} \scriptsize{$\pm$0.07}} \\ \hline \hline

\multirow{3}{*}{\rotatebox[origin=c]{90}{\large{Rings}}}  
                           & \large{gnELBO} $\downarrow $
                           & \large{6.37} \scriptsize{$\pm$0.04} 
                           & \textbf{\large{6.03} \scriptsize{$\pm$0.05}} 
                           & \large{6.73} \scriptsize{$\pm$0.18} 
                           & \large{6.86} \scriptsize{$\pm$0.16}
                           & \large{6.4} \scriptsize{$\pm$0.34}
                           & \large{6.65} \scriptsize{$\pm$0.18} \\
                           
                           & \large{KL} $\downarrow $ 
                           & \large{13.3} \scriptsize{$\pm$0.28}
                           & \large{9.99} \scriptsize{$\pm$0.27}
                           & \large{10.07} \scriptsize{$\pm$0.37} 
                           & \textbf{\large{9.77} \scriptsize{$\pm$0.31}}
                           & \large{11.31} \scriptsize{$\pm$0.56} 
                           & \large{10.31} \scriptsize{$\pm$1.03}  \\
                           
                           & \large{JSD} $\downarrow $
                           & \large{7.4} \scriptsize{$\pm$0.08}
                           & \textbf{\large{4.05} \scriptsize{$\pm$0.07}}
                           & \large{4.13} \scriptsize{$\pm$0.08}
                           & \large{4.12} \scriptsize{$\pm$0.07}
                           & \large{5.19} \scriptsize{$\pm$0.33} 
                           & \large{4.33} \scriptsize{$\pm$0.33} \\ \hline  \hline 

\end{tabular}
\end{adjustbox}
\caption{Quantitative performance on the four 2D datasets was evaluated.  The LC flag refers to the component contributions being learnable while the IP flag refers to training the prior (i.e., prior--decoder cooperation scheme). Reported values are mean ± standard error over five runs.}

\label{tab:quantitative_2D} 
\end{wraptable} 

In line with \citet{daniel2021soft}, we identified the optimal hyperparameters (i.e., $\beta_\text{rec}$, $\beta_\text{KL}$ and $\beta_\text{neg}$) by performing an extensive grid-search while we used $\alpha=2$ and $\gamma=1$. 

In Table~\ref{tab:quantitative_2D} we report the average (mean ± standard error) performance across five seeds. As already reported by \citet{daniel2021soft} the VAE formulation lags behind the S-IntroVAE across all metrics. Regarding prior learning in S-IntroVAE, the quantitative results suggest that in most cases, IP improves the generation performance compared to when using the SG prior or the fixed MoG. In particular, this is more evident when looking at the histogram-based $\text{KL}$ metric. The observation above aligns with our intuition as according to Corollary~\ref{corollary_2} both the prior and the decoder players cooperate towards minimizing the $\text{KL}[p_\text{data}(x)||p_d^\lambda(x)]$ term boosting the forward generation performance. An exception to this trend is observed on the 8Gaussian dataset, where training under the standard Gaussian prior achieves the best generation results. Since the 8Gaussian displays the most multi-modal structure (i.e., large areas of low density) we attribute this deviation to the trade-off between stability and modeling multi-modal distribution with push-forward models as discussed in \citep{salmona2022can}.

Additionally, when evaluating the qualitative performance as depicted in Fig.~\ref{fig:qualitative_2D} we observe that the IP formulation tends to give rise to better-separated clusters in the latent space, more intuitive support of the aggregated posterior, and fewer samples in between the modes.

\subsection{Image generation}

We investigate whether and to which extent prior learning improves the generation performance and the representation learned using the (F)-MNIST and CIFAR-10 datasets. We evaluate the generation quality using the FID metric for samples generated from sampling from the prior and the aggregated posterior denoted as FID(GEN) and FID(REC) respectively. To get a better, more holistic view of how prior learning impacts the generation, we also report the recall and precision metrics \citep{kynkaanniemi2019improved}, denoted as Recall(GEN) and precision(GEN) respectively.

The quality of the representations learned by the encoder was evaluated by fitting a linear SVM, similar to \citet{kviman2023cooperation}, using 2K-SVM and 10K-SVM iterations as well as utilizing a k-nearest neighbor classifier (k-NN) using 5-NN or 100-NN \citep{caron2021emerging}. 

We use the default training hyperparameters and architectures as provided by \citet{daniel2021soft} to train the S-IntroVAE, except that the first $20$ epochs were used as a VAE training warm-up. We conduct experiments using the same configuration used for the 2D data, while we employ the $r_\text{entropy}$ regularization with a value chosen from $\{0,1,10,100\}$ and report the quantitative results for the one that led to the optimal FID(GEN) for each prior setting.  The prior was modeled using $10$ and $100$ components and found that the latter is superior across all metrics, whether using the fixed or trainable MoG configurations, which is an indication that using a sufficiently large number of components is essential. The results provided in Table~\ref{tab:quantitative_images} suggest that replacing the standard Gaussian with a MoG prior (either fixed or trainable) can benefit both the quality of the generation and the learned representation, however, the benefit is less profound in CIFAR-10 compared to the (F)-MNIST datasets. We attribute this behavior to CIFAR-10 potentially being (close-to) uni-modal distribution \citep{salmona2022can} as opposed to (F)-MNIST which are more likely to be multi-modal. 

At this stage, it is natural to question whether the increased generation performance of the MoG configurations is a byproduct of memorized samples in the mixture modes. In this regard, a high precision accompanied with low recall would be an indication of model memorizing specific training samples at the expense of distribution coverage. The results shown in Table~\ref{tab:quantitative_images} do not hint such sample memorization behavior, that is, the relative relationship between recall and precision is similar across all settings.

\begin{wrapfigure}[30]{r}{0.55\textwidth}
    \vspace{-0.5cm}
    \includegraphics[width=0.55\textwidth]{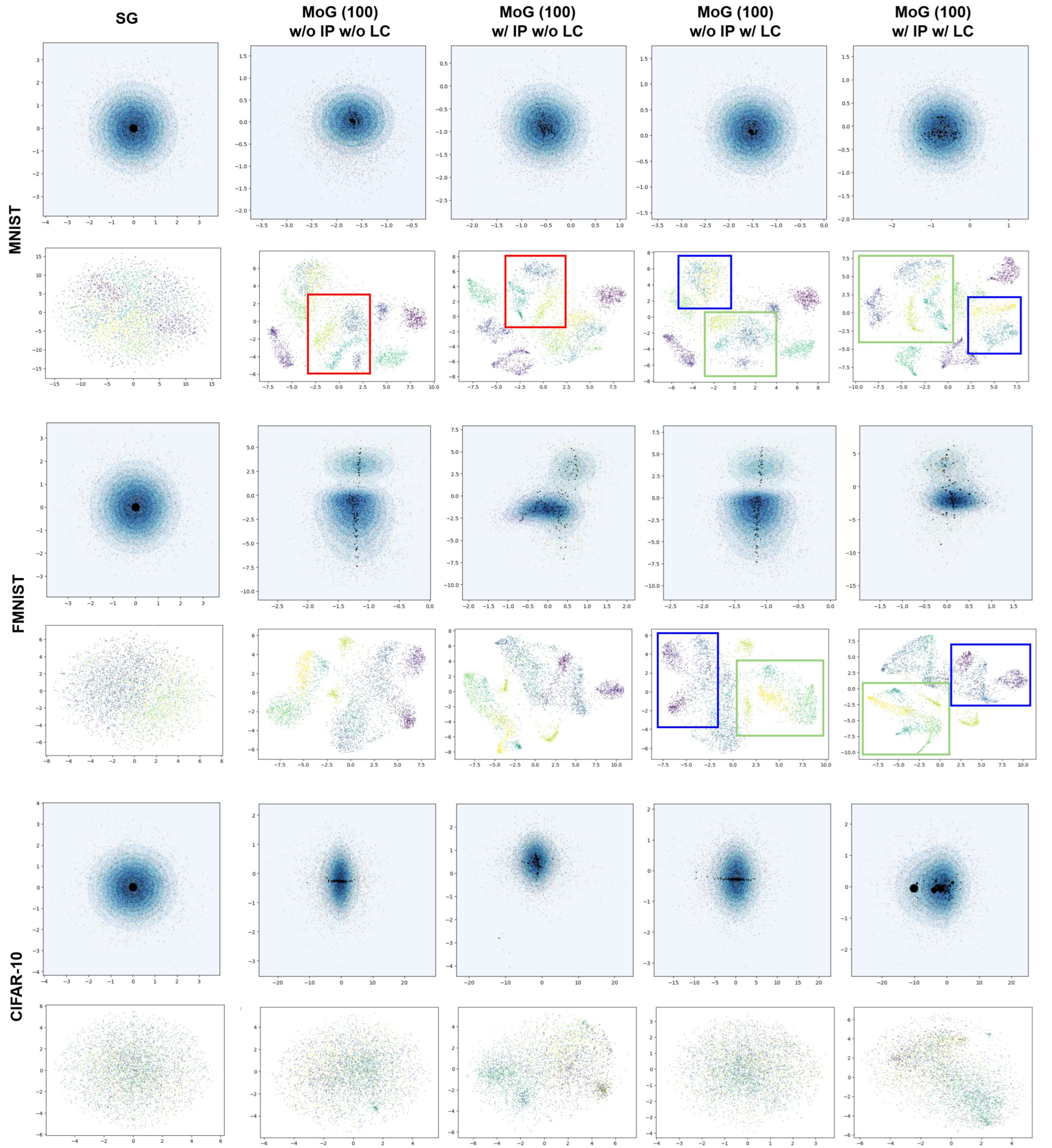}
    \caption{Visualizing the first two latent dimensions of the latent space and the t-SNE 2D embeddings of the full latent space. The columns correspond to those in Tab.~\ref{tab:quantitative_images}. Different colors correspond to different classes. 
    The black dots refer to the means of the prior components and their size corresponds to their contribution weight.}
    \label{fig:latent_space_main} 
\end{wrapfigure}

When comparing fixed (w/o IP) to trainable (w/ IP) MoG priors, we observe a trend where the IP achieves optimal generation performance in two out of the three image benchmarks. When it comes to linear separability, the IP significantly improves over the fixed MoG for two out of the three benchmarks. Interestingly, learning the prior significantly improves the classification performance under the k-NN model across all datasets. This suggests that prior learning in S-IntroVAE leads to a more defined class separation and more interpretable latent space, where similar samples are more effectively clustered together.

\begin{table}[!t]
\centering
\renewcommand{\arraystretch}{1.8}
\small\setlength\tabcolsep{12pt}
\begin{adjustbox}{totalheight=18cm}

\begin{tabular}{cc c cccc} 
                           & \large{Model} $\rightarrow$
                           & \multicolumn{1}{c}{\large{S-IntroVAE}} 
                           & \multicolumn{4}{c}{\large{S-IntroVAE}} \\ \cmidrule(lr){3-3}\cmidrule(lr){4-7}
                           & \large{Prior Type} $\rightarrow$
                           & \multicolumn{1}{c}{\large{SG}}
                           & \multicolumn{4}{c}{$\large{\text{MoG(100)}}$}
                           \\ \cmidrule(lr){3-3}\cmidrule(lr){4-7}
                           
                           & \large{LC Flag} $\rightarrow$
                           & \large{N/A}
                           & \large{\xmark}  
                           & \large{\xmark}  
                           & \large{\cmark} 
                           & \large{\cmark}\\

                           & \large{IP Flag} $\rightarrow$
                           & \large{N/A}
                           & \large{\xmark}
                           & \large{\cmark}
                           & \large{\xmark}
                           & \large{\cmark} 
                           \\ \hline \hline

\multirow{10}{*}{\rotatebox[origin=c]{90}{\large{MNIST}}}  

                           & \large{$r_\text{entropy}$} 
                           & \large{0} 
                           & \large{10}
                           & \large{10} 
                           & \large{1}
                           & \large{10}\\

                           & \large{Entr.} 
                           & \large{0}
                           & \large{0.892} \scriptsize{$\pm$0.002} 
                           & \large{0.882} \scriptsize{$\pm$0.001} 
                           & \large{0.882} \scriptsize{$\pm$0.002} 
                           & \large{0.853} \scriptsize{$\pm$0.004}\\

                           & \large{FID (GEN)} $\downarrow $ 
                           & \large{1.414} \scriptsize{$\pm$0.025} 
                           & \large{1.322} \scriptsize{$\pm$0.025} 
                           & \large{1.352} \scriptsize{$\pm$0.052} 
                           & \large{1.32} \scriptsize{$\pm$0.061} 
                           & \textbf{\large{1.309} \scriptsize{$\pm$0.027}}\\

                           & \large{FID (REC)} $\downarrow $ 
                           & \large{1.503} \scriptsize{$\pm$0.031} 
                           & \textbf{\large{1.342} \scriptsize{$\pm$0.05}} 
                           & \large{1.473} \scriptsize{$\pm$0.1}
                           & \large{1.363} \scriptsize{$\pm$0.075} 
                           & \large{1.385} \scriptsize{$\pm$0.081} \\

                           & \large{Recall (GEN)}$\uparrow$  
                           & \textbf{\large{0.565} \scriptsize{$\pm$0.003}} 
                           & \large{0.562} \scriptsize{$\pm$0.003} 
                           & \large{0.553} \scriptsize{$\pm$0.008}
                           & \large{0.556} \scriptsize{$\pm$0.003} 
                           & \large{0.557} \scriptsize{$\pm$0.001} \\

                           & \large{Precision (GEN)}$\uparrow$  
                           & \large{0.522} \scriptsize{$\pm$0.004} 
                           & \large{0.55} \scriptsize{$\pm$0.005} 
                           & \large{0.556} \scriptsize{$\pm$0.001}
                           & \large{0.561} \scriptsize{$\pm$0.005} 
                           & \textbf{\large{0.562} \scriptsize{$\pm$0.005}} \\

                           & \large{2K-SVM}$\uparrow$  
                           & \large{0.93} \scriptsize{$\pm$0.001} 
                           & \large{0.961} \scriptsize{$\pm$0.001} 
                           & \large{0.97} \scriptsize{$\pm$0.004} 
                           & \large{0.962} \scriptsize{$\pm$0.002} 
                           & \textbf{\large{0.972} \scriptsize{$\pm$0.002}}\\

                           & \large{10K-SVM}$\uparrow$ 
                           & \large{0.93} \scriptsize{$\pm$0.001} 
                           & \large{0.961} \scriptsize{$\pm$0.001} 
                           & \large{0.97} \scriptsize{$\pm$0.004} 
                           & \large{0.962} \scriptsize{$\pm$0.002} 
                           & \textbf{\large{0.972} \scriptsize{$\pm$0.002}} \\
                           
                           & \large{5-NN}$\uparrow$ 
                           & \large{0.763} \scriptsize{$\pm$0.003} 
                           & \large{0.916} \scriptsize{$\pm$0.004} 
                           & \large{0.947} \scriptsize{$\pm$0.011} 
                           & \large{0.92} \scriptsize{$\pm$0.001} 
                           & \textbf{\large{0.957 \scriptsize{$\pm$0.004}}}\\
                           
                           & \large{100-NN}$\uparrow$ 
                           & \large{0.87} \scriptsize{$\pm$0.003}
                           & \large{0.934} \scriptsize{$\pm$0.002}
                           & \large{0.953} \scriptsize{$\pm$0.007} 
                           & \large{0.935} \scriptsize{$\pm$0.001} 
                           & \textbf{\large{0.958} \scriptsize{$\pm$0.002}}
                           \\ \hline \hline

\multirow{10}{*}{\rotatebox[origin=c]{90}{\large{FMNIST}}}  

                           & \large{$r_\text{entropy}$} 
                           & \large{0} 
                           & \large{0} 
                           & \large{10} 
                           & \large{10}
                           & \large{10} \\

                           & \large{Entr.} 
                           & \large{0}  
                           & \large{0.931} \scriptsize{$\pm$0.003}
                           & \large{0.931} \scriptsize{$\pm$0.001} 
                           & \large{0.944} \scriptsize{$\pm$0.001}
                           & \large{0.903} \scriptsize{$\pm$0.005}\\

                           & \large{FID (GEN)} $\downarrow $ 
                           & \large{3.326} \scriptsize{$\pm$0.039} 
                           & \large{2.785} \scriptsize{$\pm$0.051}
                           & \large{3.025} \scriptsize{$\pm$0.139} 
                           & \textbf{\large{2.727} \scriptsize{$\pm$0.079}} 
                           & \large{2.831} \scriptsize{$\pm$0.1}\\

                           & \large{FID (REC)} $\downarrow $ 
                           & \large{3.76} \scriptsize{$\pm$0.097} 
                           & \textbf{\large{2.994}} \scriptsize{$\pm$0.05} 
                           & \large{3.129} \scriptsize{$\pm$0.095} 
                           & \large{3.185} \scriptsize{$\pm$0.101} 
                           & \large{3.511} \scriptsize{$\pm$0.074}  \\

                           & \large{Recall (GEN)}$\uparrow$  
                           & \large{0.314} \scriptsize{$\pm$0.012} 
                           & \textbf{\large{0.35} \scriptsize{$\pm$0.003}}
                           & \large{0.336} \scriptsize{$\pm$0.007} 
                           & \large{0.346} \scriptsize{$\pm$0.004} 
                           & \large{0.341} \scriptsize{$\pm$0.008}  \\

                           & \large{Precision (GEN)}$\uparrow$  
                           & \large{0.518} \scriptsize{$\pm$0.009} 
                           & \large{0.553} \scriptsize{$\pm$0.005} 
                           & \large{0.558} \scriptsize{$\pm$0.004} 
                           & \textbf{\large{0.576} \scriptsize{$\pm$0.006}} 
                           & \large{0.574} \scriptsize{$\pm$0.003}  \\

                           & \large{2K-SVM}$\uparrow$  
                           & \large{0.681} \scriptsize{$\pm$0.001} 
                           & \textbf{\large{0.731} \scriptsize{$\pm$0.003}}
                           & \large{0.695} \scriptsize{$\pm$0.007} 
                           & \large{0.712} \scriptsize{$\pm$0.005} 
                           & \large{0.696} \scriptsize{$\pm$0.003}\\

                           & \large{10K-SVM}$\uparrow$ 
                           & \large{0.731} \scriptsize{$\pm$0.006} 
                           & \textbf{\large{0.78} \scriptsize{$\pm$0.002}} 
                           & \large{0.772} \scriptsize{$\pm$0.003} 
                           & \large{0.778} \scriptsize{$\pm$0.002} 
                           & \large{0.773} \scriptsize{$\pm$0.002}\\
                           
                           & \large{5-NN}$\uparrow$ 
                           & \large{0.425} \scriptsize{$\pm$0.009} 
                           & \large{0.683} \scriptsize{$\pm$0.006} 
                           & \large{0.693} \scriptsize{$\pm$0.008} 
                           & \large{0.678} \scriptsize{$\pm$0.006}
                           & \textbf{\large{0.707} \scriptsize{$\pm$0.005}} \\
                           
                           & \large{100-NN}$\uparrow$ 
                           & \large{0.606} \scriptsize{$\pm$0.014} 
                           & \large{0.736} \scriptsize{$\pm$0.003} 
                           & \large{0.729} \scriptsize{$\pm$0.006}
                           & \large{0.731} \scriptsize{$\pm$0.003} 
                           & \textbf{\large{0.739 \scriptsize{$\pm$0.004}}}
                           \\ \hline \hline

\multirow{10}{*}{\rotatebox[origin=c]{90}{\large{CIFAR-10}}}  

                           & \large{$r_\text{entropy}$} 
                           & \large{0} 
                           & \large{10} 
                           & \large{100} 
                           & \large{100}
                           & \large{10}\\

                           & \large{Entr.} 
                           & \large{0} 
                           & \large{0.839} \scriptsize{$\pm$0.007} 
                           & \large{0.94} \scriptsize{$\pm$0.002} 
                           & \large{0.929} \scriptsize{$\pm$0.003} 
                           & \large{0.511} \scriptsize{$\pm$0.043}\\

                           & \large{FID (GEN)} $\downarrow $ 
                           & \large{4.424} \scriptsize{$\pm$0.064} 
                           & \large{4.465} \scriptsize{$\pm$0.038} 
                           & \textbf{\large{4.385} \scriptsize{$\pm$0.140}}
                           & \large{4.417} \scriptsize{$\pm$0.031} 
                           & \large{4.594} \scriptsize{$\pm$0.235}\\

                           & \large{FID (REC)} $\downarrow $ 
                           & \large{4.13} \scriptsize{$\pm$0.068} 
                           & \large{4.205} \scriptsize{$\pm$0.091} 
                           & \textbf{\large{4.084} \scriptsize{$\pm$0.006}} 
                           & \large{4.141} \scriptsize{$\pm$0.039} 
                           & \large{4.585} \scriptsize{$\pm$0.373} \\

                           & \large{Recall (GEN)}$\uparrow$  
                           & \textbf{\large{0.283} \scriptsize{$\pm$0.003}} 
                           & \large{0.281} \scriptsize{$\pm$0.001} 
                           & \large{\textbf{\large{0.283} \scriptsize{$\pm$0.003}}} 
                           & \large{0.282} \scriptsize{$\pm$0.008} 
                           & \large{0.264} \scriptsize{$\pm$0.012} \\

                           & \large{Precision (GEN)}$\uparrow$  
                           & \textbf{\large{0.685} \scriptsize{$\pm$0.004}} 
                           & \large{0.676} \scriptsize{$\pm$0.002} 
                           & \large{0.679} \scriptsize{$\pm$0.004} 
                           & \large{0.677} \scriptsize{$\pm$0.007} 
                           & \textbf{\large{0.685} \scriptsize{$\pm$0.006}} \\

                           & \large{2K-SVM}$\uparrow$  
                           & \large{0.245} \scriptsize{$\pm$0.009} 
                           & \large{0.25} \scriptsize{$\pm$0.002} 
                           & \textbf{\large{0.271} \scriptsize{$\pm$0.006}} 
                           & \large{0.26} \scriptsize{$\pm$0.002} 
                           & \large{0.256} \scriptsize{$\pm$0.003} \\

                           & \large{10K-SVM}$\uparrow$ 
                           & \large{0.391}\scriptsize{$\pm$0.005}
                           & \large{0.396} \scriptsize{$\pm$0.003}
                           & \textbf{\large{0.407} \scriptsize{$\pm$0.007}} 
                           & \large{0.401} \scriptsize{$\pm$0.002}
                           & \large{0.396} \scriptsize{$\pm$0.002} \\
                           
                           & \large{5-NN}$\uparrow$ 
                           & \large{0.206} \scriptsize{$\pm$0.001} 
                           & \large{0.189} \scriptsize{$\pm$0} 
                           & \textbf{\large{0.239} \scriptsize{$\pm$0.005}} 
                           & \large{0.196} \scriptsize{$\pm$0.001} 
                           & \large{0.219} \scriptsize{$\pm$0.002}\\
                           
                           & \large{100-NN}$\uparrow$ 
                           & \large{0.308} \scriptsize{$\pm$0.007} 
                           & \large{0.216} \scriptsize{$\pm$0.008} 
                           & \textbf{\large{0.32} \scriptsize{$\pm$0.005}} 
                           & \large{0.259} \scriptsize{$\pm$0.003}
                           & \large{0.273} \scriptsize{$\pm$0.004} 
                           \\ \hline \hline

\end{tabular}
\end{adjustbox}
\caption{Quantitative performance on the images datasets. The LC flag refers to the component contributions being learnable while the IP flag refers to training the prior (i.e., prior--decoder cooperation scheme). Reported values are mean ± standard error over three runs. The $r_\text{entropy}$ row corresponds to the regularization used to obtain the optimal FID(GEN) for each training configuration, where the Entr. row refers to the normalized entropy of the responsibilities where the closer to one its value the more uniformly the aggregated posterior is supported by the prior components.}
\label{tab:quantitative_images} 
\end{table}

A qualitative inspection of the latent space (see Fig. \ref{fig:latent_space_main}) reveals that modeling the prior as a mixture of MoG results in better-separated clusters compared to a standard Gaussian. When comparing a fixed MoG to a trainable MoG, the improvement in class separation is less pronounced but still noticeable which aligns with the quantitative results shown in Table~\ref{tab:quantitative_images}. For the complete results and latent space visualization, we refer the readers to \ref{supp:full_results} and \ref{supp:latent_inspection} respectively.

Finally, it is worth noting how the entropy regularization behaves differently based on the training hyperparameters, dataset complexity and prior learning configuration. In this regard, we observe that a higher $r_\text{entropy}$ was necessary to achieve the optimal performance on CIFAR-10 compared to the (F)-MNIST datasets under the IP configuration. Additionally, allowing for learnable contributions under the IP configuration tends to decrease the normalized entropy of the responsibilities suggesting that contributions tend to vanish as soon as they no longer support the aggregated posterior which advocates for the importance of taking measures (e.g., using the $r_\text{entropy}$) to utilize all the components when performing the discrimination (i.e., updating the encoder). For the full ablation on the $r_\text{entropy}$ parameter we refer readers to \ref{supp:responsibilities_ablation}.

\section{Conclusions}
In this study, we have proposed a prior--decoder cooperation scheme as a theoretically sound approach to prior learning in S-IntroVAE, marking the first successful integration of prior learning in Introspective VAEs. Our approach aims to combine two independent directions for improving VAEs: prior learning and the incorporation of adversarial objectives. To realize our proposed scheme, we identified several challenges, which we addressed with theoretically motivated regularization techniques, specifically (i) adaptive log-variance clipping and (ii) responsibility regularization. Our experimental results conducted on 2D and high-dimensional image settings demonstrate the effects of learning the prior in S-IntroVAE. These include a better-structured and more explainable latent space and, in most cases, improved generation performance. We firmly believe that our theoretical insights, coupled with the empirical results, pave the way towards a better understanding of Introspective VAEs and their connection to their VAEs and GANs counterparts. Finally, owing to the unique nature of the problem where a multimodal distribution constitutes both the source and the target, we hope that our analyses enjoy practical use in other areas that deal with problems of similar characteristics e.g., Idempotent Generative Networks \citep{shocher2023idempotent} or adversarially robust clustering \citep{yang2020adversarial}.

\section*{Acknowledgements}
We thank Emanuel Sanchez Aimar and Shashi Nagarajan for the discussions during the early stage of the study. This work was supported by the Wallenberg Artificial Intelligence, Autonomous Systems and Software Program (WASP), funded by the Knut and Alice Wallenberg Foundation. The computational resources were provided by the National Academic Infrastructure for Supercomputing in Sweden (NAISS) at C3SE, partially funded by the Swedish Research Council through grant agreement no. 2022-06725.

\bibliography{main}
\bibliographystyle{tmlr}

\appendix

\newpage 
\section{Preliminaries} \label{supp_prelim}

The ELBO, given a sample $x$, can be formulated as:

\begin{equation}\label{eq:supp_ELBO_reformulation}
  \begin{aligned}
W({x;q,d}) &=  \mathbb{E}_{z \sim q_{({z| x})}}[\log p_{d}({x| z})]-\text{KL}[q({z | x})||p(z)] \\
&=\mathbb{E}_{z \sim q_{({z| x})}}[\log p_{d}({x| z})]-\mathbb{E}_{z \sim q_{({z| x})}}[\log \frac{q({z | x})}{p_z(z)}] \\
&=\mathbb{E}_{z \sim q_{({z| x})}}[\log \frac{p_{d}({z| x}) \cdot p_{d}({x})}{p_z(z)}-\log \frac{q({z | x})}{p_z(z)}] \\
&=\mathbb{E}_{z \sim q_{({z| x})}}[\log p_{d}({z| x}) + \log p_{d}({x}) - \log q({z | x})] \\
&= \log p_{d}({x}) - \text{KL}[q({z | x})||p_{d}({z| x})] \leq \log p_{d}({x}),
  \end{aligned}
\end{equation}

\noindent with $\text{KL}[\cdot || \cdot]$ denoting the Kullback–Leibler (KL) divergence. 

\section{Nash Equilibrium in S-IntroVAE}

In this section, we provide the theorems based on which the prior--decoder cooperation emerges as a viable option for learning the prior in S-IntroVAE. First, we revisit the derivation of the Nash Equilibrium (NE), under the fixed prior case (originally provided by \citet{daniel2021soft}), which we modify to account for samples outside the support of the real data distribution. The details and the motivation behind the aforementioned modification are provided in Section~\ref{supp_ne_sg}. \\\\
For simplicity, our analysis is conducted in the discrete domain which is in practice sufficiently revealing as we deal with finite data. From a theoretical standpoint, we can rely on continuity arguments under the assumption of Leibniz's continuity.

\subsection{S-IntroVAE under a fixed prior (\citep{daniel2021soft})}\label{supp_ne_sg}

The adversarial game as defined by \citet{daniel2021soft}:
\begin{equation}  \label{eq:supp_sIntroVAE_game_imposed}
\begin{aligned}
    L_q(q,d) &= \mathbb{E}_{p_{\text{data}}}\left[ W(x;q,d)\right] - \mathbb{E}_{p_d}\left[ \frac{1}{\alpha} \cdot \exp(\alpha \cdot W(x;q,d))\right], \\
    L_d(q,d) &= \mathbb{E}_{p_{\text{data}}}\left[ W(x;q,d)\right] + \gamma \cdot \mathbb{E}_{p_d}\left[ W(x;q,d)\right],
\end{aligned}
\end{equation}

\noindent where $\alpha \geq 1$, $\gamma \geq 0$ and $p_d(x) = \mathbb{E}_{p(z)}[p_d(x|z)]$ with $p(z)$ a fixed prior distribution. Note that although originally, a standard Gaussian (SG) prior was used the derivation extends to any prior distribution as long as it is fixed. For notational brevity, we will henceforth refer to the expectation over the real data distribution 
$\mathbb{E}_{x \sim p_\text{data}}[\cdot]$ simply as $\mathbb{E}_{ p_\text{data}}[\cdot]$, the same applies to the generated data distribution as well.

\begin{lemma}\label{supp_lemma_1}Assuming that $[p_{d}(x)]^{\alpha+1} \leq p_{data}(x)$ for all $x$ such that $p_{data}(x) \geq 0 $, the $q^*$ maximizing the $L_q(q,d)$ satisfies $q^*(d)(z|x) = p_d(z|x)$.
\end{lemma}

\begin{remark}
The assumption used in Lemma~\ref{supp_lemma_1} is a modified version of the one used in \citep{daniel2021soft} in order to account for samples outside of the support of the $p_\text{data}(x)$. Specifically we require the assumption $[p_{d}(x)]^{\alpha+1} \leq p_\text{data}(x)$ to hold for all $x$ such that $p_{data}(x) \geq 0$ instead to $p_{data}(x) > 0$. The utility of this modification is revealed in the proof below.
\end{remark}

\begin{proof}

Using the ELBO reformulation provided in \ref{eq:supp_ELBO_reformulation} we develop the $L_q(q,d)$ objective as:

\begin{equation}\label{eq:supp_L_q_imposed}
  \begin{aligned}
    L_q(q,d) &= \mathbb{E}_{p_{\text{data}}}\left[ W(x;q,d)\right] - \mathbb{E}_{p_d}\left[ \frac{1}{\alpha}\cdot\exp(\alpha \cdot W(x;q,d))\right] \\
             &= \mathbb{E}_{p_{\text{data}}}\left[  \log p_{d}({x}) - \text{KL}[q({z | x})||p_{d}({z| x})] \right] \\ 
             &- \frac{1}{a} \cdot \mathbb{E}_{p_d}\left[\exp(\log [p_{d}({x})]^\alpha 
              - \alpha \cdot \text{KL}[q({z | x})||p_{d}({z| x})])\right]\\
            &= \mathbb{E}_{p_{\text{data}}}\left[  \log p_{d}({x}) - \text{KL}[q({z | x})||p_{d}({z| x})] \right] \\
            &- \frac{1}{a} \cdot \mathbb{E}_{p_d}\left[[p_{d}({x})]^\alpha  \cdot
               \exp(- \alpha \cdot \text{KL}[q({z | x})||p_{d}({z|x})])\right]\\
            &= \sum\limits_{x} p_\text{data}(x) \cdot ( \log p_d(x) - \text{KL}[q(z|x) || p_d(z|x)]) - \frac{1}{\alpha} \cdot [p_{d}(x)]^{\alpha+1} \cdot \exp(- \alpha \cdot \text{KL}[q(z|x) || p_d(z|x)])\\
            &= \begin{cases}  \sum\limits_{x} p_\text{data}(x) \cdot \left ( \log p_d(x) - \text{KL}[q(z|x) || p_d(z|x)] - \frac{1}{\alpha} \cdot \frac{[p_{d}(x)]^{\alpha+1}}{p_\text{data}(x)} \cdot \exp(- \alpha \cdot \text{KL}[q(z|x) || p_d(z|x)])\right ) \\ = \sum\limits_{x} G(q,d) ,  \quad x \in \{p_{\text{data}}(x) > 0 \} \\ \\
            \sum\limits_{x} \left (- \frac{1}{\alpha} \cdot [p_{d}(x)]^{\alpha+1} \cdot \exp(- \alpha \cdot \text{KL}[q(z|x) || p_d(z|x)]) \right) \\
            \ = \sum\limits_{x} Q(q,d) ,  \quad x \in \{p_{\text{data}}(x) = 0 \} \end{cases}
    \end{aligned}
\end{equation}

\noindent The optimal $q^*$ for each $x$ can be found as the maximizer of the $L_q(q,d)$. \\\\ 
Given $x$ such that  $p_{\text{data}}(x) > 0$ the optimal $q^*$ can be found as the maximizer of the function $G(q,d)$. In that case, we observe that $q$ contributes to $G(q,d)$ only via the KL term. Based on that, the saddle point can be found by analyzing the derivative of $G(q,d)$ with respect to the KL. 

\begin{equation}\label{eq:supp_G_derivative}
\begin{split}
   \frac{\partial G(q,d)}{\partial \text{KL}[q(z|x) || p_d(z|x)]} =  p_{\text{data}}(x) \cdot \left( -1 + \frac{[p_{d}(x)]^{a+1}}{p_{\text{data}}(x) } \cdot \exp(- \alpha \cdot \text{KL}[q(z|x) || p_d(z|x)]) \right).
\end{split}
\end{equation}

\noindent For $x$ such that $p_\text{data}(x) > 0$ and $\frac{[p_{d}(x)]^{\alpha+1}}{p_\text{data}(x)} < 1$ ,
we observe that the $\frac{\partial G(q,d)}{\partial \text{KL}[q(z|x) || p_d(z|x)]} < 0$  for \\ $\text{KL}(q(z|x) || p_d(z|x))\in [0,\infty)$ (KL is non negative), that is the $G(q,d)$ monotonically decreases with respect to $\text{KL}[q(z|x) || p_d(z|x)]$.  \\\\
For $x$ such that $p_\text{data}(x) > 0$ and $\frac{[p_{d}(x)]^{\alpha+1}}{p_\text{data}(x)} = 1$ we observe that the $\frac{\partial G(q,d)}{\partial \text{KL}[q(z|x) || p_d(z|x)]} = 0$ only when $\text{KL}[q(z|x) || p_d(z|x)]=0$. \\ Additionally  $\frac{\partial G(q,d)}{\partial^2 \text{KL}[q(z|x) || p_d(z|x)]} =  p_{\text{data}}(x) \cdot \left( - \alpha \cdot \frac{[p_{d}(x)]^{a+1}}{p_{\text{data}}(x) } \cdot \exp(- \alpha \cdot \text{KL}[q(z|x) || p_d(z|x)]) \right) \leq 0 $. \\\\

\noindent Based on these two cases above, we conclude that $\text{KL}[q^*(z|x) || p_d(z|x)]=0$ is a global maxima of $L_q(q,d)$  for $x$ such that  $p_{\text{data}}(x) > 0$ and $\frac{[p_{d}(x)]^{\alpha+1}}{p_\text{data}(x)} \leq 1$ .\\

\noindent For $x$ such that  $p_{\text{data}}(x) = 0$  the optimal $q^*$ can be found as the maximizer of the function $Q(q,d)$.

\begin{equation}\label{eq:supp_Q_derivative}
\begin{split}
   \frac{\partial Q(q,d)}{\partial \text{KL}[q(z|x) || p_d(z|x)]} =  [p_{d}(x)]^{a+1} \cdot \exp(-\alpha \cdot \text{KL}[q(z|x) || p_d(z|x)]).
\end{split}
\end{equation}

\noindent We observe that $\frac{\partial Q(q,d)}{\partial \text{KL}[q(z|x) || p_d(z|x)]} > 0$, given that $\text{KL}[q(z|x) || p_d(z|x)]\in [0,\infty)$ we conclude $q^*(z|x)$ such that  $\text{KL}[q^*(z|x) || p_d(z|x)]= \infty$ is a global maxima of $L_q(q,d)$  for $x$ such that  $p_{\text{data}}(x) = 0$.  The result above contradicts what has been argued in \citep{daniel2021soft} and is the motivation behind extending the assumption used in Lemma~\ref{supp_lemma_1} to account for samples outside of the support of $p_\text{data}(x)$ (i.e. $p_\text{data}(x) \geq 0$ instead of $p_\text{data}(x) > 0$ used in \citep{daniel2021soft}). Under the modified assumption, for x such that $p_\text{data}(x)=0$ we also have $p_d(x)=0$. In this case samples outside the support of the real data distribution do not contribute to the $L_q(q,d)$ objective and therefore do not influence the optimal $q^*$. 

\noindent Given that the KL is a proper divergence and under the assumption that $[p_{d}(x)]^{\alpha+1} \leq p_\text{data}(x)$ holds for all $x$ such that $p_\text{data}(x) \geq 0$, we conclude that $q^*(z|x) = p_d(z|x)) $ is the global maxima of the $L_q(q,d)$, that is: 

\begin{equation}\label{eq:supp_Lq_imposed_ineq}
\begin{split}
 L_q(q(d),d) &\leq L_q(q^*(d),d) \;\; \text{for all} \;\; q. \\
\end{split}
\end{equation}
\end{proof}

\noindent Let us define $d^*$ as:

\begin{equation}
\label{eq:supp_optimal_d_imposed}
d^* \in \argmin_{d} \left\{\text{KL}[p_{data}(x) || p_{d}(x)] + \gamma \cdot \mathbb{H}[p_{d}(x)] \right \}.
\end{equation}

\begin{assump}[Modified - \citep{daniel2021soft}]\label{supp_assump_1}
For all $x$ such that $p_{data}(x) \geq 0$ we have that $[p_{d^*}(x)]^{\alpha+1} \leq p_{data}(x)$.
\end{assump}

\begin{theorem}[\citep{daniel2021soft}]\label{supp_theorem_1}
Under the Assumption~\ref{supp_assump_1}, the pair of optimal $q^*=p_{d^*}(z|x)$ and $d^*$ as defined in \eqref{eq:supp_optimal_d_imposed} constitutes a NE of the game \eqref{eq:supp_sIntroVAE_game_imposed}. 
\end{theorem}

\begin{proof}

First, we develop the $L_d(q,d)$ as:

\begin{equation}\label{eq:supp_L_d_imposed2}
\begin{aligned}
    L_d(q,d) &= \mathbb{E}_{p_{\text{data}}}\left[ W(x;q,d)\right] + \gamma \cdot \mathbb{E}_{p_d}\left[ W(x;q,d)\right] \\
    &= \mathbb{E}_{p_{\text{data}}}\left[ \log p_{d}({x}) - \text{KL}[q({z | x})||p_{d}({z| x})]\right] \\
    &+ \gamma \cdot \mathbb{E}_{p_d}\left[ \log p_{d}({x}) - \text{KL}[q({z | x})||p_{d}({z| x})]\right] \\ 
    &= \mathbb{E}_{p_{\text{data}}}\left[ \log \frac{p_{d}({x})}{p_{\text{data}}(x)} +\log p_{\text{data}}(x) - \text{KL}[q({z | x})||p_{d}({z| x})]\right] \\
    &+ \gamma \cdot \mathbb{E}_{p_d}\left[ \log p_{d}({x}) - \text{KL}[q({z | x})||p_{d}({z| x})]\right] \\
    &= \mathbb{E}_{p_{\text{data}}}[ \log p_{\text{data}}(x)] \\
    &-\text{KL}[p_{\text{data}}(x) || p_{d}({x})] - \gamma \cdot \mathbb{H}[p_{d}({x})] \\
    & -\mathbb{E}_{p_{\text{data}}}[\text{KL}[q({z | x})||p_{d}({z| x})]] - \gamma \cdot \mathbb{E}_{p_d}[\text{KL}[q({z | x})||p_{d}({z| x})]],\
\end{aligned}
\end{equation}

\noindent with $\mathbb{H}[\cdot]$ denoting the Shannon entropy.  Note that since $\text{KL}[q({z | x})||p_{d}({z| x})] \geq 0 = \text{KL}[q^*({z | x})||p_{d}({z| x})]$ the $d^*$ maximizing the $L_d(q,d)$ can be found as the maximizer of $L_d(q^*,d)$. Based on that we set $q=q^*(d)$ in \ref{eq:supp_L_d_imposed2} and find the expression of $d$ that maximizes the objective $L_d(q^*(d),d)$ as:

\begin{equation}  \label{supp_L_d_imposed_optimal}
\begin{aligned}
    L_d(q^*(d),d) 
    &= \mathbb{E}_{p_{\text{data}}}[ \log p_{\text{data}}(x)] \\
    &-\text{KL}[p_{\text{data}}(x) || p_{d}({x})] - \gamma \cdot \mathbb{H}[p_{d}({x})] \\
    & -\cancelto{0}{\mathbb{E}_{p_{\text{data}}}[\text{KL}[q^*({z | x})||p_{d}({z| x})]]} - \gamma \cdot \cancelto{0}{\mathbb{E}_{p_d}[\text{KL}[q^*({z | x})||p_{d}({z| x})]]},
\end{aligned}
\end{equation}

\noindent as the $\mathbb{E}_{p_{\text{data}}}[ \log p_{\text{data}}(x)]$ is fixed given a distribution $p_{\text{data}}(x)$ while the $\text{KL}[\cdot||\cdot]$ and $\mathbb{H}[\cdot]$ are non-negative, we can derive the maximizer $d^*$ according to \eqref{eq:supp_optimal_d_imposed}. Based on that and according to  Lemma~\ref{supp_lemma_1},

\begin{equation}\label{eq:NE_ineq_imposed}
\begin{split}
 L_q(q(d^*),d^*) &\leq L_q(q^*(d^*),d^*) \;\; \text{for all} \;\; q, \\
 L_d(q^*(d),d) &\leq L_q(q^*(d^*),d^*) \;\; \text{for all} \;\; d, \\
\end{split}
\end{equation}

\noindent and therefore we conclude that the pair $q^*$ and $d^*$ such that:

\begin{equation}\label{eq:NE_imposed}
\begin{split}
&q^*(z|x)= p_{d^*}(z|x), \\
d^* &\in \argmin_{d} \left\{\text{KL}[p_\text{data}(x) || p_{d}(x)] + \gamma \cdot \mathbb{H}[p_{d}(x)] \right \}.
\end{split}
\end{equation}

\noindent is a NE of the \eqref{eq:supp_sIntroVAE_game_imposed}.
\end{proof}

\noindent We refer the readers to the original work by \citet{daniel2021soft} for the proof that for any $p_\text{data}(x)$ there always exists $\gamma > 0$ such that the assumption \ref{supp_assump_1} holds for $p_{d^*}(x)$.
\\\\

\subsection{S-IntroVAE under a trainable prior}\label{supp_ne_learnable_prior}

Let $\Lambda$ denote the set of possible parameterizations of the prior distributions. We now assume that the prior $p_z(z)$ is learnable and henceforth is denoted as $p_\lambda(z)$ with $\lambda \in \Lambda$ while the generated distribution under that prior is $p_d^\lambda(x) = \mathbb{E}_{p_\lambda(z)}p_d(x|z)$. Consequently the adversarial game \eqref{eq:supp_sIntroVAE_game_imposed} is modified as:

\begin{equation}
\label{eq:supp_sIntroVAE_game_learable}
\begin{aligned}
    L_q(\lambda,q,d) &= \mathbb{E}_{p_{\text{data}}}\left[ W(x;\lambda,q,d)\right] - \mathbb{E}_{p_{d}^\lambda}\left[ \frac{1}{\alpha} \cdot \exp(\alpha \cdot W(x;\lambda,q,d))\right], \\
    L_d(\lambda,q,d) &= \mathbb{E}_{p_{\text{data}}}\left[ W(x;\lambda,q,d)\right] + \gamma \cdot \mathbb{E}_{p_d^\lambda}\left[ W(x;\lambda,q,d)\right].
\end{aligned}
\end{equation}

\subsubsection{Prior--encoder cooperation}
\label{sec:prior_encoder}

\noindent Here we conjecture the infeasibility of learning the prior in collaboration with the encoder while maintaining the same NE of the S-IntroVAE. Intuitively, this formulation seeks to find the optimal prior as the balance between maximizing the real ELBO and minimizing the fake $\exp(\text{ELBO})$. \\

\noindent Similarly, the definition in \eqref{eq:supp_optimal_d_imposed} is modified as:

\begin{equation}
\label{eq:supp_optimal_d_p_learnable}
d^*(\lambda) \in \argmin_{d} \left\{\text{KL}[p_\text{data}(x) || p_{d}^{\lambda}(x)] + \gamma \cdot \mathbb{H}[p_{d}^{\lambda}(x)] \right \},
\end{equation}

\noindent to account for the parameterized prior. Let $p_d^\lambda(x)$ a discrete distribution of sample size $N$ and $e$'s non-negative real numbers realizing the unnormalized probability masses of $p_d^\lambda(x)$ distribution such that the likelihood of sample $x_k$ is calculated as:

\begin{equation}\label{eq:probability_mass}
\ p_d^\lambda(x_k)= \frac{e_k}{\sum\limits_{j=1}^{N} e_j}.
\end{equation}

\noindent Let us define the entropy $\mathbb{H}[p_d^\lambda(x)]$ and the $\alpha$-order regularization\footnote{The $\alpha$ hyperparameter is the same used in  \eqref{eq:supp_sIntroVAE_game_imposed}} $\mathbb{A}[p_d^\lambda(x)]$ as:

\begin{subequations}  \label{entropy_alpha_regulariations}
\begin{align}
 \mathbb{H}[p_d^\lambda(x)] &= - \sum\limits_{i=1}^{N} p_d^\lambda(x_i) \cdot \log{(p_d^\lambda(x_i))}, \label{eq:entropy} \\
 \mathbb{A}[p_d^\lambda(x)] &=\sum\limits_{i=1}^{N} p_d^\lambda(x_i) \cdot [p_d^\lambda(x_i)]^{\alpha}. \label{eq:A}
\end{align}
\end{subequations}

\begin{lemma}\label{supp_lemma_2}{Minimizing $\mathbb{H}[p_d^\lambda(e)]$ with respect to mass $e_k$ requires a positive(negative) update if $\log{e_k}$ is larger(smaller) than $\mathbb{E}[\log{e}]$.}
\end{lemma}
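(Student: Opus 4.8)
The plan is to differentiate the entropy $\mathbb{H}[p_d^\lambda(x)]$ directly with respect to the unnormalized mass $e_k$ and read off the sign of the resulting gradient. Since a gradient step for \emph{minimizing} $\mathbb{H}$ updates $e_k \leftarrow e_k - \eta\,\partial\mathbb{H}/\partial e_k$ with $\eta > 0$, a \emph{positive} update of $e_k$ corresponds exactly to $\partial\mathbb{H}/\partial e_k < 0$ and a \emph{negative} update to $\partial\mathbb{H}/\partial e_k > 0$; so it suffices to show that $\partial\mathbb{H}/\partial e_k$ has the sign of $\mathbb{E}[\log e] - \log e_k$.

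First I would express the entropy in terms of the masses. Writing $S=\sum_{j=1}^{N} e_j$ and $T=\sum_{i=1}^{N} e_i\log e_i$, and substituting $p_d^\lambda(x_i)=e_i/S$ from Eq.~\eqref{eq:probability_mass} into the definition \eqref{eq:entropy}, one gets
\begin{equation*}
\mathbb{H}[p_d^\lambda(x)] = -\sum_{i=1}^{N}\frac{e_i}{S}\bigl(\log e_i-\log S\bigr) = \log S - \frac{T}{S}.
\end{equation*}
I would also record the identity $\mathbb{E}[\log e] = \sum_{i=1}^{N} p_d^\lambda(x_i)\log e_i = T/S$, so that the threshold appearing in the statement is precisely $T/S$ (the expectation being taken under $p_d^\lambda$).

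Next I would apply the chain and quotient rules, using $\partial S/\partial e_k = 1$ and $\partial T/\partial e_k = \log e_k + 1$:
\begin{equation*}
\frac{\partial \mathbb{H}}{\partial e_k} = \frac{1}{S} - \frac{(\log e_k+1)S - T}{S^2} = \frac{1}{S}\left(\frac{T}{S}-\log e_k\right) = \frac{1}{S}\bigl(\mathbb{E}[\log e]-\log e_k\bigr).
\end{equation*}
Because $S=\sum_j e_j>0$, the sign of $\partial\mathbb{H}/\partial e_k$ equals the sign of $\mathbb{E}[\log e]-\log e_k$. Hence $\log e_k > \mathbb{E}[\log e]$ gives $\partial\mathbb{H}/\partial e_k<0$ and thus a positive update, while $\log e_k < \mathbb{E}[\log e]$ gives $\partial\mathbb{H}/\partial e_k>0$ and thus a negative update, which is the claim.

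There is no genuinely hard step here; the only care needed is in the quotient-rule bookkeeping and in noticing that the relevant expectation is taken under $p_d^\lambda$ so that it collapses to $T/S$. If one wants the statement to concern the actual finite update rather than the infinitesimal one, I would add the remark that for a small enough learning rate the first-order term dominates the Taylor expansion of $\mathbb{H}$, so $\mathrm{sign}(\Delta e_k) = \mathrm{sign}(-\partial\mathbb{H}/\partial e_k)$, and the conclusion carries over verbatim.
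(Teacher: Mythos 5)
Your proposal is correct and follows essentially the same route as the paper: both directly differentiate $\mathbb{H}$ with respect to the unnormalized mass $e_k$ and arrive at the identical expression $\frac{\partial \mathbb{H}}{\partial e_k}=\frac{1}{\sum_j e_j}\bigl(\mathbb{E}[\log e]-\log e_k\bigr)$, then read off the sign of the gradient-descent update. Your preliminary rewriting of the entropy as $\log S - T/S$ merely streamlines the quotient-rule bookkeeping that the paper carries out term by term with explicit cancellations.
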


\begin{proof}\footnote{The proof was originally provided by \citet{ji2021properties}}According to the definitions Eqs. \eqref{eq:entropy} and \eqref{eq:probability_mass}, the entropy can be developed with respect to the probability masses $e$'s as:

\begin{equation}\label{eq: entropy_with_masses}
\begin{aligned}
 \mathbb{H}[e] &= - \sum\limits_{i=1}^{N} \frac{e_i}{\sum\limits_{j=1}^{N} e_j} \cdot \log{ \left( \frac{e_i}{\sum\limits_{j=1}^{N} e_j} \right) }. 
\end{aligned}
\end{equation}

\noindent Based on \eqref{eq: entropy_with_masses}, the derivative of $\mathbb{H}[e]$ with respect to the mass $e_k$ can be computed as:

\begin{equation}\label{eq:eq_entropy_derivative_mass}
\begin{aligned}
 \frac{\partial \mathbb{H}}{\partial e_k}[e] &= -\frac{\partial}{\partial e_k} \left( \sum\limits_{i=1}^{N} \frac{e_i}{\sum\limits_{j=1}^{N} e_j} \cdot \log{\left( \frac{e_i}{\sum\limits_{j=1}^{N} e_j} \right)} \right) \\ 
 &= -\frac{\partial}{\partial e_k} \left ( \frac{e_k}{\sum\limits_{j=1}^{N} e_j} \cdot \log{ \left( \frac{e_k}{\sum\limits_{j=1}^{N} e_j} \right)} + \sum\limits_{\substack{i=1\\i\neq k}}^{N} \frac{e_i}{\sum\limits_{j=1}^{N} e_j} \cdot \log{ \left( \frac{e_i}{\sum\limits_{j=1}^{N} e_j} \right)} \right)  \\ 
&= -\frac{\sum\limits_{j=1}^{N} e_j - e_k}{({\sum\limits_{j=1}^{N} e_j})^2} \cdot \log{\left(\frac{e_k}{\sum\limits_{j=1}^{N} e_j}\right)} - \frac{\sum\limits_{j=1}^{N} e_j - e_k}{({\sum\limits_{j=1}^{N} e_j})^2} - \sum\limits_{\substack{i=1\\i\neq k}}^{N} \left( \frac{e_i}{({\sum\limits_{j=1}^{N} e_j})^2} \cdot \log{\left( \frac{e_i}{\sum\limits_{j=1}^{N} e_j}\right)} - \frac{e_i}{({\sum\limits_{j=1}^{N} e_j})^2} \right) \\
&= -\frac{\sum\limits_{\substack{j=1\\j\neq k}}^{N}e_j}{({\sum\limits_{j=1}^{N} e_j})^2} \cdot \log{\left(\frac{e_k}{\sum\limits_{j=1}^{N} e_j}\right)} -  \cancel{\frac{\sum\limits_{\substack{j=1\\j\neq k}}^{N} e_j}{({\sum\limits_{j=1}^{N} e_j})^2}} - \sum\limits_{\substack{i=1\\i\neq k}}^{N} \left ( \frac{e_i}{({\sum\limits_{j=1}^{N} e_j})^2} \cdot \log{\left(\frac{e_i}{\sum\limits_{j=1}^{N} e_j}\right)}\right ) + \cancel{\frac{\sum\limits_{\substack{i=1\\i\neq k}}^{N}e_i}{({\sum\limits_{j=1}^{N} e_j})^2}}  \\ 
&=-\sum\limits_{\substack{i=1\\i\neq k}}^{N} \left ( \frac{e_i}{({\sum\limits_{j=1}^{N} e_j})^2} \cdot \log{\left(\frac{e_k}{e_i}\right)}\right) =\sum\limits_{\substack{i=1\\i\neq k}}^{N} \left ( \frac{e_i}{({\sum\limits_{j=1}^{N} e_j})^2} \cdot \log{\left(\frac{e_i}{e_k}\right)} \right) \\
&= \sum\limits_{i=1}^{N} \left ( \frac{e_i}{({\sum\limits_{j=1}^{N} e_j})^2} \cdot \log{\left(\frac{e_i}{e_k}\right)}\right ) - \cancelto{0}{\frac{e_k}{({\sum\limits_{j=1}^{N} e_j})^2} \cdot \log{\left(\frac{e_k}{e_k}\right)}} \\ 
 &= \frac{1}{{\sum\limits_{j=1}^{N} e_j}}\cdot \sum\limits_{i=1}^{N} \left ( \frac{e_i}{{\sum\limits_{j=1}^{N} e_j}} \cdot \log{\left(\frac{e_i}{e_k}\right)} \right) = \frac{1}{{\sum\limits_{j=1}^{N} e_j}}\cdot \left( \sum\limits_{i=1}^{N}  \left( \frac{e_i}{{\sum\limits_{j=1}^{N} e_j}} \cdot \log{e_i} \right) - \log{e_k} \right). \\
\end{aligned}
\end{equation}

\noindent The update towards minimizing the entropy regularization reads as
$e_k'=(e_k-\eta\cdot\frac{\partial \mathbb{H}}{\partial e_k}[e])^+$. According to \eqref{eq:eq_entropy_derivative_mass}, the update $-\eta\cdot\frac{\partial \mathbb{H}}{\partial e_k}[e]$ of mass $e_k$ is positive if $\;\log{e_k}$ is larger than $\mathbb{E}[\log{e}]$ and vice versa.
\end{proof}

\begin{lemma}\label{supp_lemma_3}{Minimizing $\mathbb{A}[p_d^\lambda(e)]$ with respect to mass $e_k$ requires a negative(positive) update if $e_k^{\alpha}$ is larger(smaller) than $\mathbb{E}[e_k^{\alpha}]$.}
\end{lemma}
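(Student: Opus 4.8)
The plan is to mirror the proof of Lemma~\ref{supp_lemma_2} almost verbatim, replacing the entropy functional by $\mathbb{A}$, since both lemmas are statements about the sign of a coordinate-wise gradient of a symmetric functional of the unnormalized masses. First I would rewrite $\mathbb{A}$ explicitly in terms of the masses $e$: using Eqs.~\eqref{eq:probability_mass} and \eqref{eq:A}, and noting that $\mathbb{A}[p_d^\lambda(x)]=\sum_{i=1}^N p_d^\lambda(x_i)\cdot[p_d^\lambda(x_i)]^\alpha=\sum_{i=1}^N [p_d^\lambda(x_i)]^{\alpha+1}$, we substitute $p_d^\lambda(x_i)=e_i/S$ with $S:=\sum_{j=1}^N e_j$ to obtain $\mathbb{A}[e]=S^{-(\alpha+1)}\sum_{i=1}^N e_i^{\alpha+1}$.

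Next I would differentiate with respect to a single mass $e_k$. Writing $T:=\sum_{i=1}^N e_i^{\alpha+1}$ so that $\mathbb{A}[e]=S^{-(\alpha+1)}T$, the product and chain rules together with $\partial S/\partial e_k=1$ and $\partial T/\partial e_k=(\alpha+1)e_k^\alpha$ give
\begin{align*}
\frac{\partial \mathbb{A}}{\partial e_k}[e] = (\alpha+1)\,S^{-(\alpha+1)}\left( e_k^\alpha - \frac{\sum_{i=1}^N e_i^{\alpha+1}}{S}\right).
\end{align*}
The key step is then to recognize the subtracted term as an expectation under $p_d^\lambda$: indeed $\frac{\sum_{i=1}^N e_i^{\alpha+1}}{S}=\sum_{i=1}^N\frac{e_i}{S}\cdot e_i^\alpha=\sum_{i=1}^N p_d^\lambda(x_i)\cdot e_i^\alpha=\mathbb{E}[e^\alpha]$, so that $\frac{\partial \mathbb{A}}{\partial e_k}[e]=(\alpha+1)\,S^{-(\alpha+1)}\bigl(e_k^\alpha-\mathbb{E}[e^\alpha]\bigr)$. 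Since the prefactor $(\alpha+1)\,S^{-(\alpha+1)}$ is strictly positive, the sign of the gradient is exactly that of $e_k^\alpha-\mathbb{E}[e^\alpha]$.

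To conclude, exactly as in Lemma~\ref{supp_lemma_2} the projected gradient-descent step is $e_k'=(e_k-\eta\cdot\tfrac{\partial \mathbb{A}}{\partial e_k}[e])^+$, so the signed increment $-\eta\cdot\tfrac{\partial \mathbb{A}}{\partial e_k}[e]$ is negative whenever $e_k^\alpha>\mathbb{E}[e^\alpha]$ and positive whenever $e_k^\alpha<\mathbb{E}[e^\alpha]$, which is the claim. I do not anticipate a real obstacle here --- the computation is strictly shorter than that of Lemma~\ref{supp_lemma_2} because no telescoping of cross terms is needed. The only points that deserve a line of care are (i) reading the statement's $\mathbb{E}[e_k^\alpha]$ as the expectation of $e^\alpha$ over $p_d^\lambda$, a quantity that does not actually depend on $k$, and (ii) noting that the $(\cdot)^+$ projection only activates at the boundary $e_k=0$ and hence does not affect the sign conclusion for the interior updates.
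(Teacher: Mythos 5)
Your proposal is correct and follows essentially the same route as the paper: both express $\mathbb{A}[e]=\sum_i (e_i/\sum_j e_j)^{\alpha+1}$, differentiate with respect to $e_k$ to obtain $\frac{\partial \mathbb{A}}{\partial e_k}[e]=\frac{\alpha+1}{(\sum_j e_j)^{\alpha+1}}\bigl(e_k^\alpha-\mathbb{E}[e^\alpha]\bigr)$, and read off the sign of the projected gradient step. Your product-rule shortcut via $S^{-(\alpha+1)}T$ is marginally tidier than the paper's term-by-term quotient-rule expansion, but it is the same decomposition and the same conclusion.
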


\begin{proof} According to the definitions Eqs. \eqref{eq:A} and \eqref{eq:probability_mass}, the $\alpha$-order regularization can be developed with respect to the probability masses $e$'s as:

\begin{equation}\label{eq: A_with_masses}
\begin{aligned}
 \mathbb{A}[e] &= \sum\limits_{i=1}^{N} \frac{e_i}{\sum\limits_{j=1}^{N} e_j} \cdot  \left( \frac{e_i}{\sum\limits_{j=1}^{N} e_j} \right)^\alpha = \sum\limits_{i=1}^{N} \left( \frac{e_i}{\sum\limits_{j=1}^{N} e_j} \right)^{(\alpha+1)}. 
\end{aligned}
\end{equation}

\noindent Based on \eqref{eq: A_with_masses}, the derivative of $\mathbb{A}[e]$ with respect to the mass $e_k$ can be computed as:

\begin{equation}\label{eq:A_derivative_mass}
\begin{aligned}
 \frac{\partial \mathbb{A}}{\partial e_k}[e] &= \frac{\partial}{\partial e_k} \left ( \sum\limits_{i=1}^{N} \left( \frac{e_i}{\sum\limits_{j=1}^{N} e_j} \right)^{(\alpha+1)} \right ) = \frac{\partial}{\partial e_k} \left (\left( \frac{e_k}{\sum\limits_{j=1}^{N} e_j} \right)^{(\alpha+1)} + \sum\limits_{\substack{i=1\\i\neq k}}^{N}  \left( \frac{e_i}{\sum\limits_{j=1}^{N} e_j} \right)^{(\alpha+1)}  \right )  \\ &= \frac{(\alpha+1)\cdot e_k^\alpha\cdot(\sum\limits_{j=1}^{N} e_j)^{(\alpha+1)}}{(\sum\limits_{j=1}^{N} e_j)^{2\cdot(\alpha+1)}} -  \frac{(\alpha+1)\cdot e_k^{(\alpha+1)}\cdot(\sum\limits_{j=1}^{N} e_j)^\alpha}{(\sum\limits_{j=1}^{N} e_j)^{2\cdot(\alpha+1)}} -  \sum\limits_{\substack{i=1\\i\neq k}}^{N} \left ( \frac{(\alpha+1)\cdot e_i^{(\alpha+1)}\cdot(\sum\limits_{j=1}^{N} e_j)^\alpha}{(\sum\limits_{j=1}^{N} e_j)^{2\cdot(\alpha+1)}} \right) \\ &= (a+1)\cdot\left( \frac{e_k^\alpha}{(\sum\limits_{j=1}^{N} e_j)^{(\alpha+1)}} - \sum\limits_{i=1}^{N} \left ( \frac{e_i^{(\alpha+1)}}{(\sum\limits_{j=1}^{N} e_j)^{(\alpha+2)}} \right) \right) = \frac{(a+1)}{(\sum\limits_{j=1}^{N} e_j)^{(\alpha+1)}}\cdot\left( e_k^\alpha - \sum\limits_{i=1}^{N} \left ( \frac{e_i}{\sum\limits_{j=1}^{N} e_j} \cdot   e_i^{\alpha} \right) \right) 
\end{aligned}
\end{equation}

\noindent Similarly to the entropy minimization case, the update towards minimizing the $\alpha$-order regularization reads as
$e_k'=(e_k-\eta\cdot\frac{\partial \mathbb{A}}{\partial e_k}[e])^+$. According to \eqref{eq:A_derivative_mass}, the update $-\eta\cdot\frac{\partial \mathbb{A}}{\partial e_k}[e]$ of mass $e_k$ is negative if $\;e_k^{\alpha}$ is larger than $\mathbb{E}[e^{\alpha}]$ and vice versa.
\end{proof}

\begin{equation*}
d^*(\lambda) \in \argmin_{d} \left\{\text{KL}[p_\text{data}(x) || p_{d}^{\lambda}(x)] + \gamma \cdot \mathbb{H}[p_{d}^{\lambda}(x)] \right \},
\tag{\ref{eq:supp_optimal_d_p_learnable} revisited}
\end{equation*}

\begin{lemma}\label{supp_lemma_4}For  $q^*=p_{d}^\lambda(z|x)$, the $d^*$ maximizing the $L_d(\lambda,q^*,d)$ satisfies \eqref{eq:supp_optimal_d_p_learnable}.
\end{lemma}
\begin{proof}
Similar to Theorem~\ref{supp_theorem_1}, we develop the $L_d(\lambda,q,d)$ as:

\begin{equation}\label{eq:supp_L_d_p_e_cooperation}
\begin{aligned}
    L_d(\lambda,q,d)
    &= \mathbb{E}_{p_{\text{data}}}[ \log p_{\text{data}}(x)] \\
    &-\text{KL}[p_{\text{data}}(x) || p_{d}^\lambda({x})] - \gamma \cdot \mathbb{H}[p_{d}^\lambda({x})] \\
    & -\mathbb{E}_{p_{\text{data}}}[\text{KL}[q({z | x})||p_{d}^\lambda({z| x})]] - \gamma \cdot  \mathbb{E}_{p_{d}^\lambda}[\text{KL}[q({z | x})||p_{d}^\lambda({z| x})]],\
\end{aligned}
\end{equation}

\noindent with $\mathbb{H}[\cdot]$ denoting the Shannon entropy.  Note that since $\text{KL}[q({z | x})||p_{d}^{\lambda}({z| x})] \geq 0 = \text{KL}[q^*({z | x})||p_{d}^{\lambda}({z| x})]$ the $d^*$ maximizing the $L_d(\lambda,q,d)$ can be found as the maximizer of $L_d(\lambda,q^*,d)$. Based on that we set $q=q^*({\lambda},d)$ in \ref{eq:supp_L_d_p_e_cooperation} and find the $d^*$ that maximizes the objective $L_d(\lambda,q^*(\lambda,d),d)$ as:

\begin{equation}\label{eq:supp_L_d_p_e_cooperation_optimal}
\begin{aligned}
    L_d(\lambda,q^*(\lambda,d),d) 
    &= \mathbb{E}_{p_{\text{data}}}[ \log p_{\text{data}}(x)] \\
    &-\text{KL}[p_{\text{data}}(x) || p_{d}^{\lambda}({x})] - \gamma \cdot \mathbb{H}[p_{d}^{\lambda}({x})] \\
    &-\cancelto{0}{\mathbb{E}_{p_{\text{data}}}[\text{KL}[q^*({z | x})||p_{d}^{\lambda}({z| x})]]} - \gamma \cdot \cancelto{0}{\mathbb{E}_{p_d}[\text{KL}[q^*({z | x})||p_{d}^{\lambda}({z| x})]]}.
\end{aligned}
\end{equation}

\noindent Based on \eqref{eq:supp_L_d_p_e_cooperation_optimal}, we can derive the maximizer $d^*$ according to \eqref{eq:supp_optimal_d_p_learnable}.
\end{proof}

\noindent Let us now define:

\begin{equation} \label{eq:optimal_lambda_p_e_cooperation}
\lambda^*(d) \in \argmin_{\lambda} \left\{\text{KL}[p_\text{data}(x) || p_{d}^\lambda(x)] + \frac{1}{\alpha} \cdot \mathbb{A}[p_d^\lambda(x)] \right \}.
\end{equation}

\begin{lemma}\label{supp_lemma_5} Assuming that $[p_{d}^\lambda(x)]^{\alpha+1} \leq p_{data}(x)$ for all $x$ such that $p_{data}(x) \geq 0 $, for  $q^*=p_{d}^\lambda(z|x)$, the $\lambda^*$ maximizing the $L_q(\lambda,q^*,d)$ satisfies \eqref{eq:optimal_lambda_p_e_cooperation}.
\end{lemma}
\begin{proof}

Given the trainable prior $p_\lambda(z)$ the $L_q(\lambda,q,d)$ becomes:

\begin{equation}\label{eq:L_q_p_encoder_cooperation}
  \begin{aligned}
     L_q(\lambda,q,d) &=  \sum_x p_{\text{data}}(x) ( \log p_d^\lambda(x) - \text{KL}[q(z|x) || p_d^\lambda(z|x)] \\
     & - \frac{1}{\alpha} \cdot [p_{d}^{\lambda}(x)]^{\alpha+1}\cdot\exp(- \alpha \cdot \text{KL}[q(z|x) || p_d^\lambda(z|x)]) ).
  \end{aligned}
\end{equation}

\noindent Let $q^*(z|x)= p_d^\lambda(z|x)$, the objective $L_q(\lambda,q^*,d)$ reads as:

\begin{equation}\label{eq:L_q_p_encoder_cooperation_optimal}
  \begin{aligned}
     L_q(\lambda,q^*,d) & =  \sum_x p_{\text{data}}(x) \cdot ( \log p_{d}^\lambda(x) - \cancelto{0}{\text{KL}[q^*(z|x) || p_{d}^\lambda(z|x)]})\\
     & - \frac{1}{\alpha} \cdot [p_{d}^{\lambda}(x)]^{\alpha+1} \cdot  \cancelto{1}{\exp(- \alpha \text{KL}[q^*(z|x) || p_{d}^\lambda(z|x)])}\\
     & = \sum_x p_{\text{data}}(x) \cdot  \log p_{d}^\lambda(x)  - \frac{1}{\alpha} \cdot  [p_{d}^{\lambda}(x)]^{\alpha+1} \\
      & = \sum_x p_{\text{data}}(x) \cdot  ( \log p_{d}^\lambda(x) -\log p_\text{data}(x) + \log p_\text{data}(x))  - \frac{1}{\alpha} \cdot [p_{d}^{\lambda}(x)]^{\alpha+1} \\
      &= \sum_x p_{\text{data}}(x) \cdot \log \frac{p_{d}^\lambda(x)}{p_\text{data}(x)} \\
      &+\sum_x p_{\text{data}}(x) \cdot  \log p_\text{data}(x) 
      - \frac{1}{\alpha} \cdot  \sum_x  [p_{d}^{\lambda}(x)]^{\alpha+1} \\ 
      &= -\sum_x p_{\text{data}}(x) \cdot \log \frac{p_\text{data}(x)}{p_{d}^\lambda(x)}\\
      &+\sum_x p_{\text{data}}(x) \cdot  \log p_\text{data}(x)
      - \frac{1}{\alpha} \cdot  \sum_x p_{d}^\lambda(x) \cdot  [p_{d}^\lambda(x)]^{\alpha} \\ 
      &= -\text{KL}[p_{\text{data}}(x) || p_{d}^\lambda(x)]\\
      &+\mathbb{E}_{p_{\text{data}}}[ \log p_{\text{data}}(x)]
      - \frac{1}{\alpha}\cdot \mathbb{A}[p_d^\lambda(x)]\\ 
  \end{aligned}
\end{equation}

\noindent Based on \eqref{eq:L_q_p_encoder_cooperation_optimal}, we observe that the $\mathbb{E}_{p_{\text{data}}}[ \log p_{\text{data}}(x)]$ is fixed given $p_{\text{data}}$ while $\mathbb{A}[\cdot]$ is non-negative, therefore we can derive the maximizer $\lambda^*$ according to \ref{eq:optimal_lambda_p_e_cooperation}.
\end{proof}

\begin{equation*}
d^*(\lambda) \in \argmin_{d} \left\{\text{KL}[p_\text{data}(x) || p_{d}^{\lambda}(x)] + \gamma \cdot \mathbb{H}[p_{d}^{\lambda}(x)] \right \},
\tag{\ref{eq:supp_optimal_d_p_learnable} revisited}
\end{equation*}

\noindent Lemmas~\ref{supp_lemma_2} and \ref{supp_lemma_3} suggest that minimizing the entropy and the $\alpha$-order push towards the Dirac and uniform distributions respectively. Based on that and the minimization objectives of the $d$ and $\lambda$
players, we formulate a conjecture on the incompatibility of prior--encoder cooperation.

\begin{conjecture}\label{supp_conjecture}
When training the prior player $\lambda$ in cooperation with the encoder player $q$ (i.e.\ to maximize the same $L_q(\lambda,q,d)$ objective), there does not exist $\lambda^*$ such that the triplet $\lambda^*$, $q^*$ satisfiying $q^*(z|x) = p_{d^*}^{\lambda^*}(z|x)$ and $d^*$ as defined in \eqref{eq:supp_optimal_d_p_learnable} constitutes
a NE of the game \eqref{eq:supp_sIntroVAE_game_learable}, under the assumption that $p_{d^*}^{\lambda^*}(x, z) \neq p_{d^*}^{\lambda^*}(x) \cdot p_{d^*}^{\lambda^*}(z)$.
\end{conjecture}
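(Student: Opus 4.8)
\begin{sketchproof}
The plan is to argue by contradiction. Suppose the triplet $(\lambda^*,q^*,d^*)$ with $q^*=p_{d^*}^{\lambda^*}(z|x)$ is a NE of the prior--encoder--cooperation game \eqref{eq:supp_sIntroVAE_game_learable}; I will extract the two optimality conditions that $\lambda^*$ and $d^*$ must satisfy and show they can hold at a common point only when $x\perp z$ under $p_{d^*}^{\lambda^*}$, contradicting the stated assumption $p_{d^*}^{\lambda^*}(x,z)\neq p_{d^*}^{\lambda^*}(x)\,p_{d^*}^{\lambda^*}(z)$.

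First I would reduce the NE to two argmin characterizations of the generated marginal. Since $\lambda$ and $q$ cooperate, $(\lambda^*,q^*)$ jointly maximizes $L_q(\cdot,\cdot,d^*)$; writing this as $\max_{\lambda}\max_q$, the inner problem gives $q^*=p_{d^*}^{\lambda^*}(z|x)$ (the learnable-prior analogue of Lemma~\ref{supp_lemma_1}, applicable because $q^*=p_{d^*}^{\lambda^*}(z|x)$ being optimal already forces the NE into the ``enclosed'' regime of Proposition~\ref{proposition_1}), and the outer problem gives, by the computation in Lemma~\ref{supp_lemma_6}, that $\lambda^*$ minimizes $\text{KL}[p_\text{data}(x)\,||\,p_{d^*}^{\lambda}(x)]+\tfrac1\alpha\mathbb{A}[p_{d^*}^{\lambda}(x)]$ over $\lambda$. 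For the decoder I would reuse the argument in the proof of Theorem~\ref{supp_theorem_1}: in \eqref{eq:supp_L_d_p_e_cooperation} the two posterior-KL terms are non-negative and vanish at $d=d^*$ (because $q^*=p_{d^*}^{\lambda^*}(z|x)$), so $d^*$ maximizes the residual upper bound, i.e.\ $d^*$ minimizes $\text{KL}[p_\text{data}(x)\,||\,p_{d}^{\lambda^*}(x)]+\gamma\,\mathbb{H}[p_{d}^{\lambda^*}(x)]$ over $d$, which is the $\lambda=\lambda^*$ case of Lemma~\ref{supp_lemma_5}.

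Next I would convert both into stationarity conditions for $p^*:=p_{d^*}^{\lambda^*}(x)$ on the probability simplex. Perturbing the decoder with $\lambda^*$ fixed reaches \emph{every} zero-sum direction in $p^*$-space (perturb $p_{d^*}(\cdot\,|\,z_0)$ for any active $z_0$ by an arbitrary zero-sum vector), so the decoder condition forces $\nabla_{p(x)}[\text{KL}[p_\text{data}||p]+\gamma\mathbb{H}[p]]$ to be constant in $x$, that is $\tfrac{p_\text{data}(x)}{p^*(x)}+\gamma\log p^*(x)=\text{const}$ (the familiar entropy-regularized fixed point). Perturbing the prior with $d^*$ fixed reaches only the subspace $\{\sum_z p_{d^*}(x|z)\,u(z):\sum_z u(z)=0\}$, so the prior condition forces $-\tfrac{p_\text{data}(x)}{p^*(x)}+\tfrac{\alpha+1}{\alpha}p^*(x)^{\alpha}$ to be orthogonal to that subspace; substituting the decoder identity for $p_\text{data}(x)/p^*(x)$ and dropping the (orthogonal) additive constant, this says that $\Psi(p^*(x))$, with $\Psi(t):=\gamma\log t+\tfrac{\alpha+1}{\alpha}t^{\alpha}$ strictly increasing on $(0,1]$, has conditional expectation $\mathbb{E}_{p_{d^*}(x|z)}[\Psi(p^*(x))]$ equal across all active components $z$. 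This is precisely the tension flagged by Lemmas~\ref{supp_lemma_2} and~\ref{supp_lemma_3}: the decoder gradient (its $\gamma\mathbb{H}$ part) pushes $p^*$ toward a point mass, the prior gradient (its $\tfrac1\alpha\mathbb{A}$ part) pushes it toward the uniform law, and the only way both can be stationary is if the prior-reachable directions are ``flat'' for the strictly monotone map $\Psi\circ p^*$.

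The main obstacle is the final implication: from ``$\mathbb{E}_{p_{d^*}(x|z)}[\Psi(p^*(x))]$ is constant over active $z$'' one must conclude that $p_{d^*}(\cdot\,|\,z)$ does not depend on the active $z$, i.e.\ $p_{d^*}^{\lambda^*}(x,z)=p_{d^*}^{\lambda^*}(x)\,p_{d^*}^{\lambda^*}(z)$, the negation of the hypothesis. Strict monotonicity of $\Psi$ alone does not close this (distinct conditionals can share the same $\Psi$-mean), so I expect to need either second-order optimality of $d^*$ --- the Hessian of $\text{KL}+\gamma\mathbb{H}$ restricted to zero-sum directions being positive semidefinite bounds the destabilizing curvature $-\gamma/p^*(x)$, forcing $p^*$ to be concentrated, which collides head-on with $\lambda^*$-optimality demanding $\mathbb{A}[p^*]$ small (hence $p^*$ spread out) --- or to use global rather than merely local optimality together with the inclusion of the prior-reachable set of marginals in the decoder-reachable one. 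One must additionally dispatch the boundary cases (inactive prior components, where stationarity degenerates to a KKT inequality) and check that the posited NE really lies in the enclosed ELBO regime so that Lemmas~\ref{supp_lemma_5}--\ref{supp_lemma_6} apply; the fact that this last step resists a fully rigorous argument is exactly why the statement is offered as a conjecture.
\end{sketchproof}
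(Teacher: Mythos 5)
The paper does not actually prove this statement: it is offered as a conjecture, and the only support the authors provide is the combination of Lemma~\ref{supp_lemma_5} and the prior--encoder instance of Lemma~\ref{supp_lemma_6} (characterizing $d^*$ as a minimizer of $\text{KL}+\gamma\,\mathbb{H}$ and $\lambda^*$ as a minimizer of $\text{KL}+\tfrac{1}{\alpha}\mathbb{A}$) together with Lemmas~\ref{supp_lemma_2} and~\ref{supp_lemma_3}, which show that the $\gamma\,\mathbb{H}$ term drives the generated marginal toward a Dirac mass while the $\tfrac{1}{\alpha}\mathbb{A}$ term drives it toward the uniform law. Your sketch follows exactly this route but carries it materially further: you convert the two argmin characterizations into explicit first-order conditions on the simplex, identify the decoder-reachable versus prior-reachable perturbation subspaces, and reduce the whole claim to showing that constancy of $\mathbb{E}_{p_{d^*}(x\mid z)}[\Psi(p^*(x))]$ over active $z$ forces $p_{d^*}^{\lambda^*}(x,z)=p_{d^*}^{\lambda^*}(x)\,p_{d^*}^{\lambda^*}(z)$. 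That reduction is sound, and your admission that the final implication does not follow from strict monotonicity of $\Psi$ alone is precisely the gap the authors also leave open --- which is why the statement is a conjecture rather than a theorem. You have not proved more than the paper does, but you have made its informal tension argument precise and located the missing step more sharply than the authors themselves.

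Two caveats if you pursue this further. First, your claim that decoder perturbations span every zero-sum direction of $p^*$ presumes a fully flexible (nonparametric) decoder; this matches the paper's discrete, fully flexible analysis but should be stated, since for a restricted decoder family the stationarity condition weakens to orthogonality against a smaller cone and the argument changes. Second, substituting the decoder stationarity identity into the prior condition is only licit on the common support $\{x:\,p^*(x)>0\}$; the boundary and KKT degeneracies you flag for inactive prior components also arise for points where $p_\text{data}(x)=0$ but $p^*(x)>0$ (or vice versa), and these are exactly the regimes in which the candidate $q^*=p_{d^*}^{\lambda^*}(z\mid x)$ may fail to be a best response at all (cf.\ Proposition~\ref{supp_proposition_1}). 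Neither caveat invalidates the sketch; both are places where a complete proof, if one exists, would need care.
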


\begin{remark}
The Conjecture~\ref{supp_conjecture} suggests that the prior--encoder cooperation scheme is not a variable option for prior learning in S-IntroVAE, in the sense that it does not share the same NE with its fixed prior counterpart.
\end{remark}

\subsubsection{Prior--decoder cooperation}\label{supp_ne_pd_cooperation}

Here, we consider the same game defined in \eqref{eq:supp_sIntroVAE_game_learable} but under a prior--decoder cooperation scheme where both the prior $\lambda$ and decoder $d$ players maximize the same objective $L_q(\lambda,q,d)$. First, let us extend Lemma \ref{supp_lemma_1} for the trainable prior case.

\begin{lemma}\label{supp_lemma_6} Assuming that $[p_{d}^\lambda(x)]^{\alpha+1} \leq p_{data}(x)$ for all $x$ such that $p_{data}(x) \geq 0 $, the $q^*$ maximizing the $L_q(\lambda,q,d)$ satisfies $q^*(\lambda,d)(z|x) = p_d^\lambda(z|x)$.
\end{lemma}

\begin{proof} 
We develop the $L_q(\lambda,q,d)$ objective as:

\begin{equation}\label{eq:supp_L_q_pd_cooperation}
  \begin{aligned}
    L_q(\lambda,q,d) &= \mathbb{E}_{p_{\text{data}}}\left[ W(x;\lambda,q,d)\right] - \mathbb{E}_{p_d}\left[ \frac{1}{\alpha}\cdot\exp(\alpha \cdot W(x;\lambda,q,d))\right] \\
             &= \mathbb{E}_{p_{\text{data}}}\left[  \log p_{d}^{\lambda}({x}) - \text{KL}[q({z | x})||p_{d}^{\lambda}({z| x})] \right] \\ 
             &- \frac{1}{a} \cdot \mathbb{E}_{p_d^{\lambda}}\left[\exp(\log [p_{d}^{\lambda}({x})]^\alpha 
              - \alpha \cdot \text{KL}[q({z | x})||p_{d}^{\lambda}({z| x})])\right]\\
            &= \mathbb{E}_{p_{\text{data}}}\left[  \log p_{d}^{\lambda}({x}) - \text{KL}[q({z | x})||p_{d}^{\lambda}({z| x})] \right] \\
            &- \frac{1}{a} \cdot \mathbb{E}_{p_{d}^{\lambda}}\left[[p_{d}^{\lambda}({x})]^\alpha \cdot 
               \exp(- \alpha \cdot \text{KL}[q({z | x})||p_{d}^{\lambda}({z|x})])\right]\\
            &= \sum\limits_{x} p_\text{data}(x) \cdot ( \log p_{d}^{\lambda}(x) - \text{KL}[q(z|x) || p_{d}^{\lambda}(z|x)]) - \frac{1}{\alpha} \cdot [p_{d}^{\lambda}(x)]^{\alpha+1} \cdot \exp(- \alpha \cdot \text{KL}[q(z|x) || p_{d}^{\lambda}(z|x)])\\
    \end{aligned}
\end{equation}

\noindent We follow the same reasoning used in Lemma~\ref{supp_lemma_1} and conclude that under the assumption that $[p_{d}^{\lambda}(x)]^{\alpha+1} \leq p_\text{data}(x)$ holds for all $x$ such that $p_\text{data}(x) \geq 0$ $q^*(z|x) = p_d(z|x)) $ is the global maxima of the $L_q(q,d)$, that is:

\begin{equation}\label{eq:supp_Lq_p_d_cooperation_ineq}
\begin{split}
 L_q(\lambda,q(\lambda,d),d) &\leq L_q(\lambda,q^*(\lambda,d),d) \;\; \text{for all} \;\; q. \\
\end{split}
\end{equation}
\end{proof}

\noindent Let us define $\lambda^*$ and $d^*$ as:

\begin{equation} \label{eq:supp_optimal_d_p_d_cooperation}
(\lambda^*,d^*) \in \argmin_{\lambda,d} \left\{\text{KL}[p_{data}(x) || p_{d}^\lambda(x)] + \gamma \cdot \mathbb{H}[p_{{d}}^\lambda(x)]\right \} .
\end{equation}

\noindent Now we also modify the Assumption~\ref{supp_assump_1} as:

\begin{assump}\label{supp_assump_3}
For all $x$ such that $p_\text{data}(x) \geq 0$ we have that 
$[p_{d^*}^{\lambda^*}(x)]^{\alpha+1} \leq p_{data}(x)$.
\end{assump}

\begin{corollary}\label{supp_theorem_4}
Under the Assumption \ref{supp_assump_3}, when training the prior player $\lambda$ in cooperation with the decoder player $d$ then the triplet $q^*=p_{d^*}^{\lambda^*}(z|x)$, $\lambda^*$ and $d^*$ as defined in \eqref{eq:supp_optimal_d_p_d_cooperation} constitutes a NE of the game \eqref{eq:supp_sIntroVAE_game_learable}.
\end{corollary}

\begin{proof}

Similar to Theorem~\ref{supp_theorem_1}, we develop the $L_d(\lambda,q,d)$ as:

\begin{equation}\label{eq:supp_L_d_p_d_cooperation}
\begin{aligned}
    L_d(\lambda,q,d) &=  \mathbb{E}_{p_{\text{data}}}[ \log p_{\text{data}}(x)] \\
    &-\text{KL}[p_{\text{data}}(x) || p_{d}({x})] - \gamma \cdot \mathbb{H}[p_{d}({x})] \\
    & -\mathbb{E}_{p_{\text{data}}}[\text{KL}[q({z | x})||p_{d}^{\lambda}({z| x})]] - \gamma \cdot \mathbb{E}_{p_{d}^{\lambda}(}[\text{KL}[q({z | x})||p_{d}^{\lambda}(({z| x})]],\
\end{aligned}
\end{equation}

\noindent with $\mathbb{H}[\cdot]$ denoting the Shannon entropy.  Note that since $\text{KL}[q({z | x})||p_{d}^{\lambda}({z| x})] \geq 0 = \text{KL}[q^*({z | x})||p_{d}^{\lambda}({z| x})]$ the $(\lambda^*,d^*)$ maximizing the $L_d(\lambda,q,d)$ can be found as the maximizer of $L_d(\lambda,q^*,d)$. Based on that we set $q=q^*({\lambda},d)$ in \ref{eq:supp_L_d_p_d_cooperation} and find the $(\lambda^*,d^*)$ that maximizes the objective $L_d(\lambda,q^*(\lambda,d),d)$ as:

\begin{equation}  \label{supp_L_d_p_d_cooperation_optimal}
\begin{aligned}
    L_d(\lambda,q^*(\lambda,d),d) 
    &= \mathbb{E}_{p_{\text{data}}}[ \log p_{\text{data}}(x)] \\
    &-\text{KL}[p_{\text{data}}(x) || p_{d}({x})] - \gamma \cdot \mathbb{H}[p_{d}({x})] \\
    & -\cancelto{0}{\mathbb{E}_{p_{\text{data}}}[\text{KL}[q^*({z | x})||p_{d}({z| x})]]} - \gamma \cdot \cancelto{0}{\mathbb{E}_{p_d}[\text{KL}[q^*({z | x})||p_{d}({z| x})]]}.
\end{aligned}
\end{equation}

\noindent We can now derive the maximizer $(\lambda^*,d^*)$ according to \eqref{eq:supp_optimal_d_p_d_cooperation}. Based on that and according to  Lemma~\ref{supp_lemma_6},

\begin{equation}\label{eq:NE_ineq_p_d_cooperation}
\begin{split}
 L_q(\lambda,q(\lambda^*,d^*),d^*) &\leq L_q(q^*(\lambda,d^*),d^*) \;\; \text{for all} \;\; q, \\
 L_d(\lambda,q^*(\lambda,d),d) &\leq L_q(q^*(\lambda^*,d^*),d^*) \;\; \text{for all} \;\; \lambda \text{ and } d, \\
\end{split}
\end{equation}

\noindent and therefore we conclude that the triplet $\lambda^*$, $q^*$ and $d^*$ such that:

\begin{equation}\label{eq:NE_p_d_cooperation}
\begin{split}
&q^*(z|x)= p_{d^*}^{\lambda^*}(z|x), \\
(\lambda^*,d^*) &\in \argmin_{\lambda,d} \left\{\text{KL}[p_\text{data}(x) || p_{d}(x)] + \gamma \cdot \mathbb{H}[p_{d}(x)] \right \}.
\end{split}
\end{equation}

\noindent is a NE of the \eqref{eq:supp_sIntroVAE_game_learable}.
\end{proof}

\noindent As the proof of the existence of the $\gamma$ does not assume the nature of the prior, the proof provided by \citet{daniel2021soft} can be trivially extended for our case of $p_{d^*}^{\lambda^*}$ to show that there exists $\gamma$ such that the $p_{d^*}^{\lambda^*}$ with ($\lambda^*$, $d^*$) as defined in \eqref{eq:supp_optimal_d_p_d_cooperation} satisfies the Assumption~\ref{supp_assump_3}.

\subsection{Optimal ELBO in the assumption-free setting}\label{supp_optimal_elbo}

In the previous section, the NE of the S-IntroVAE under the prior--decoder cooperation scheme \eqref{eq:supp_sIntroVAE_game_learable} was analyzed under the Assumptions ~\ref{supp_assump_3}. In practice, however, such an assumption might not always be satisfied, particularly in the early stages of training.  For instance, it is common in adversarial training for the generator/decoder to generate samples of very low quality (i.e. outside of the support of real data distribution) or to experience mode-collapse (i.e. generating some realistic samples at a disproportionately higher frequency compared to the real data distribution). Evidently, both these cases might lead to violations of said assumption. \\\\
Analyzing the behavior of the encoder in the assumption-free setting provides insights into the training dynamics of S-IntroVAE, enabling a better understanding of the method and its relationship to traditional VAEs. Furthermore conducting the analysis with respect to the ELBO $W(x;\lambda,q,d)$ offers a practical tool since the ELBO is comprised of the reconstruction and the $\text{KL}$ divergence losses as opposed to the $\text{KL}[q(z|x)||p_d^\lambda(z|x)]$ term (used in Lemma~\ref{supp_lemma_6}) which is intractable. \\

\noindent Let $\mathbb{X} = \{x| x \in p_\text{data}(x) > 0 \cup p_d^\lambda(x) > 0 \}$
, we define the ELBO $W(x;\lambda, q^*,d)$ as:

\begin{equation} \label{eq:supp_optimal_elbo}
W(x;\lambda,q^*,d) = \begin{cases}
- \infty, & x \in \{ x \in \mathbb{X} \mid p_{\text{data}}(x) = 0 \} \\
\frac{1}{\alpha} \cdot \log \frac{p_\text{data}(x)}{ p_d^\lambda(x)}, & x \in \{ x \in \mathbb{X} \mid p_{\text{data}}(x) > 0  \cap [p_d^\lambda(x)]^{\alpha+1} >  p_\text{data}(x)\} \ \\
\log p_d^\lambda(x) , & x \in \{ x \in \mathbb{X} \mid p_{\text{data}}(x) > 0 \cap   [p_{d}^\lambda(x)]^{\alpha+1}\leq p_\text{data}(x)\} \}  
\end{cases}
\end{equation}

\begin{prop}\label{supp_proposition_1}
Given a fixed generated data distribution $p_d^\lambda(x)$ the $q^*$ maximizing $L_q(\lambda,d,q)$ in \eqref{eq:supp_sIntroVAE_game_learable} is such that the ELBO $W(x;\lambda, q^*,d)$ satisfies \ref{eq:supp_optimal_elbo}.
\end{prop}

\begin{proof}

Similarly to Lemma~\ref{supp_lemma_6}, we develop $L_q(\lambda,q,d)$ as:

\begin{equation}\label{eq:supp_L_q_imposed_elbo}
  \begin{aligned}
    L_q(\lambda,q,d)
            &= \sum\limits_{x} p_\text{data}(x) \cdot ( \log p_d^\lambda(x) - \text{KL}[q(z|x) || p_d^\lambda(z|x)]) - \frac{1}{\alpha} \cdot [p_d^\lambda(x)]^{\alpha+1} \cdot \exp(- \alpha \cdot \text{KL}[q(z|x) || p_d^\lambda(z|x)])\\
            &= \begin{cases}  \sum\limits_{x} p_\text{data}(x) \cdot \left ( \log p_d^\lambda(x) - \text{KL}[q(z|x) || p_d^\lambda(z|x)] - \frac{1}{\alpha} \cdot \frac{[p_d^\lambda(x)]^{\alpha+1}}{p_\text{data}(x)} \cdot \exp(- \alpha \cdot \text{KL}[q(z|x) || p_d^\lambda(z|x)]) \right ) \\ = \sum\limits_{x} G(\lambda,q,d) ,  \quad x \in \{p_{\text{data}}(x) > 0 \} \\ \\
            \sum\limits_{x} \left (- \frac{1}{\alpha} \cdot [p_d^\lambda(x)]^{\alpha+1}\exp(-\alpha \cdot \text{KL}[q(z|x) || p_d^\lambda(z|x)]) \right) \\
            \ = \sum\limits_{x} Q(\lambda,q,d) ,  \quad x \in \{ p_{\text{data}}(x) = 0 \}
    \end{cases}
    \end{aligned}
\end{equation}

\noindent Again, we can find the $q^*$ maximizing $L_q(\lambda,q,d)$ by analyzing the derivatives of the functions $G(\lambda,q,d)$ and $Q(\lambda,q,d)$. In particular, we identify four cases.

\begin{itemize}
    \item  $x \in  \{ x \in \mathbb{X} \mid p_{\text{data}}(x) > 0 \cap [p_d^\lambda(x)]^{\alpha+1} >  p_\text{data}(x)\}$ \\\\
     In this case, the $q^*$ can be found as:

\begin{equation}\label{eq:supp_G_derivative_E_elbo}
  \begin{aligned}
   \frac{\partial G(\lambda,q,d)}{\partial \text{KL}[q(z|x) || p_d^\lambda(z|x)]} = 0 &\Leftrightarrow \\
   p_{\text{data}}(x) \cdot \left( -1 + \frac{[p_d^\lambda(x)]^{a+1}}{p_{\text{data}}(x) } \cdot \exp(- \alpha \cdot \text{KL}[q(z|x) || p_d^\lambda(z|x)]) \right) =0  & \Leftrightarrow \\ 
   \exp(- \alpha \cdot \text{KL}[q(z|x) || p_d^\lambda(z|x)]) = \frac{p_{\text{data}}(x)}{[p_{d}(x)]^{a+1}} &\Leftrightarrow \\ 
   -  \text{KL}[q(z|x) || p_d^\lambda(z|x)] = \frac{1}{\alpha} \cdot \log{\frac{p_{\text{data}}(x)}{[p_d^\lambda(x)]^{a+1}}} &\Leftrightarrow \\
   \log p_d^\lambda(x) -  \text{KL}[q(z|x) || p_d^\lambda(z|x)] = \frac{1}{\alpha} \cdot \log{\frac{p_{\text{data}}(x)}{[p_d^\lambda(x)]^{a+1}}} + \log p_d^\lambda(x)  &\stackrel{\text{\tiny \eqref{eq:supp_ELBO_reformulation}}}{\Leftrightarrow} \\ W(x;\lambda,q,d) = \frac{1}{\alpha} \cdot \log{\frac{p_{\text{data}}(x)}{[p_d^\lambda(x)]^{a+1}}} + \frac{1}{\alpha}\cdot\log [p_d^\lambda(x)]^{\alpha}   &\Leftrightarrow \\ W(x;q,d) = \frac{1}{\alpha} \cdot \log{\frac{p_{\text{data}}(x)}{p_d^\lambda(x)}} &.
\end{aligned}
\end{equation} 

Note that $\frac{\partial G(\lambda,q,d)}{\partial^2 \text{KL}[q(z|x) || p_d^\lambda(z|x)]} =  p_{\text{data}}(x) \cdot \left( - \alpha \cdot \frac{[p_d^\lambda(x)]^{a+1}}{p_{\text{data}}(x) } \cdot \exp(- \alpha \cdot \text{KL}[q(z|x) || p_d^\lambda(z|x)]) \right) \leq 0 $  therefore the $q^*$ such that  $W(x;\lambda,q^*,d) = \frac{1}{\alpha} \cdot \log{\frac{p_{\text{data}}(x)}{p_d^\lambda(x)}}$ is the maximizer of $L_q(\lambda,q,d)$ for  $x \in \{ x \in \mathbb{X} \mid p_{\text{data}}(x) > 0 \cap [p_d^\lambda(x)]^{\alpha+1} > p_\text{data}(x)\}$.

\item  $x \in \{ x \in \mathbb{X} \mid p_{\text{data}}(x) > 0 \cap [p_d^\lambda(x)]^{\alpha+1} \leq {p_\text{data}(x)}$ \\\\
 In this case, the maximizer of $L_q(\lambda,q,d)$ was found in Lemma~\ref{supp_lemma_6} as the $q^*$ such that $\text{KL}[q^*(z|x)||p_d^\lambda(z|x)]=0$. Substracting $\log p_d^\lambda(x)$ to both sides and using \eqref{eq:supp_ELBO_reformulation} we get that the $q^*$ such that $W(x;\lambda,q^*,d)=\log p_d^\lambda(x)$ is the maximizer of $L_q(\lambda,q,d)$ for  $x \in \{ x \in \mathbb{X} \mid p_{\text{data}}(x) > 0 \cap [p_d^\lambda(x)]^{\alpha+1} \leq p_\text{data}(x)\}$.

\item  $x \in \{ x \in \mathbb{X} \mid p_{\text{data}}(x) = 0\}$ \\\\
 In this case, the maximizer of $L_q(\lambda,q,d)$ was found in Lemma~\ref{supp_lemma_6} as the $q^*$ such that $\text{KL}[q^*(z|x)||p_d^\lambda(z|x)]=\infty$. Substracting $\log p_d^\lambda(x)$ to both sides, using \eqref{eq:supp_ELBO_reformulation} and given that $\log p_d^\lambda(x) \leq 0$ we get that the $q^*$ such that $W(x;\lambda,q^*,d) = -\infty$ is the maximizer of $L_q(\lambda,q,d)$ for $x \in \{ x \in \mathbb{X} \mid p_{\text{data}}(x) = 0 \}$.

\item  $x \in \{ x \in \mathbb{X} \mid p_{\text{data}}(x) = 0 \cap p_d^\lambda(x) = 0\} = \emptyset$ \\\\
Note that the $\{p_{\text{data}}(x) = 0 \cap p_d^\lambda(x) = 0\}$ set refers to samples $x$ outside of the support of both real and generated data distributions which are of no practical relevance. In practice, the encoder maximizes the $L_q$ over the expectation of empirical real and generated data distributions, motivating the definition of $\mathbb{X}$ as the union of their supports.
\end{itemize}  
   
\end{proof}

\noindent Interestingly, the ELBO $W(x;\lambda,q^*,d)$ at the optimal $q^*$ is a continuous function with respect to $p_\text{data}(x)$. Additionally, it is revealed that the higher the sample-wise likelihood mismatch between the real $p_\text{data}(x)$ and generated $p_d^\lambda(x)$ data distribution, the lower (more negative) the ELBO $W(x;\lambda,q^*,d)$ is. The aforementioned behavior aligns with our intuition as the encoder in  S-IntroVAE acts as a discriminator.\\\\

\noindent On the other hand, given a fixed $p_d^\lambda(x)$, it can be trivially shown that the encoder of regularly trained VAEs converges to true posterior which is equivalent to $W_\text{VAE}\footnote{We used this notation to distinguish it between the ELBO of the S-IntroVAE which we still refer to that simply as W.}(x;\lambda,q^*,d)=\log p_d^\lambda(x)$. Naturally, these two observations relate the behavior of the encoders of VAEs and S-IntroVAEs where the latter behaves similarly to the former only if $p_d^\lambda(x)$ is sufficiently \textit{enclosed} by the $p_\text{data}(x)$. Given a $p_\text{data}(x)$, the \textit{enclosed} term refers to the generated data distribution $p_d^\lambda(x)$ for which the Assumption~\ref{supp_assump_1} holds. \\

\section{Implementation} \label{supp_soft_clipping}
In this section, we provided the details behind some implementation choices.

\subsection{Adaptive variance soft-clipping}
\label{softclipping}

\begin{figure}[!h]
\centering
\includegraphics[width=0.8\textwidth]{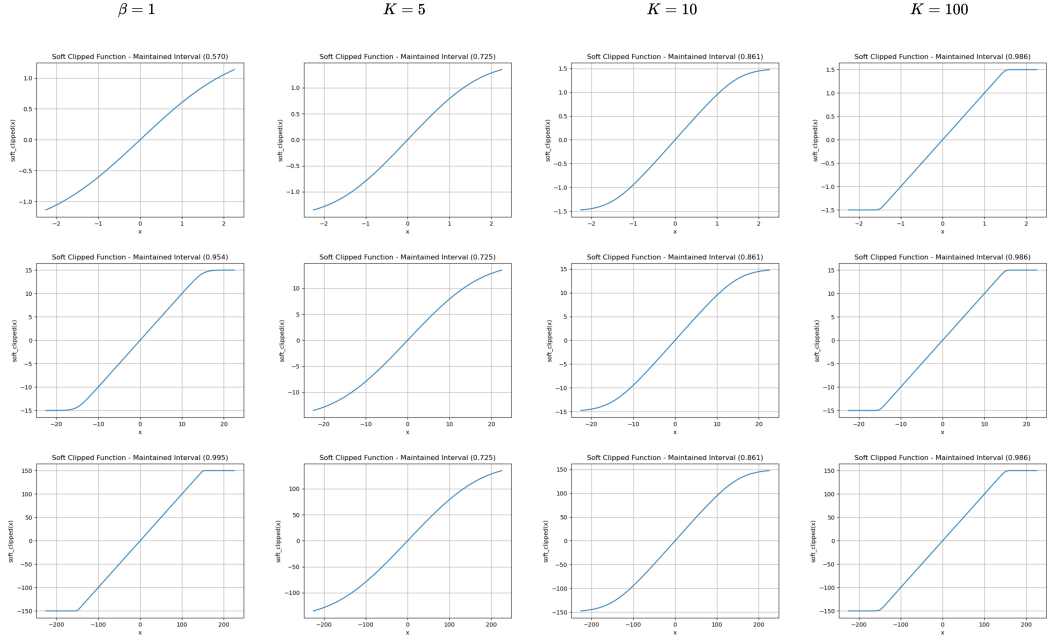}
\caption{The behavior of soft-clipping depends on the clipping range. Note that when using identical $\beta's$ (e.g. $\beta=1$) the clipping behavior changes depending on the range (rows). On the other hand, when formulating the $\beta$ as a function of the range and the $K$ hyperparameter the behavior remains consistent. Increasing the $K$ (columns) leads to retaining a bigger portion of the original clipping range.
}
\label{fig:supp_clipping_K} 
\end{figure}

\noindent The \citep{chang2023latent,chua2018deep} works realize log variance soft-clipping as:

\begin{equation}\label{eq:supp_soft_clipping}
\begin{aligned}
    f_c(\text{logvar}) = \text{logvar} &- \text{softplus(\text{logvar}-b)} + \text{softplus(a - \text{logvar})} \\
    = \text{logvar}  &- \frac{1}{\beta} \cdot \log(1+\exp(\beta \cdot (\text{logvar} - b))) \\
    & + \frac{1}{\beta} \cdot \log(1+\exp(\beta \cdot (a - \text{logvar}))),\
\end{aligned}
\end{equation}

\noindent where $f_c(\text{logvar})$ is the soft-clipped output, $[a,b]$ is the clipping interval and $\beta$ a positive hyperparameter controlling the steepness of softplus function. In these works a pre-specified $[a,b]$ range was used and naturally finding the optimal $\beta$ hyperparameter for softplus is subject to proper fine-tuning. In practice, the default option of $\beta=1$ was used in both studies. \\

\noindent In our case, different clipping intervals are applied to each latent dimension, that is $[a_j,b_j]$ for each $j^\text{th}$ latent dimension. The $[a_j,b_j]$ interval was determined based on the minimum and maximum variance of the prior's modes in each latent dimension as emerged during the VAE pre-training stage. Based on that, identifying the optimal $\beta_j$'s through manual fine-tuning is not a feasible option. Towards overcoming this challenge we model the $\beta_j$'s as:

\begin{equation}\label{eq:beta_by_K}
    \beta_j = \frac{K}{b_j-a_j},
\end{equation}

\noindent with $K$ being a controllable hyperparameter. Based on these, we derive the $K$ such that:

\begin{equation}\label{eq:K_condition}
    \frac{f_c(b_j) - f_c(a_j)}{b_j - a_j} \geq \rho,
\end{equation}

\noindent with $\rho \in (0,1)$. Intuitively the \eqref{eq:K_condition} suggests that the initial range should be proportionally maintained post the soft-clipping. The maintained proportion is controlled by $\rho$. Developing \eqref{eq:K_condition} based on the soft-clipping function defined in \eqref{eq:supp_soft_clipping} we get:

\begin{equation}\label{eq:beta_derivation}
\begin{aligned}
    f_c(b_j) - f_c(a_j) &\geq \rho \cdot (b_j - a_j) \\\\
     b_j - \frac{1}{\beta_j} \cdot \log(1+\exp(\beta_j \cdot (b_j - b_j)))
     &+ \frac{1}{\beta_j} \cdot \log(1+\exp(\beta_j \cdot (a_j - b_j))) \\
     - a_i + \frac{1}{\beta_j} \cdot \log(1+\exp(\beta_j \cdot (a_j - b_j)))
     &- \frac{1}{\beta_j} \cdot \log(1+\exp(\beta_j \cdot (a_j - a_j))) \geq \rho \cdot (b_j - a_j) \\\\
     b_i - \frac{1}{\beta_j} \cdot \log(2) + \frac{1}{\beta_j} \cdot \log(1+\exp(\beta_i \cdot (a_j - b_j)))
     -a_j & + \frac{1}{\beta_j} \cdot \log(1+\exp(\beta_j \cdot (a_j - b_j))) - \frac{1}{\beta_j} \cdot \log(2) \\ &\geq \rho \cdot (b_j - a_j) \\\\
     (b_j - a_j) - \frac{2}{\beta_j} \cdot \log(2) &+ \frac{2}{\beta_j} \cdot \log(1+\exp(\beta_j \cdot (a_j - b_j))) \\ &\geq \rho \cdot (b_j - a_j) \\\\
     \beta_j \cdot (1-\rho) \cdot (b_j - a_j) & \geq \log(4) - 2\log(1+\exp(\beta_j \cdot (a_j - b_j))).
\end{aligned}
\end{equation}

\noindent We can derive $K$ using the formulation defined in $\eqref{eq:beta_by_K}$ as:

\begin{equation}\label{eq:K_derivation}
\begin{aligned}
    (1-\rho) \cdot K \geq \log(4) - 2 \log(1+\exp(-K)).
\end{aligned}
\end{equation}

\noindent Note that the $K$ only depends on $\rho$ and therefore can be tuned for all latent dimensions simultaneously irrespectively of the soft-clipping range $[a_j,b_j]$ (see Fig.~\ref{fig:supp_clipping_K}). In our study, we used a $\rho$ of $0.85$ and found that $K=10$ satisfies the condition \eqref{eq:K_derivation}. In other words, having an adapting $\beta_j=\frac{10}{b_j-a_j}$ guarantees that at least $85\%$ of the initial range is maintained, post-clipping, in all latent dimensions. Alternatively, our $\beta$-adapting formulation can be interpreted as a mechanism where the soft-clipping function maintains the same average rate of change in all latent dimensions, as suggested by $\eqref{eq:K_condition}$. Finally, the adaptive clipping function $f_c$ becomes:

\begin{equation}\label{eq:supp_adaptive_soft_clipping}
\begin{aligned}
    f_c(\text{logvar}_j) = \text{logvar}_j  &- \frac{1}{\beta_j} \cdot \log(1+\exp(\beta_j \cdot (\text{logvar}_j - b_j))) \\
    & + \frac{1}{\beta_j} \cdot \log(1+\exp(\beta_j \cdot (a_j - \text{logvar}_j))).\
\end{aligned}
\end{equation}

\subsection{Losses in S-IntroVAE with trainable prior}

Let $x_\text{real}$ a real sample and $z_\lambda \sim p_\lambda(z)$. The encoder, the decoder, and the prior players minimize the $L_E$, $L_D$ and $L_P$ losses respectively which write as:

\begin{equation}\label{eq:losses_e_d_p}
\begin{split}
   L_E(x_\text{real},\lambda) &= \beta_\text{rec} \cdot L_\text{rec}(x_\text{real}) + \beta_\text{KL} \cdot L_\text{KL}(x_\text{real}) + \frac{1}{\alpha} \cdot \exp(-\alpha \cdot (\beta_\text{rec} \cdot L_\text{rec}(D(z_\lambda)) + \beta_\text{neg} \cdot L_\text{KL}(D(z_\lambda)))), \\
    L_D(x_\text{real},z_\lambda) &= \beta_\text{rec} \cdot L_\text{rec}(x_\text{real}) + \cancel{\beta_\text{KL} \cdot L_\text{KL}(x_\text{real})} + \gamma \cdot (\gamma_\rho \cdot \beta_\text{rec} \cdot L_\text{rec}(\texttt{sg}(D(z_\lambda))) + \beta_\text{KL} \cdot L_\text{KL}(D(z_\lambda))) ,  \\
    L_P(x_\text{real},z_\lambda) &= \cancel{\beta_\text{rec} \cdot L_\text{rec}(x_\text{real})} + \beta_\text{KL} \cdot L_\text{KL}(x_\text{real}) + \gamma \cdot (\cancel{\beta_\text{rec} \cdot L_\text{rec}(\texttt{sg}(D(z_\lambda)))} + \beta_\text{KL} \cdot L_\text{KL}\footnotemark(D(z_\lambda))),
\end{split}
\end{equation}

\footnotetext{When computing this particular KL term we only propagate the gradient for prior as a source while applying the \texttt{sg} operator for prior as a target.}

\noindent where $D(z_\lambda)$ is the fake sample generated from decoding the latent $z_\lambda$, while $L_\text{rec}$ and $L_\text{KL}$ the reconstruction and the KL losses respectively. Both $L_E$ and $L_D$ are identical to the original S-IntroVAE \citep{daniel2021soft} with $\gamma_\rho$ a hyperparameter also set to  $1e^{-8}$. Note that the crossed-out terms do not affect the optimization, as they are constant with respect to the network being updated (e.g., the reconstruction losses are constant with respect to the prior when minimizing the $L_P$).

\subsection{Responsibilities regularization} \label{sec:respnsibilities}

In this subsection, we provide the theoretical motivation behind the responsibilities regularization which we utilize to discourage the formation of inactive prior modes. The notion of inactivity describes a prior mode that contributes negligibly in supporting the aggregated posterior compared to other more dominant modes. Sampling from inactive prior modes leads to unconstrained generation, which may negatively impact generation performance. To this end, analyzing the minimization behavior of the $L_\text{KL}(x_\text{real})$ terms in \eqref{eq:losses_e_d_p} is key to avoiding and/or eliminating the inactive prior modes as these are the terms that induce fitness between the real aggregated posterior and the prior. \\

\noindent Let $q(z|x_s)=\mathcal{N}(z | \mu_s,\sigma_s^2 I)$ be the posterior distribution of the a sample $x_s$, an M-modal prior distribution $ p_\lambda(z) = \sum\limits_{i=1}^M w_i \cdot \mathcal{N}(z|\mu_i,\sigma_i^2 I)$  and a uni-modal prior distribution $p_i(z) = \mathcal{N}(z|\mu_i,\sigma_i^2 I)$ corresponding to the $i^\text{th}$ mode of $p_\lambda(z)$ distribution.

\noindent  According to the notation defined above, the $L_\text{KL}(x_s)$ approximates the KL divergence between the uni-modal posterior $q(z|x_s)$ and the multi-modal prior $ p_\lambda(z)$ (i.e., $\text{KL}[ q(z|x_s) || p_\lambda(z)]$) as: 

\begin{equation}\label{eq:supp_KL_multi_modal}
\text{KL}[ q(z|x_s) || p_\lambda(z)] \approx \frac{1}{T} \cdot \sum\limits_{t=1}^{T} \log \frac{q(z_s^t|x_s)}{p_\lambda(z_s^t)} = L_\text{KL}(x_s) ,
\end{equation}

\noindent using $T$ MC samples with $z_s^t \sim \mathcal{N}(z | \mu_s,\sigma_s^2 I)$. Similarly we define the $L^i_\text{KL}(x_s)$ as the approximation of the KL divergence between the uni-modal posterior $q(z|x_s)$ and the $i^\text{th}$ prior component $ p_i(z)$ (i.e., $\text{KL}[ q(z|x_s) || p_i(z)]$) as: 

\begin{equation}\label{eq:supp_KL_multi_modal_mode}
\text{KL}[ q(z|x_s) || p_i(z)] \approx \frac{1}{T} \cdot \sum\limits_{t=1}^{T} \log \frac{q(z_s^t|x_s)}{p_i(z_s^t)} = L^i_\text{KL}(x_s),
\end{equation}

For simplicity, we now assume that $T=1$ and 
drop the index $t$ for notational brevity, that is we refer to the $z_s^1$ simply as $z_s$.

\subsubsection{Responsibilities computation - encoder update} 
First, let us analyze the minimization behavior from the encoder's perspective. For  a single MC sample $z_s$, the $L_\text{KL}(x_s)$ can be computed as:

\begin{equation}\label{eq:supp_KL_multi_modal_single_sample}
\begin{aligned}
L_\text{KL}(x_s) &= \log q(z_s|x_s) - \log p_\lambda(z_s)  \\
                                     &= \log \mathcal{N}(z_s | \mu_s,\sigma_s^2 I) - \log \sum\limits_{i=1}^M w_i \cdot \mathcal{N}(z_s|\mu_i,\sigma_i^2 I).
\end{aligned}
\end{equation}

\noindent Based on that, we can now compute the derivative of $L_\text{KL}(x_s)$ above with respect to $z_s$ as:

\begin{equation}\label{eq:supp_KL_multi_modal_derivative}
\begin{aligned}
   \frac{\partial  L_\text{KL}(x_s)}{\partial z_s} =& \frac{1}{\cancel{\mathcal{N}(z_s | \mu_s,\sigma_s^2 I)}} \cdot \cancel{\mathcal{N}(z^t | \mu_s,\sigma_s^2 I)} \cdot \frac{\mu_s - z_s}{\sigma_s^2} -  \sum\limits_{i=1}^M \frac{w_i \cdot \mathcal{N}(z_s|\mu_i,\sigma_i^2)} {\sum\limits_{l=1}^M w_l \cdot \mathcal{N}(z_s|\mu_l,\sigma_l^2)} \cdot (\frac{\mu_i - z_s} {\sigma_i^2}) \\ &= \frac{\mu_s - z_s}{\sigma_s^2} - \sum\limits_{i=1}^M c_{i}^s \cdot (\frac{\mu_i - z_s} {\sigma_i^2}) \\
   &= \cancelto{1}{\sum\limits_{i=1}^M c_{i}^s} \cdot  \frac{\mu_s - z_s}{\sigma_s^2} - \sum\limits_{i=1}^M c_{i}^s \cdot (\frac{\mu_i - z_s} {\sigma_i^2}),
\end{aligned}
\end{equation}

\noindent with $c_{i}^s = \frac{w_i \cdot \mathcal{N}(z_s| \mu_i,s_i^2)} {\sum\limits_{l=1}^M w_l \cdot \mathcal{N}(z_s|\mu_l,s_l^2)} $ denoting the responsibility of mode $i$ to $z_s$ of the sample $x_s$. \\

\noindent Similarly we can calculate the derivative of  $L^i_\text{KL}(x_s)$ with respect to $z_s$ as: 

\begin{equation}\label{eq:supp_KL_uni_modal_derivative}
\begin{aligned}
   \frac{\partial   L^i_\text{KL}(x_s)}{ \partial  z_s} =& \frac{\mu_s - z_s}{\sigma_s^2} - \frac{\mu_i - z_s} {\sigma_i^2}.
\end{aligned}
\end{equation}

\noindent Based on Eqs.~\ref{eq:supp_KL_multi_modal_derivative} and \ref{eq:supp_KL_uni_modal_derivative} we conclude that:

\begin{equation}\label{eq:supp_KL_multi_modal_derivative_decompose}
\begin{aligned}
   \frac{\partial L_\text{KL}(x_s)}{ \partial z_s} =& \sum\limits_{i=1}^M c_{i}^s \cdot \frac{\partial  L^i_\text{KL}(x_s)}{\partial z_s}.
\end{aligned}
\end{equation}

\noindent The decomposition provided above reveals the effect that responsibilities of each prior component have when fitting uni-modal posterior into multi-modal prior distributions. More specifically, it is shown that $z_s$ minimizes the $ L_\text{KL}$ by seeking the prior modes according to the responsibilities $c_{i}^s$. Motivated by this, we define the expected responsibility of mode $i$ to the real aggregated posterior as:

\begin{equation}
 \label{eq:supp_responsibility}
\begin{split} 
c_i = \mathbb{E}_{x \sim p_\text{data}(x)} \mathbb{E}_{z \sim q_{\phi}({z| x})} \left[ \frac{w_i \cdot \mathcal{N}(z | \mu_i,\sigma_i^2 I)}{
\sum_{l=1}^{M} w_l \cdot \mathcal{N}(z | \mu_l,\sigma_l^2 I)} \right].
\end{split}
\end{equation}

\subsubsection{Inactive modes and vanishing gradients - prior update} \label{sec:respnsibilities_vanishing_gradient}

\noindent When computing the derivative of the $L_\text{KL}(x_s)$ concerning the contribution $w_i$ we will need to take into account that sum of all contributions has to be $1$. To ease the computation we can model  $w_i=\frac{e_i}{\sum\limits_{l=1}^M e_l}$,  where $e_i$ is a non-negative real number realizing the unormalized probability mass of the $i^\text{th}$ component, and compute the derivative with respect the normalized energy $e_i$. Based on that :

\begin{equation}\label{eq:supp_KL_multimodal_derivative_e}
\begin{aligned}
   \frac{\partial  L_\text{KL}(x_s)}{\partial e_i} &= - \frac{\partial \log \sum\limits_{l=1}^M w_l \cdot \mathcal{N}(z_s|\mu_l,\sigma_l^2 I)} {\partial e_i} \\ 
   &= - \frac{1}{\sum\limits_{l=1}^M w_l \cdot \mathcal{N}(z_s|\mu_l,\sigma_l^2 I)} \cdot ( \frac{1}{\sum\limits_{l=1}^M e_l} \cdot (1-w_i) \cdot \mathcal{N}(z_s|\mu_i,\sigma_i^2 I) \\
   &- \frac{1}{\sum\limits_{l=1}^M e_l} \cdot \sum\limits_{\substack{l=1\\l\neq i}}^M w_l \cdot \mathcal{N}(z_s|\mu_l,\sigma_l^2 I) ) \\ &= - \frac{1}{\cancel{\frac{1} {\sum\limits_{l=1}^M e_l}} \cdot \sum\limits_{l=1}^M e_i \cdot \mathcal{N}(z_s|\mu_l,\sigma_l^2 I)} \cdot \cancel{\frac{1}{\sum\limits_{l=1}^M e_l}} \cdot (\mathcal{N}(z_s|\mu_i,\sigma_i^2 I) \\ 
   &- \sum\limits_{l=1}^M w_l \cdot \mathcal{N}(z_s|\mu_l,\sigma_l^2 I) ) \\ 
   &= \frac{1}{\sum\limits_{l=1}^M e_i \cdot \mathcal{N}(z_s|\mu_l,\sigma_l^2 I)} \cdot (\sum\limits_{l=1}^M w_l \cdot \mathcal{N}(z_s|\mu_l,\sigma_l^2 I) - \mathcal{N}(z_s|\mu_i,\sigma_i^2 I)).
\end{aligned}
\end{equation}

\noindent The result above aligns with our intuition as it suggests that given a latent $z_s$ the energy $e_i$ corresponding to the unnormalized contribution of i$^\text{th}$ component increases if it is more likely to have been sampled from that mode compared to the MoG prior, and vice versa. \\

\noindent Similarly, we compute the derivatives of $L_\text{KL}(x_s)$ with respect with respect to  $\mu_i$ and $\sigma_i$ corresponding to the mean and the standard deviation of the $i^\text{th}$ prior component respectively. In this case the gradient steps write as:

\begin{equation}\label{eq:supp_KL_multimodal_derivative_mu}
\begin{aligned}
   \frac{\partial L_\text{KL}(x_s)}{\partial \mu_i}  &= - c_{i}^s \cdot \frac{z_s-\mu_i} {\sigma_i^2} \; \text{and}  \\  
   \frac{\partial  L_\text{KL}(x_s)}{\partial \sigma_i}  &=- c_{i}^s \cdot \frac{(z_s-\mu_i)^2 - \sigma_i^2} {\sigma_i^3}. 
\end{aligned}
\end{equation}

\noindent The derivatives above reveal the behavior of the individual prior components in the presence of inactive modes. In particular, an inactive mode $i$ manifests as low $c_i$ responsibility (i.e., $c_i^s$ close to zero for all real sample $x_s$), due to insufficiently supporting the real aggregated posterior relative to other, more dominant modes. Consequently, a vanishing gradient issue arises, where the mean and the standard deviation of the inactive mode $i$ are not updated (towards supporting the posterior) as indicated by \ref{eq:supp_KL_multimodal_derivative_mu}. On the other hand, the unnormalized contributions of the inactive modes tend to vanish in favor of other more dominant modes as \ref{eq:supp_KL_multimodal_derivative_e} suggests. Based on these observations, it is clear that in the presence of inactive modes, allowing for learnable contributions enables the prior player to eliminate inactive modes. Conversely, not allowing learnable contributions leaves the prior with inactive modes that cannot adapt to the aggregated posterior, due to their low responsibility and consequently vanished gradients rendering the model prone to unconstrained generation.

\subsection{Exploding variance - prior update} \label{sec:exploding_variance}

In this subsection, we provide the theoretical motivation behind the log-variance clipping that was used to stabilize training. We now assume a posterior \( z_s \) such that:

\begin{equation}\label{eq:unlikely_assumption}
    \left| z_s - \mu_i \right| = t \cdot \sigma_i \quad \text{with} \quad t \gg 1.
\end{equation}

\noindent The formulation above suggests that the sample \( z_s \) is highly unlikely under the Gaussian distribution defined by the \( i^\text{th} \) prior component, with the parameter \( t \) controlling the degree of unlikeliness. Under the assumption of \eqref{eq:unlikely_assumption} the magnitude of the update rules derived in \eqref{eq:supp_KL_multimodal_derivative_mu} write as:

\begin{equation}\label{eq:supp_KL_multimodal_derivative_mu_assumption}
\begin{aligned}
   \left| \frac{\partial  L_\text{KL}(x_s)}{\partial \mu_i} \right| &=c_{i}^s \cdot \frac{t} {\sigma_i} \; \text{and}  \\    
   \left| \frac{\partial  L_\text{KL}(x_s)}{\partial \sigma_i} \right| &= c_{i}^s  \cdot \frac{t^2\cdot \sigma_i^2 - \sigma_i^2} {\sigma_i^3} =  c_{i}^s  \cdot \frac{t^2 - 1} {\sigma_i} \approx c_{i}^s  \cdot \frac{t^2} {\sigma_i}. 
\end{aligned}
\end{equation}

\noindent The derivation above suggests that the gradient magnitudes of the \( \mu_i \) and \( \sigma_i \) parameters scale linearly and quadratically with \( t \), respectively. Based on this, it is evident that the standard deviation, and therefore the variance, of the prior is more sensitive to explosions in the presence of highly unlikely posterior samples \( z_s \), compared to the mean. Note that since the contributions \( c_i^s \) are computed relative to all prior modes, it is possible for \( z_s \) to be unlikely under the \( i^\text{th} \) prior component while \( c_i^s \) remains non-negligible. Finally given that \eqref{eq:unlikely_assumption} holds, $\mathrm{sign}(\frac{\partial  L_\text{KL}(x_s)}{\partial \sigma_i}) = - 1$ suggesting that the explosion of variance in the presence of unlikely posterior samples results in increasing the variance towards minimizing the KL divergence.

\raggedbottom

\section{Additional Details and Results}

\subsection{Baseline reproduction for 2D datasets}

Due to the inherent randomness involved in evaluating the generation quality, the grid-search-based hyperparameter tuning and the computation of KL-divergence in a Monte-Carlo fashion, in table \ref{tab:reproducability} we compare the baseline performance across five key settings. Namely, the baseline as (i) reported in \citep{daniel2021soft} (ii) reproduced by us using the official code-base \citep{daniel2021soft},  reproduced by our code-base computing KL both (iii) in closed-form (c) and (iv) in Monte-Carlo (s) manner and finally (v) replicated by our full pipeline of hyperparameter tuning which can result in selecting different optimal hyperparameter for each dataset (compared to those provided by \citet{daniel2021soft}). 
Although computing the KL divergence in a Monte-Carlo fashion is unnecessary for the uni-modal prior (baseline) it is important to verify that both closed and Monte-Carlo-based KL computation lead to comparable performance.

\begin{table} [H]
\centering
\begin{tabular}{|c|c|c|c|}
\hline
  2D - Dataset & $\beta_{rec}$ & $\beta_{kl}$  & $\beta_{neg}$ \\ \hline\hline
  8Gaussian&  0.2 (0.2) &  0.3 (0.3) &  0.9 (0.9) \\ \hline
  2Spirals&  0.2 (0.2) & 0.05 (0.5)  & 0.2 (1)  \\ \hline
 Checkerboard&   0.05 (0.2) & 0.2 (0.1) & 0.8 (0.2)  \\ \hline
Rings &  0.2 (0.2)  &  0.2 (0.2) &  0.6 (1) \\ \hline
\end{tabular}
\caption{Optimal hyperparameter under the standard Gaussian prior for each dataset as found using grid-search and as reported by \citet{daniel2021soft} (in parenthesis).}
\label{tab:base_optimal_hyperparam}

\end{table}

\begin{table} [H]
\renewcommand{\arraystretch}{1.8}
\centering
\begin{adjustbox}{max width=\textwidth}
\begin{tabular}{ccccccc}

                                                     

\multicolumn{2}{c}{} & \multicolumn{5}{c}{\large S-IntroVAE (SG)} \\ 
\cmidrule(lr){3-7}
\large Source $\rightarrow$ & & \large reported & 
  \makecell{\large official \\ (reproduced)} & 
  \multicolumn{2}{c}{\makecell{\large ours \\ (reproduced)}} & 
  \makecell{\large ours \\ (replicated)} \\ 
\cmidrule(lr){3-3}
\cmidrule(lr){4-4}
\cmidrule(lr){5-6}
\cmidrule(lr){7-7}
\makecell{\large KL Calculation \\ Mode} $\rightarrow$ & & 
\large c & \large c & \large c & \large s & \large s \\ 
\hline \hline

\multirow{3}{*}{\rotatebox[origin=c]{90}{8Gaussian}} 

& \large{gnELBO} $\downarrow$  & 
                     \large{1.25} \scriptsize{$\pm$0.35} & 
                     \large{0.62} \scriptsize{$\pm$0.13} & 
                     
                     \large{0.52} \scriptsize{$\pm$0.09} & 
                     \textbf{\large{0.51} \scriptsize{$\pm$0.15}} & 
                     \textbf{\large{0.51} \scriptsize{$\pm$0.15}} \\  
                     \cline{2-7}

& \large{KL} $\downarrow$  & 
                    \large{1.25} \scriptsize{$\pm$0.11} &
                    \large{1.33} \scriptsize{$\pm$0.52} & 
                    \large{1.36} \scriptsize{$\pm$0.41} & 
                     \textbf{\large{1.23} \scriptsize{$\pm$0.11}} & 
                     \textbf{\large{1.23} \scriptsize{$\pm$0.11}} \\ 
                    \cline{2-7} 
                    
& \large{JSD} $\downarrow$  & 
                    \textbf{\large{0.96} \scriptsize{$\pm$0.15}} & 
                    \large{1.16} \scriptsize{$\pm$0.15 } & 
                    \large{1.16} \scriptsize{$\pm$0.11} & 
                    \large{1.01} \scriptsize{$\pm$0.18} & 
                    \large{1.01} \scriptsize{$\pm$0.18} \\ 
                    \hline \hline
                                                     
\multirow{3}{*}{\rotatebox[origin=c]{90}{2Spirals}} 

& \large{gnELBO} $\downarrow$  & 
                        \textbf{\large{5.21} \scriptsize{$\pm$0.04}}  & 
                        \large{5.47 } \scriptsize{$\pm$ 0.05 } & 
                        \large{5.47} \scriptsize{$\pm$ 0.06} & 
                        \large{5.47} \scriptsize{$\pm$0.14} & 
                        \large{6.41} \scriptsize{$\pm$0.61} \\ 
                        \cline{2-7}
                        
& \large{KL} $\downarrow$  & 
                    \textbf{\large{8.13} \scriptsize{$\pm$0.3}} &
                    \large{10.21} \scriptsize{$\pm$0.39} & 
                    \large{10.66} \scriptsize{$\pm$0.19} & 
                    \large{ 10.26} \scriptsize{$\pm$0.39} &
                    \large{9.5} \scriptsize{$\pm$1.23} \\ 
                    \cline{2-7}

& \large{JSD} $\downarrow$  & 
                    \textbf{\large{3.37} \scriptsize{$\pm$0.04}} &
                    \large{4.03} \scriptsize{$\pm$0.1} & 
                    \large{4.11} \scriptsize{$\pm$0.16} & 
                    \large{4.08} \scriptsize{$\pm$0.06} &
                    \large{4.21} \scriptsize{$\pm$0.5} \\ 
                    \hline \hline

\multirow{3}{*}{\rotatebox[origin=c]{90}{Checkerboard}} 

& \large{gnELBO} $\downarrow$  & 
                        \textbf{\large{4.47} \scriptsize{$\pm$0.29}} & 
                        \large{6.28 \scriptsize{$\pm$0.56}} & 
                        \large{6.22} \scriptsize{$\pm$0.80} & 
                        \large{6.33} \scriptsize{$\pm$0.75} & 
                        \large{7.21} \scriptsize{$\pm$0.12} \\ 
                        \cline{2-7}
                        
& \large{KL}  $\downarrow$ & 
                    \large{20.27} \scriptsize{$\pm$0.21} &
                    \large{19.72} \scriptsize{$\pm$0.23} & 
                    \large{19.99} \scriptsize{$\pm$0.28} & 
                    \large{19.94} \scriptsize{$\pm$0.37} &
                    \textbf{\large{19.62} \scriptsize{$\pm$0.57}} \\ 
                    \cline{2-7}

& \large{JSD} $\downarrow$  & 
                    \large{9.06} \scriptsize{$\pm$0.15} &
                    \large{9.04} \scriptsize{$\pm$0.19} & 
                    \large{9.34} \scriptsize{$\pm$0.19 } & 
                    \large{9.19} \scriptsize{$\pm$0.17} &
                    \textbf{\large{8.87} \scriptsize{$\pm$0.15}} \\ 
                    \hline \hline

\multirow{3}{*}{\rotatebox[origin=c]{90}{Rings}} 

& \large{gnELBO} $\downarrow$  & 
                        \large{6.3} \scriptsize{$\pm$0.08} & 
                        \large{5.81} \scriptsize{$\pm$0.06} & 
                         \textbf{\large{5.8} \scriptsize{$\pm$0.05}} & 
                        \large{5.85} \scriptsize{$\pm$0.13} & 
                        \large{6.03} \scriptsize{$\pm$0.12} \\ 
                        \cline{2-7}
                        
& \large{KL} $\downarrow$  & 
                     \textbf{\large{9.18} \scriptsize{$\pm$0.33}} &
                    \large{10.67} \scriptsize{$\pm$0.5} & 
                    \large{10.75} \scriptsize{$\pm$0.29} & 
                    \large{10.89} \scriptsize{$\pm$0.45} &
                    \large{9.99} \scriptsize{$\pm$0.59} \\ 
                    \cline{2-7}

& \large{JSD} $\downarrow$  & 
                    \large{4.13} \scriptsize{$\pm$0.09} &
                    \large{4.37} \scriptsize{$\pm$0.12} & 
                    \large{4.35} \scriptsize{$\pm$0.12} & 
                    \large{4.2} \scriptsize{$\pm$0.11 } &
                     \textbf{\large{4.05} \scriptsize{$\pm$0.15}} \\ 
                    \hline \hline

\end{tabular}
\end{adjustbox}
\caption{Baseline performance across five key settings. For each setting, we report the performance (mean ± standard deviation) over five runs. When reporting the performance for columns 2 to 4 we used the optimal hyperparameters as provided (reproduced) by \citet{daniel2021soft} (also found in parenthesis in Table \ref{tab:base_optimal_hyperparam}) whereas, for the 5$^{\text{th}}$ column, we used the optimal hyperparameters found by our grid-search implementation (replicated). Note that for the 8Gaussian dataset, we found the same optimal hyperparameters leading to identical performance between the 4$^{\text{th}}$ and the 5$^{\text{th}}$ columns. The 'c' and 's' refer to closed-form and sample-based computation of KL divergence.} 
\label{tab:reproducability}
\end{table}

\subsection{MoG ablation on the image experiments}
\label{supp:full_results}
In Table~\ref{tab:full_ablation} we provide the full ablation on the image generation benchmark suggesting that utilizing a sufficient number of prior modes is crucial for achieving optimal generation and representation learning performance.

\begin{table}[!htbp]
\centering
\renewcommand{\arraystretch}{1.8}
\begin{adjustbox}{max width=\textwidth}
\begin{tabular}{cc c cccc cccc} 
                           & \large{Model} $\rightarrow$
                           & \multicolumn{1}{c}{\large{S-IntroVAE}} 
                           & \multicolumn{8}{c}{\large{S-IntroVAE}} \\  \cmidrule(lr){3-3}\cmidrule(lr){4-11}
                           & \large{Prior Type} $\rightarrow$
                           & \large{SG} 
                           & \multicolumn{4}{c}{$\large{\text{MoG(10)}}$}
                           & \multicolumn{4}{c}{$\large{\text{MoG(100)}}$}
                           \\ \cmidrule(lr){3-3}\cmidrule(lr){4-7}\cmidrule(lr){8-11}
                           
                           & \large{LC Flag} $\rightarrow$
                           & \large{N/A}
                           & \large{\xmark}  
                           & \large{\xmark}  
                           & \large{\cmark} 
                           & \large{\cmark}
                           & \large{\xmark}  
                           & \large{\xmark}  
                           & \large{\cmark} 
                           & \large{\cmark}\\

                           & \large{IP Flag} $\rightarrow$ 
                           & \large{N/A}
                           & \large{\xmark}
                           & \large{\cmark}
                           & \large{\xmark}
                           & \large{\cmark} 
                           & \large{\xmark}
                           & \large{\cmark}
                           & \large{\xmark}
                           & \large{\cmark} 
                           \\ \hline \hline

\multirow{10}{*}{\rotatebox[origin=c]{90}{\large{MNIST}}}  

                           & \large{$r_\text{entropy}$} 
                           & \large{0} 
                           & \large{10}
                           & \large{0}
                           & \large{1}
                           & \large{100}
                           & \large{10} 
                           & \large{10} 
                           & \large{1}
                           & \large{10}\\

                           & \large{Entr.} 
                           & \large{0}
                           & \large{0.966} \scriptsize{$\pm$0.008} 
                           & \large{0.952} \scriptsize{$\pm$0.014} 
                           & \large{0.948} \scriptsize{$\pm$0.009} 
                           & \large{0.988} \scriptsize{$\pm$0.001} 
                           & \large{0.892} \scriptsize{$\pm$0.002} 
                           & \large{0.882} \scriptsize{$\pm$0.001} 
                           & \large{0.882} \scriptsize{$\pm$0.002} 
                           & \large{0.853} \scriptsize{$\pm$0.004}\\

                           & \large{FID (GEN)} $\downarrow$  
                           & \large{1.414} \scriptsize{$\pm$0.025} 
                           & \large{1.38} \scriptsize{$\pm$0.049} 
                           & \large{1.356} \scriptsize{$\pm$0.1} 
                           & \large{1.427} \scriptsize{$\pm$0.02} 
                           & \large{1.365} \scriptsize{$\pm$0.031} 
                           & \large{1.322} \scriptsize{$\pm$0.025} 
                           & \large{1.352} \scriptsize{$\pm$0.052} 
                           & \large{1.32} \scriptsize{$\pm$0.061} 
                           & \textbf{\large{1.309} \scriptsize{$\pm$0.027}}\\

                           & \large{FID (REC)} $\downarrow$  
                           & \large{1.503} \scriptsize{$\pm$0.031} 
                           & \large{1.488} \scriptsize{$\pm$0.072} 
                           & \large{1.51} \scriptsize{$\pm$0.049} 
                           & \large{1.629} \scriptsize{$\pm$0.171} 
                           & \large{1.472} \scriptsize{$\pm$0.069 }
                           & \textbf{\large{1.342} \scriptsize{$\pm$0.05}} 
                           & \large{1.473} \scriptsize{$\pm$0.1}
                           & \large{1.363} \scriptsize{$\pm$0.075} 
                           & \large{1.385} \scriptsize{$\pm$0.081} \\

                           & \large{Recall (GEN)} $\uparrow$  
                           & \large{0.565} \scriptsize{$\pm$0.003}
                           & \textbf{\large{0.569} \scriptsize{$\pm$0.001}} 
                           & \large{0.545} \scriptsize{$\pm$0.006} 
                           & \large{0.554} \scriptsize{$\pm$0.007} 
                           & \large{0.552} \scriptsize{$\pm$0.002 }
                           & \large{0.562} \scriptsize{$\pm$0.003} 
                           & \large{0.553} \scriptsize{$\pm$0.008}
                           & \large{0.556} \scriptsize{$\pm$0.003} 
                           & \large{0.557} \scriptsize{$\pm$0.001} \\

                           & \large{Precision (GEN)} $\uparrow$  
                           & \large{0.522} \scriptsize{$\pm$0.004} 
                           & \large{0.533} \scriptsize{$\pm$0.002} 
                           & \textbf{\large{0.563} \scriptsize{$\pm$0.012}} 
                           & \large{0.54} \scriptsize{$\pm$0.01} 
                           & \large{0.553} \scriptsize{$\pm$0.005 }
                           & \large{0.55} \scriptsize{$\pm$0.005} 
                           & \large{0.556} \scriptsize{$\pm$0.001}
                           & \large{0.561} \scriptsize{$\pm$0.005} 
                           & \large{0.562} \scriptsize{$\pm$0.005} \\

                           & \large{2K-SVM} $\uparrow$  
                           & \large{0.93} \scriptsize{$\pm$0.001} 
                           & \large{0.956} \scriptsize{$\pm$0.002} 
                           & \textbf{\large{0.972} \scriptsize{$\pm$0.001}} 
                           & \large{0.957} \scriptsize{$\pm$0.002}
                           & \large{0.959} \scriptsize{$\pm$0.002} 
                           & \large{0.961} \scriptsize{$\pm$0.001} 
                           & \large{0.97} \scriptsize{$\pm$0.004} 
                           & \large{0.962} \scriptsize{$\pm$0.002} 
                           & \textbf{\large{0.972} \scriptsize{$\pm$0.002}}\\

                           & \large{10K-SVM} $\uparrow$ 
                           & \large{0.93} \scriptsize{$\pm$0.001} 
                           & \large{0.957} \scriptsize{$\pm$0.002} 
                           & \textbf{\large{0.972} \scriptsize{$\pm$0.001}} 
                           & \large{0.957} \scriptsize{$\pm$0.002}
                           & \large{0.958} \scriptsize{$\pm$0.002} 
                           & \large{0.961} \scriptsize{$\pm$0.001} 
                           & \large{0.97} \scriptsize{$\pm$0.004} 
                           & \large{0.962} \scriptsize{$\pm$0.002} 
                           & \textbf{\large{0.972} \scriptsize{$\pm$0.002}} \\
                           
                           & \large{5-NN} $\uparrow$ 
                           & \large{0.763} \scriptsize{$\pm$0.003} 
                           & \large{0.866} \scriptsize{$\pm$0.01}
                           & \large{0.943} \scriptsize{$\pm$0.005}
                           & \large{0.876} \scriptsize{$\pm$0.007}
                           & \large{0.842} \scriptsize{$\pm$0.014}
                           & \large{0.916} \scriptsize{$\pm$0.004} 
                           & \large{0.947} \scriptsize{$\pm$0.011} 
                           & \large{0.92} \scriptsize{$\pm$0.001} 
                           & \textbf{\large{0.957 \scriptsize{$\pm$0.004}}}\\
                           
                           & \large{100-NN} $\uparrow$ 
                           & \large{0.87} \scriptsize{$\pm$0.003}
                           & \large{0.897} \scriptsize{$\pm$0.01}
                           & \large{0.949} \scriptsize{$\pm$0.006}
                           & \large{0.907} \scriptsize{$\pm$0.006}
                           & \large{0.885} \scriptsize{$\pm$0.009} 
                           & \large{0.934} \scriptsize{$\pm$0.002}
                           & \large{0.953} \scriptsize{$\pm$0.007} 
                           & \large{0.935} \scriptsize{$\pm$0.001} 
                           & \textbf{\large{0.958} \scriptsize{$\pm$0.002}}
                           \\ \hline \hline

\multirow{10}{*}{\rotatebox[origin=c]{90}{\large{FMNIST}}}  

                           & \large{$r_\text{entropy}$} 
                           & \large{0} 
                           & \large{10}
                           & \large{10}
                           & \large{0}
                           & \large{1}
                           & \large{0}
                           & \large{10}
                           & \large{10}
                           & \large{10}\\

                           & \large{Entr.} 
                           & \large{0}  
                           & \large{0.978} \scriptsize{$\pm$0.004} 
                           & \large{0.982} \scriptsize{$\pm$0} 
                           & \large{0.951} \scriptsize{$\pm$0.007} 
                           & \large{0.82} \scriptsize{$\pm$0.024} 
                           & \large{0.931} \scriptsize{$\pm$0.003}
                           & \large{0.931} \scriptsize{$\pm$0.001} 
                           & \large{0.944} \scriptsize{$\pm$0.001}
                           & \large{0.903} \scriptsize{$\pm$0.005}\\

                           & \large{FID (GEN)} $\downarrow$  
                           & \large{3.326} \scriptsize{$\pm$0.039} 
                           & \large{2.778} \scriptsize{$\pm$0.09}
                           & \large{3.019} \scriptsize{$\pm$0.095} 
                           & \large{2.836} \scriptsize{$\pm$0.089} 
                           & \large{2.987} \scriptsize{$\pm$0.072} 
                           & \large{2.785} \scriptsize{$\pm$0.051}
                           & \large{3.025} \scriptsize{$\pm$0.139} 
                           & \textbf{\large{2.727} \scriptsize{$\pm$0.079}} 
                           & \large{2.831} \scriptsize{$\pm$0.1}\\

                           & \large{FID (REC)} $\downarrow$  
                           & \large{3.76} \scriptsize{$\pm$0.097} 
                           & \large{3.102} \scriptsize{$\pm$0.062} 
                           & \large{3.406} \scriptsize{$\pm$0.036}
                           & \large{3.189} \scriptsize{$\pm$0.092}
                           & \large{3.339} \scriptsize{$\pm$0.081} 
                           & \textbf{\large{2.994}} \scriptsize{$\pm$0.05} 
                           & \large{3.129} \scriptsize{$\pm$0.095} 
                           & \large{3.185} \scriptsize{$\pm$0.101} 
                           & \large{3.511} \scriptsize{$\pm$0.074}  \\

                           & \large{Recall (GEN)} $\uparrow$  
                           & \large{0.314} \scriptsize{$\pm$0.012} 
                           & \large{0.348} \scriptsize{$\pm$0.005} 
                           & \large{0.327} \scriptsize{$\pm$0.004}
                           & \large{0.338} \scriptsize{$\pm$0.014}
                           & \large{0.336} \scriptsize{$\pm$0.004} 
                           & \textbf{\large{0.35} \scriptsize{$\pm$0.003}}
                           & \large{0.336} \scriptsize{$\pm$0.007} 
                           & \large{0.346} \scriptsize{$\pm$0.004} 
                           & \large{0.341} \scriptsize{$\pm$0.008}  \\

                           & \large{Precision (GEN)} $\uparrow$  
                           & \large{0.518} \scriptsize{$\pm$0.009} 
                           & \large{0.556} \scriptsize{$\pm$0.005} 
                           & \large{0.551} \scriptsize{$\pm$0.003}
                           & \large{0.558} \scriptsize{$\pm$0.007}
                           & \large{0.560} \scriptsize{$\pm$0.004} 
                           & \large{0.553} \scriptsize{$\pm$0.005} 
                           & \large{0.558} \scriptsize{$\pm$0.004} 
                           & \textbf{\large{0.576} \scriptsize{$\pm$0.006}} 
                           & \large{0.574} \scriptsize{$\pm$0.003}  \\

                           & \large{2K-SVM} $\uparrow$  
                           & \large{0.681} \scriptsize{$\pm$0.001} 
                           & \large{0.703} \scriptsize{$\pm$0.011} 
                           & \large{0.681} \scriptsize{$\pm$0.010} 
                           & \large{0.715} \scriptsize{$\pm$0.005} 
                           & \large{0.68} \scriptsize{$\pm$0.012} 
                           & \textbf{\large{0.731} \scriptsize{$\pm$0.003}}
                           & \large{0.695} \scriptsize{$\pm$0.007} 
                           & \large{0.712} \scriptsize{$\pm$0.005} 
                           & \large{0.696} \scriptsize{$\pm$0.003}\\

                           & \large{10K-SVM} $\uparrow$ 
                           & \large{0.731} \scriptsize{$\pm$0.006} 
                           & \large{0.771} \scriptsize{$\pm$0.004} 
                           & \large{0.763} \scriptsize{$\pm$0.006} 
                           & \large{0.775} \scriptsize{$\pm$0.003}
                           & \large{0.765} \scriptsize{$\pm$0.002} 
                           & \textbf{\large{0.78} \scriptsize{$\pm$0.002}} 
                           & \large{0.772} \scriptsize{$\pm$0.003} 
                           & \large{0.778} \scriptsize{$\pm$0.002} 
                           & \large{0.773} \scriptsize{$\pm$0.002}\\
                           
                           & \large{5-NN} $\uparrow$ 
                           & \large{0.425} \scriptsize{$\pm$0.009} 
                           & \large{0.594} \scriptsize{$\pm$0.016} 
                           & \large{0.649} \scriptsize{$\pm$0.012} 
                           & \large{0.604} \scriptsize{$\pm$0.015} 
                           & \large{0.618} \scriptsize{$\pm$0.013} 
                           & \large{0.683} \scriptsize{$\pm$0.006} 
                           & \large{0.693} \scriptsize{$\pm$0.008} 
                           & \large{0.678} \scriptsize{$\pm$0.006}
                           & \textbf{\large{0.707} \scriptsize{$\pm$0.005}} \\
                           
                           & \large{100-NN} $\uparrow$ 
                           & \large{0.606} \scriptsize{$\pm$0.014} 
                           & \large{0.682} \scriptsize{$\pm$0.014}
                           & \large{0.691} \scriptsize{$\pm$0.008}
                           & \large{0.69} \scriptsize{$\pm$0.010}
                           & \large{0.659} \scriptsize{$\pm$0.009}
                           & \large{0.736} \scriptsize{$\pm$0.003} 
                           & \large{0.729} \scriptsize{$\pm$0.006}
                           & \large{0.731} \scriptsize{$\pm$0.003} 
                           & \textbf{\large{0.739 \scriptsize{$\pm$0.004}}}
                           \\ \hline \hline

\multirow{10}{*}{\rotatebox[origin=c]{90}{\large{CIFAR-10}}}  

                           & \large{$r_\text{entropy}$} 
                           & \large{0} 
                           & \large{10}
                           & \large{10}
                           & \large{10}
                           & \large{0}
                           & \large{10} 
                           & \large{100}
                           & \large{100}
                           & \large{10}\\

                           & \large{Entr.} 
                           & \large{0} 
                           & \large{0.895} \scriptsize{$\pm$0.005} 
                           & \large{0.886} \scriptsize{$\pm$0.006} 
                           & \large{0.914} \scriptsize{$\pm$0.012} 
                           & \large{0} 
                           & \large{0.839} \scriptsize{$\pm$0.007} 
                           & \large{0.94} \scriptsize{$\pm$0.002} 
                           & \large{0.929} \scriptsize{$\pm$0.003} 
                           & \large{0.511} \scriptsize{$\pm$0.043}\\

                           & \large{FID (GEN)} $\downarrow$  
                           & \large{4.424} \scriptsize{$\pm$0.064} 
                           & \large{4.538} \scriptsize{$\pm$0.1} 
                           & \large{4.876} \scriptsize{$\pm$0.075} 
                           & \large{4.547} \scriptsize{$\pm$0.079}
                           & \large{4.595} \scriptsize{$\pm$0.046}
                           & \large{4.465} \scriptsize{$\pm$0.038} 
                           & \textbf{\large{4.385} \scriptsize{$\pm$0.140}}
                           & \large{4.417} \scriptsize{$\pm$0.031} 
                           & \large{4.594} \scriptsize{$\pm$0.235}\\

                           & \large{FID (REC)} $\downarrow$  
                           & \large{4.13} \scriptsize{$\pm$0.068} 
                           & \large{4.379} \scriptsize{$\pm$0.053} 
                           & \large{4.686} \scriptsize{$\pm$0.143} 
                           & \large{4.539} \scriptsize{$\pm$0.092} 
                           & \large{4.519} \scriptsize{$\pm$0.059} 
                           & \large{4.205} \scriptsize{$\pm$0.091} 
                           & \textbf{\large{4.084} \scriptsize{$\pm$0.006}} 
                           & \large{4.141} \scriptsize{$\pm$0.039} 
                           & \large{4.585} \scriptsize{$\pm$0.373} \\

                           & \large{Recall (GEN)} $\uparrow$  
                           & \textbf{\large{0.283} \scriptsize{$\pm$0.003}}
                           & \large{0.266} \scriptsize{$\pm$0.007} 
                           & \large{0.253} \scriptsize{$\pm$0.003} 
                           & \large{0.268} \scriptsize{$\pm$0.002} 
                           & \large{0.267} \scriptsize{$\pm$0.001} 
                           & \large{0.281} \scriptsize{$\pm$0.001} 
                           & \textbf{\large{0.283} \scriptsize{$\pm$0.003}} 
                           & \large{0.282} \scriptsize{$\pm$0.008} 
                           & \large{0.264} \scriptsize{$\pm$0.012} \\

                           & \large{Precision (GEN)} $\uparrow$  
                           & \large{0.685} \scriptsize{$\pm$0.004}
                           & \large{0.687} \scriptsize{$\pm$0.008} 
                           & \large{0.689} \scriptsize{$\pm$0.008} 
                           & \textbf{\large{0.69} \scriptsize{$\pm$0.003}} 
                           & \large{0.68} \scriptsize{$\pm$0.003} 
                           & \large{0.676} \scriptsize{$\pm$0.002} 
                           & \large{0.679} \scriptsize{$\pm$0.004}
                           & \large{0.677} \scriptsize{$\pm$0.007} 
                           & \large{0.685} \scriptsize{$\pm$0.006} \\

                           & \large{2K-SVM} $\uparrow$  
                           & \large{0.245} \scriptsize{$\pm$0.009} 
                           & \large{0.241} \scriptsize{$\pm$0.005} 
                           & \large{0.264} \scriptsize{$\pm$0.005} 
                           & \large{0.246} \scriptsize{$\pm$0.01} 
                           & \large{0.224} \scriptsize{$\pm$0.003} 
                           & \large{0.25} \scriptsize{$\pm$0.002} 
                           & \textbf{\large{0.271} \scriptsize{$\pm$0.006}} 
                           & \large{0.26} \scriptsize{$\pm$0.002} 
                           & \large{0.256} \scriptsize{$\pm$0.003} \\

                           & \large{10K-SVM} $\uparrow$ 
                           & \large{0.391}\scriptsize{$\pm$0.005}
                           & \large{0.385} \scriptsize{$\pm$0.004} 
                           & \large{0.379} \scriptsize{$\pm$0.002} 
                           & \large{0.387} \scriptsize{$\pm$0.002}
                           & \large{0.365} \scriptsize{$\pm$0.002} 
                           & \large{0.396} \scriptsize{$\pm$0.003}
                           & \textbf{\large{0.407} \scriptsize{$\pm$0.007}} 
                           & \large{0.401} \scriptsize{$\pm$0.002}
                           & \large{0.396} \scriptsize{$\pm$0.002} \\
                           
                           & \large{5-NN} $\uparrow$ 
                           & \large{0.206} \scriptsize{$\pm$0.001} 
                           & \large{0.175} \scriptsize{$\pm$0.002} 
                           & \large{0.238} \scriptsize{$\pm$0.004} 
                           & \large{0.175} \scriptsize{$\pm$0.003} 
                           & \large{0.174} \scriptsize{$\pm$0.004} 
                           & \large{0.189} \scriptsize{$\pm$0} 
                           & \textbf{\large{0.239} \scriptsize{$\pm$0.005}} 
                           & \large{0.196} \scriptsize{$\pm$0.001} 
                           & \large{0.219} \scriptsize{$\pm$0.002}\\
                           
                           & \large{100-NN} $\uparrow$ 
                           & \large{0.308} \scriptsize{$\pm$0.007} 
                           & \large{0.192} \scriptsize{$\pm$0.010} 
                           & \large{0.305} \scriptsize{$\pm$0.001} 
                           & \large{0.186} \scriptsize{$\pm$0.005} 
                           & \large{0.219} \scriptsize{$\pm$0.016} 
                           & \large{0.216} \scriptsize{$\pm$0.008} 
                           & \textbf{\large{0.32} \scriptsize{$\pm$0.005}} 
                           & \large{0.259} \scriptsize{$\pm$0.003}
                           & \large{0.273} \scriptsize{$\pm$0.004} 
                           \\ \hline \hline

\end{tabular}
\end{adjustbox}
\caption{Quantitative performance on the images datasets. The LC flag refers to mixture component contributions being learnable while the IP flag refers to training the prior (i.e., prior--decoder cooperation scheme). Reported values are mean ± standard error over three runs. The $r_\text{entropy}$ row corresponds to the regularization used to obtain the optimal FID(GEN) for each training configuration, where the Entr. row refers to the normalized entropy of the responsibilities where the closer to one its value the more uniformly the aggregated posterior is supported by the prior components.}\label{tab:supp_quantitative_images} 
\label{tab:full_ablation}

\end{table}

\newpage 

\subsection{Illustrating the effect of regularizing the entropy of the responsibilities}
\label{supp:responsibilities_ablation}

Regularizing the entropy of the responsibilities, as described in \ref{sec:respnsibilities}, was essential for avoiding the formation of inactive modes in our prior--decoder cooperation scheme. Here we provide empirical evidence for that choice by analyzing the curves of normalized entropy of the responsibilities for different regularization intensities controlled by the $r_\text{entropy}$ hyperparameter and the corresponding FID(GEN) measuring the generation quality. Towards identifying the optimal value we experimented with $r_\text{entropy} \in [0,1,10,100]$, however, to enhance the readability of Fig.~\ref{fig:responsibilities_ablation}, we omitted the curves for $r_\text{entropy}=1$  as they displayed similar behavior to $r_\text{entropy}=0$. \\

\begin{figure}[!htbp]
\centering 
\includegraphics[clip,width=1\columnwidth]{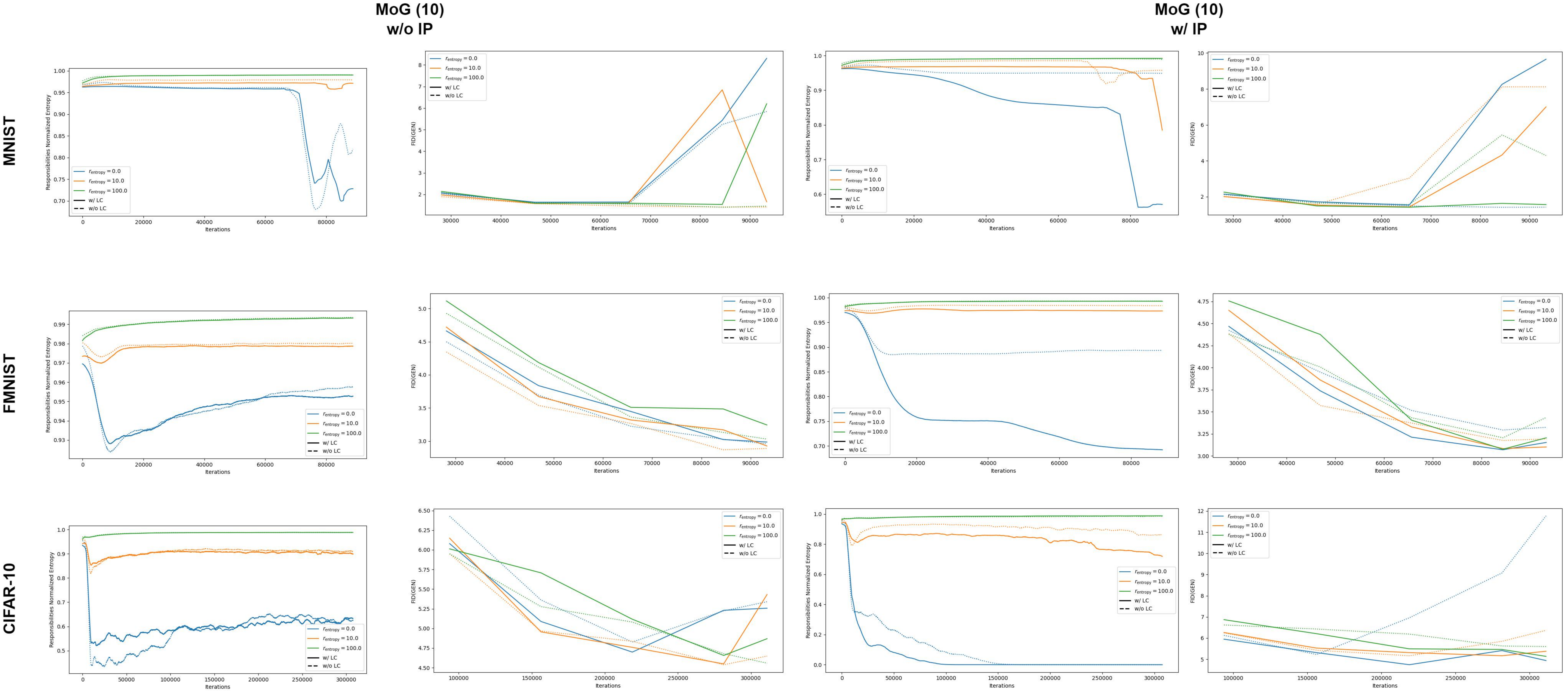}
\includegraphics[clip,width=1\columnwidth]{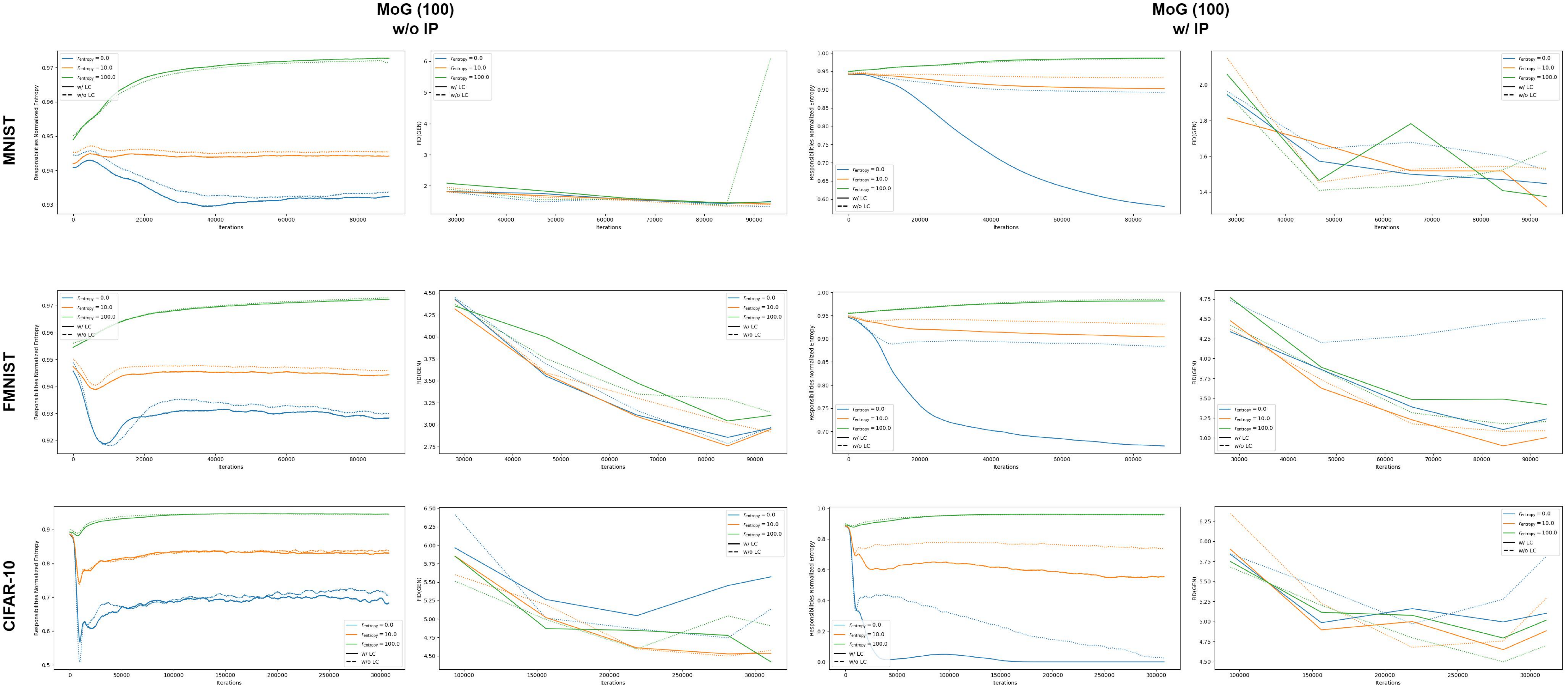}
\caption{The effect of regularizing the entropy of the responsibilities under the $10-$ and $100-$modal MoG priors.}
\label{fig:responsibilities_ablation}
\end{figure}

\noindent Inspecting Fig.~\ref{fig:responsibilities_ablation} reveals interesting insights into the effect of responsibilities' regularization. First, it can be seen that different optimal $r_\text{entropy}$ are to be expected depending on the prior learning configuration and the datasets as indicated by the FID(curves). 
Additionally, we observe that the issue of inactive prior mode formation is more pronounced under the IP formulation. The blue lines, representing unregularized responsibilities, tend to converge to a lower level compared to the fixed MoG prior setting. We attribute this behavior to the prior modes being updated to support the aggregated posterior, which adapts according to a discriminating objective. Interestingly, we also observe that when allowing for learnable contribution (i.e. LC) under the IP generally decreases the entropy of the responsibilities. This observation can be explained by the derivative of KL with respect to the energy contributions as given by \ref{eq:supp_KL_multimodal_derivative_e}. More specifically it was shown that the
contributions of inactive prior modes tend to decrease in favor of more dominant ones, further reducing the normalized entropy of the responsibilities. Finally, the FID(GEN) curves corresponding to CIFAR-10 dataset highlight the detrimental effect of generating samples from inactive modes. More specifically, when the responsibilities' entropy approaches zero (blue curves in CIFAR-10) the FID(GEN) tends to increase when not allowing for learnable contributions (dotted blue curves). In other words, generating samples from modes that do not support the aggregated posterior (i.e., inactive modes) leads to degraded generation quality.

\subsection{Latent Space Inspection}
\label{supp:latent_inspection}
Here, we provide visualizations of the latent space of S-IntroVAE under the different configurations considered for the image generation task. More specifically we are interested in understanding how allowing for a trainable prior affects the latent space learned in S-IntroVAEs. Overall, the quantitative results suggest that learning the prior during the adversarial training leads to significantly different latent space. In particular, we observe that the prior components are spread more evenly when allowing for trainable prior compared to when fixing it (see Fig. \ref{fig:latent_visualization}).\\

\begin{figure}[!htbp]
\centering
\includegraphics[width=1\textwidth]{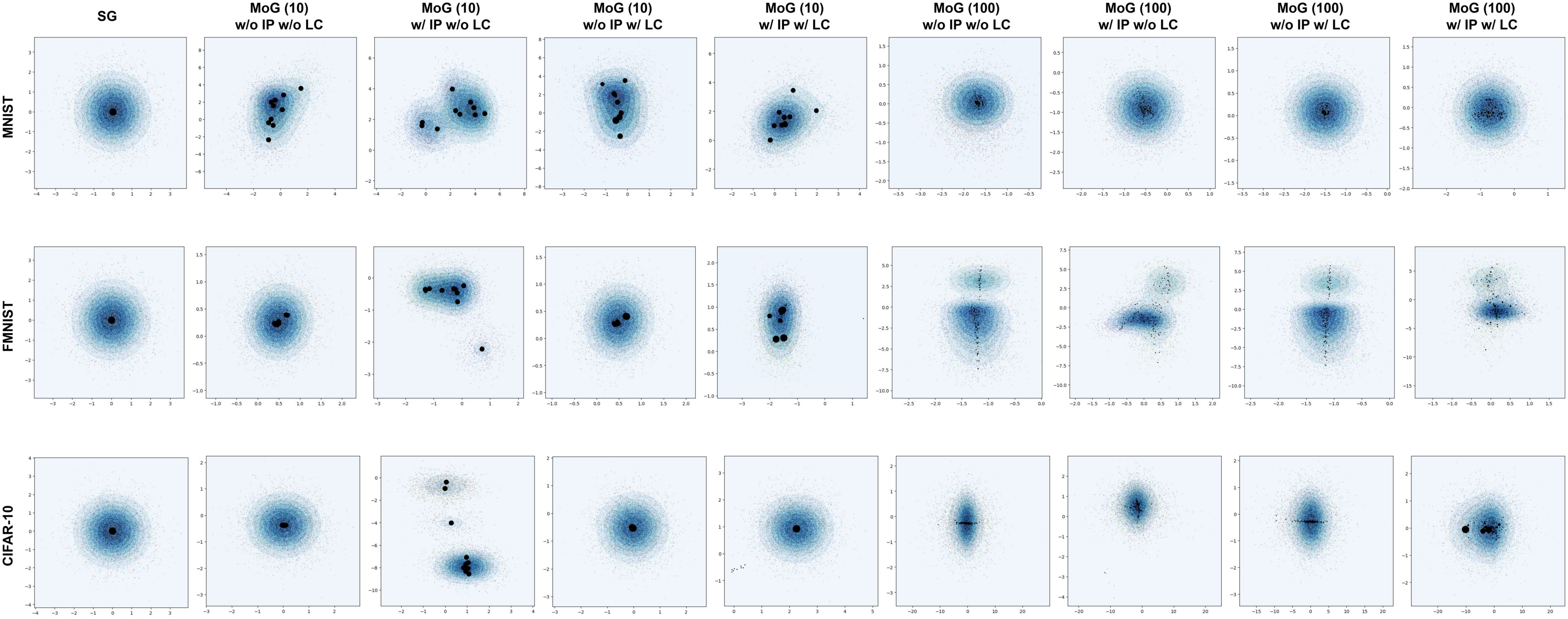}
\caption{Visualizing the first 2 latent dimensions of S-IntroVAE under different prior configurations along with samples from the aggregated posterior. Different colors correspond to different classes. Note that learning the prior during the adversarial learning leads to significantly different latent space. The black dots refer to the means of the prior components, when applicable (i.e. w/ LC) the size of these dots refers to the contribution of this component in the MoG (e.g. the smaller the size the lower the contribution).}
\label{fig:latent_visualization} 
\end{figure}

\noindent We also employed the t-SNE dimensionality reduction technique to visually inspect how prior learning affects the high-dimensional latent space. The quantitative results indicate that prior learning tends to create better-separated clusters. Although the separation effect is less pronounced when modeling the prior with many components (e.g. 100 vs 10 components), it remains noticeable (see Fig. \ref{fig:tsne_visualization}).

\newpage 

\begin{figure}[!h]
\centering
\includegraphics[width=1\textwidth]{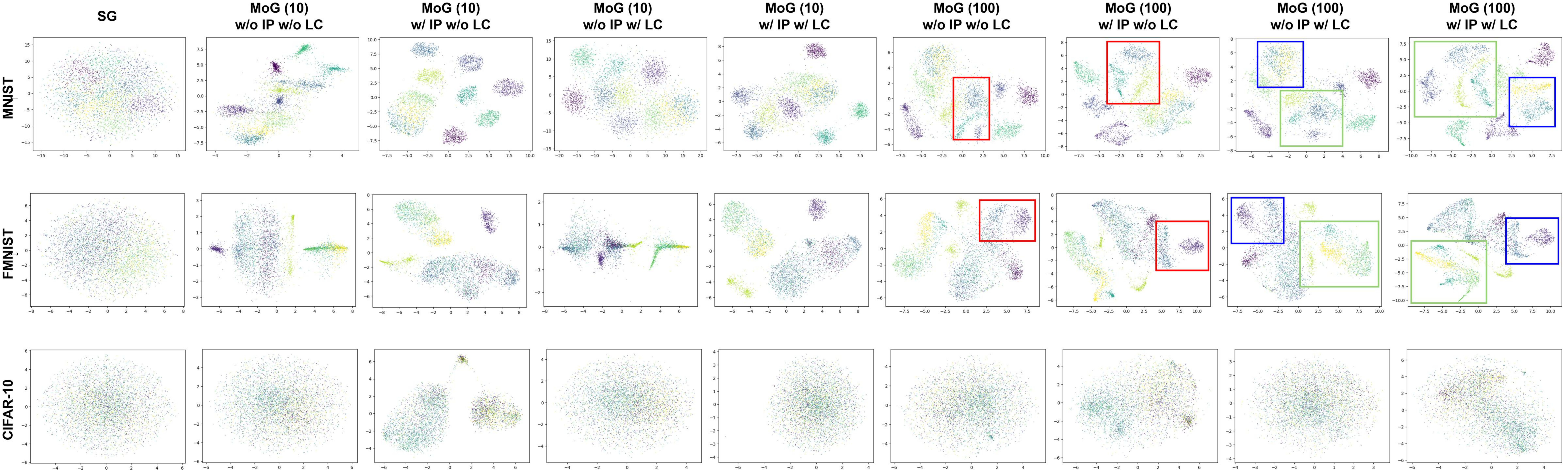}
\caption{Visualizing the high-dimensional latent space of the aggregated posterior using t-SNE dimensionality reduction technique. Note that learning the prior during the adversarial learning generally leads to better-separated clusters in the latent space. Different colors correspond to different classes.}
\label{fig:tsne_visualization} 
\end{figure}

\end{document}